\documentclass[runningheads]{llncs}

\usepackage{silence}
\usepackage{ellipsis, ragged2e}

\usepackage[english]{babel}
\usepackage[T1]{fontenc}
\usepackage[utf8]{inputenc}
\usepackage[final]{microtype}

\usepackage{amsmath,amssymb}
\usepackage{csquotes}
\usepackage{booktabs}
\usepackage[most]{tcolorbox}
\usepackage{multirow}
\usepackage{adjustbox}
\usepackage{caption}
\usepackage{subcaption}
\usepackage{hyperref}
\usepackage{rotating}

\usepackage{pgfplots}
\pgfplotsset{compat=1.18}
\usepackage{tikz}
\usetikzlibrary{shapes,positioning,arrows,arrows.meta,calc,automata,matrix,fit,backgrounds}
\tikzstyle{state}+=[minimum size = 6mm, inner sep=0,outer sep=1]
\usetikzlibrary{colorbrewer}
\usepgfplotslibrary{colorbrewer}

\colorlet{disabled}{lightgray}
\tikzstyle{state}=[draw,rectangle,inner sep=5pt,rounded corners=2pt]
\tikzstyle{action}=[font=\small,inner sep=0pt,outer sep=3pt]
\tikzstyle{actionnode}=[circle,draw=black,fill=black,minimum size=1mm,inner sep=0,outer sep=0]
\tikzstyle{actionedge}=[draw,-]
\tikzstyle{prob}=[font=\scriptsize,inner sep=0pt,outer sep=1pt]
\tikzstyle{probedge}=[draw,->]
\tikzstyle{directedge}=[draw,->]
\tikzset{chainarrow/.tip={Stealth[length=3pt]}}
\tikzset{>=chainarrow}

\usepackage{xparse}
\usepackage{mathtools}
\usepackage{environ}
\usepackage{marvosym}
\usepackage{bbm}
\usepackage{fontawesome5}
\usepackage{siunitx}

\usepackage{thmtools}
\usepackage{thm-restate}

\usepackage{algorithmicx,algorithm}
\usepackage[noend]{algpseudocode}

\usepackage{placeins} 

\usepackage[capitalize]{cleveref}
\Crefname{equation}{Eq.}{Eqs.}
\Crefname{figure}{Fig.}{Figs.}
\Crefname{tabular}{Tab.}{Tabs.}
\Crefname{remark}{Rem.}{Rems.}
\Crefname{section}{Sec.}{Secs.}
\Crefname{subsection}{Sec.}{Secs.}
\Crefname{proposition}{Prop.}{Props.}
\Crefname{example}{Ex.}{Exs.}

\usetikzlibrary{colorbrewer}
\usepackage{array}
\usepackage{todonotes}
\usepackage{mdframed}

\tcbset{every box/.style={
	left=1mm,right=1mm,top=1pt,bottom=1pt,boxrule=1pt,colbacktitle=white!0,colback=white!0,coltitle=black
}}

\colorlet{linecolor}{gray}
\newtcolorbox{linebox}[1]{
	empty,
	left skip=2mm,
	right skip=2mm,
	attach boxed title to top left,
	minipage boxed title,
	title=#1,
	boxed title style={empty,size=minimal,toprule=0pt,top=1mm,left=0mm,bottom=1mm,overlay={}},
	coltitle=black,fonttitle=\bfseries\scshape,
	before=\par\medskip\noindent,parbox=false,boxsep=0pt,left=1mm,right=0mm,top=2pt,breakable,pad at break=0mm,
	before upper=\csname @totalleftmargin\endcsname0pt,
}

\DeclarePairedDelimiter{\delimabs}{\lvert}{\rvert}
\DeclarePairedDelimiter{\delimcardinality}{\lvert}{\rvert}
\DeclarePairedDelimiter{\delimnorm}{\lVert}{\rVert}

\NewDocumentCommand{\abs}{sm}{\IfBooleanTF{#1}{\delimabs*{#2}}{\delimabs{#2}}}
\NewDocumentCommand{\cardinality}{sm}{\IfBooleanTF{#1}{\delimcardinality*{#2}}{\delimcardinality{#2}}}
\NewDocumentCommand{\norm}{sm}{\IfBooleanTF{#1}{\delimnorm*{#2}}{\delimnorm{#2}}}
\NewDocumentCommand{\powerset}{r()}{2^{#1}}

\newcommand{\indicator}[1]{\mathbbm{1}_{#1}}

\newcommand{\eqdef}{\vcentcolon=}

\newcommand{\unionSym}{\cup}
\newcommand{\unionBin}{\mathbin{\unionSym}}

\newcommand{\union}{\unionBin}

\newcommand{\Naturals}{\mathbb{N}}
\newcommand{\PosNaturals}{\mathbb{N}_+}

\newcommand{\Reals}{\mathbb{R}}

\NewDocumentCommand{\Measures}{d()}{\IfValueTF{#1}{\Pi(#1)}{\Pi}}
\NewDocumentCommand{\Distributions}{d()}{\IfValueTF{#1}{\mathcal{D}(#1)}{\mathcal{D}}}
\NewDocumentCommand{\integral}{d<> m m}{\IfValueTF{#1}{\int_{#1} #2\,d#3}{\int #2\,d#3}}
\NewDocumentCommand{\Expectation}{s d[]}{\IfValueTF{#2}{\mathbb{E}}{\mathbb{E}\IfBooleanTF{#1}{\left[#2\right]}{[#2]}}}
\NewDocumentCommand{\Probability}{s d[]}{\mathop{\mathrm{Pr}}\IfValueT{#2}{\IfBooleanTF{#1}{\left[#2\right]}{[#2]}}}
\NewDocumentCommand{\Variance}{s d[]}{\mathop{\mathrm{Var}}\IfValueT{#2}{\IfBooleanTF{#1}{\left[#2\right]}{[#2]}}}

\newcommand{\MDP}{\mathcal{M}}

\newcommand{\States}{S}
\newcommand{\initialstate}{{\hat{s}}}
\NewDocumentCommand{\mctransitions}{d()}{\IfValueTF{#1}{P(#1)}{P}}

\newcommand{\Actions}{A}
\NewDocumentCommand{\stateactions}{r()}{{\Actions}(#1)}
\NewDocumentCommand{\mdptransitions}{d()}{\IfNoValueTF{#1}{\mathsf{P}}{\mathsf{P}(#1)}}

\newcommand{\infinitepath}{\rho}

\NewDocumentCommand{\Infinitepaths}{d<>}{\IfValueTF{#1}{\mathsf{Paths}_{#1}}{\mathsf{Paths}}}
\NewDocumentCommand{\Finitepaths}{d<>}{\IfValueTF{#1}{\mathsf{FPaths}_{#1}}{\mathsf{FPaths}}}
\newcommand{\strategy}{\pi}
\NewDocumentCommand{\Strategies}{d<>}{\IfNoValueTF{#1}{\Pi}{\Pi_{#1}}}
\NewDocumentCommand{\StrategiesMD}{d<>}{\IfNoValueTF{#1}{\Pi}{\Pi_{#1}}^{\mathsf{MD}}}

\newcommand{\val}{\mathsf{V}}

\DeclareMathOperator{\SccsOp}{SCC}\NewDocumentCommand{\Sccs}{r()}{\SccsOp(#1)}
\DeclareMathOperator{\BsccsOp}{BSCC}\NewDocumentCommand{\Bsccs}{r()}{\BsccsOp(#1)}
\DeclareMathOperator{\EcsOp}{EC}\NewDocumentCommand{\Ecs}{d()}{\IfNoValueTF{#1}{\EcsOp}{\EcsOp(#1)}}
\DeclareMathOperator{\MecsOp}{MEC}\NewDocumentCommand{\Mecs}{d()}{\IfNoValueTF{#1}{\MecsOp}{\MecsOp(#1)}}

\NewDocumentCommand{\ProbabilityMC}{s r<> d[]}{\mathsf{Pr}_{#2}\IfNoValueF{#3}{\IfBooleanTF{#1}{\!\left[#3\right]\!}{[#3]}}}
\NewDocumentCommand{\ProbabilityMDP}{s r<> r<> d[]}{\mathsf{Pr}_{#2}^{#3}\IfNoValueF{#4}{\IfBooleanTF{#1}{\!\left[#4\right]\!}{[#4]}}}
\NewDocumentCommand{\ProbabilityMDPmax}{s r<> d[]}{\mathsf{Pr}_{#2}^{\max}\IfNoValueF{#3}{\IfBooleanTF{#1}{\!\left[#3\right]\!}{[#3]}}}
\NewDocumentCommand{\ProbabilityMDPsup}{s r<> d[]}{\mathsf{Pr}_{#2}^{\sup}\IfNoValueF{#3}{\IfBooleanTF{#1}{\!\left[#3\right]\!}{[#3]}}}

\NewDocumentCommand{\ExpectationMC}{s r<> r[]}{\mathbb{E}_{#2}\IfBooleanTF{#1}{\!\left[#3\right]\!}{[#3]}}

\newcommand{\Reachset}{G}
\NewDocumentCommand{\stepreach}{r<>}{\Diamond^{=#1}}
\NewDocumentCommand{\boundedreach}{r<>}{\Diamond^{{\leq}#1}}
\newcommand{\reach}{\Diamond}

\NewDocumentCommand{\steadystate}{d<> d()}{\IfValueTF{#1}{\pi^\infty_{#1}}{\pi^\infty}\IfValueT{#2}{(#2)}}

\usepackage{fontawesome5}

\newcommand{\stepsample}{\textsc{Sample}}
\newcommand{\stepinfer}{\textsc{Infer}}
\newcommand{\stepsolve}{\textsc{Solve}}

\newcommand{\confinter}{\mathcal{I}}

\newcommand{\confset}{\mathcal{C}}
\newcommand{\confseq}{\mathcal{S}}

\newcommand{\successes}{\xi}
\newcommand{\average}{\mu}
\newcommand{\variance}{\sigma}

\newcommand{\error}{\delta}

\newcommand{\KL}{\mathrm{KL}}

\newcommand{\expect}{\mathbb{E}}

\newcommand{\saprob}{P}
\newcommand{\sampletime}{t}
\newcommand{\successor}{r}
\newcommand{\Successors}{R}

\DeclareMathOperator*{\argmin}{arg\,min}
\title{Confidence Sequences for Online Statistical Model Checking of Markov Decision Processes\vspace{-.5em}}
\titlerunning{Confidence Sequences for Online MDP-SMC}
\author{
    Konstantin Kueffner\inst{1}
    \and
    Tobias Meggendorfer\inst{2}
	\and
	Maximilian Weininger\inst{3}
	\and
	Patrick Wienh\"oft\inst{4}
}

\authorrunning{K.~Kueffner et al.}

\institute{
	Institute of Science and Technology Austria, Klosterneuburg, Austria\\
    \and
    Lancaster University Leipzig, Leipzig, Germany\\
	\and
    Ruhr-University Bochum, Bochum, Germany\\
    \and
	TUD Dresden University of Technology, Dresden, Germany\\
}

\AtBeginDocument{
\setlength{\abovedisplayskip}{0.75\abovedisplayskip}
\setlength{\abovedisplayshortskip}{0.75\abovedisplayshortskip}
\setlength{\belowdisplayskip}{0.75\belowdisplayskip}
\setlength{\belowdisplayshortskip}{0.75\belowdisplayshortskip}
}

\advance\textheight2mm
\advance\voffset-1mm

\let\subparagraph\relax
\usepackage{titlesec}
\titlespacing*{\section}{0pt}{3ex plus 1ex minus 0ex}{2ex plus 1ex minus 0ex}
\titlespacing*{\subsection}{0pt}{1.5ex plus 0.5ex}{1ex plus 0.5ex}
\titlespacing{\subsubsection}{0pt}{1ex plus 0.5ex}{0.5ex plus 0.5ex}
\titlespacing{\paragraph}{0pt}{1ex plus 1ex}{1ex plus 0.5ex}

\crefname{section}{Sec.}{Secs.}%
\crefname{appendix}{App.}{Apps.}%
\crefname{lemma}{Lem.}{Lemms.}
\crefname{theorem}{Thm.}{Thms.}
\crefname{corollary}{Cor.}{Cors.}
\crefname{equation}{Eq.}{Eqs.}
\crefname{figure}{Fig.}{Figs.}
\crefname{table}{Tab.}{Tabs.}
\crefname{assumption}{Asm.}{Asms.}

\Crefname{section}{Sec.}{Secs.}%
\Crefname{appendix}{App.}{Apps.}%
\Crefname{lemma}{Lem.}{Lemms.}
\Crefname{theorem}{Thm.}{Thms.}
\Crefname{corollary}{Cor.}{Cors.}
\Crefname{equation}{Eq.}{Eqs.}
\Crefname{figure}{Fig.}{Figs.}
\Crefname{table}{Tab.}{Tabs.}
\Crefname{assumption}{Asm.}{Asms.}

\pgfplotsset{
    EvalList/.append style={
        cycle list/Paired,
        cycle multiindex* list={
            Paired\nextlist
            dashed,dashed,dashed,solid,solid,solid,solid,solid,solid,solid,solid,solid,solid,dotted\nextlist
        }, 
    }
}

\begin{document}

\maketitle\vspace{-1em}
\begin{abstract}
Markov decision processes (MDPs) are a classic model of decision making under uncertainty, exhibiting both non-deterministic choice as well as probabilistic uncertainty.
Traditionally, exact knowledge of the underlying probabilities is assumed.
However, this often is unrealistic, e.g.\ when modelling cyber-physical systems or biological processes.
Here, statistical methods provide a way towards obtaining meaningful guarantees.
The classical approach is to gather samples in the MDP, use these to draw statistical conclusions about the transition probabilities, and from there obtain bounds on the true value; then, if these bounds are too broad, repeat.
However, existing implementations of this approach are either subtly incorrect or sub-optimal, and quite often both.
We present several \emph{confidence sequences}, which are specifically designed for such \enquote{online} settings, implement all of them in an efficient tool, and show their practical applicability.
In particular, we show that they outperform classical \enquote{union-bound} style approaches, and overall our implementation requires 50x less samples on average than previous state of the art.
\keywords{Probabilistic verification \and Statistical model checking \and Markov decision processes \and Confidence intervals}
\end{abstract}

\section{Introduction}

\emph{Markov decision processes} (MDPs) \cite{Puterman} are \emph{the} classic modelling formalism for dynamic systems with probabilistic and nondeterministic behaviour.
Taking one of the available actions in each state moves the system to a successor state, which is chosen from a probability distribution associated with the state-action pair.
This is \emph{aleatory} uncertainty, i.e.\ the inherent randomness of the process, e.g.\ a coin toss.
However, there may also be \emph{epistemic} uncertainty (lack of knowledge), e.g.\ not knowing the bias of the coin, see \cite{DBLP:journals/sttt/BadingsSSJ23}.
While verification procedures traditionally assume full knowledge of the MDP (also called \enquote{white box}, with no epistemic uncertainty), over the past decade there has been an increasing interest in reliable analysis of MDPs where the exact transition probabilities are unknown (grey box) or even the presence of the transitions (positivity of the transition probability) is unknown (black box).
For example, consider the task of finding the optimal probability of getting heads from a coin toss when choosing among multiple coins.
Given knowledge of the coins' biases (white box), we clearly choose the coin with the highest biases towards heads.
This does not guarantee winning (there still is aleatoric uncertainty), but we know which coin to pick and the resulting value (the chances we get). 
With unknown biases (grey box), we cannot with certainty determine the value nor choose the best coin.

\emph{Statistical model checking} (SMC) \cite{DBLP:conf/cav/YounesS02} tackles this problem by
gathering simulations of the system (e.g.\ tossing each coin several times) and drawing statistical conclusions from the obtained data.
One of the strongest guarantees possible are \emph{probably approximately correct} (PAC) results, where the reported value is $\varepsilon$-close to the true result with confidence at least $\gamma$.
When considering discounted or finite-horizon objectives, the distance between estimation and true value can be upper-bounded for paths of a certain length.
Thus, PAC-guarantees and even bounds on sample-complexity can be derived in a straightforward manner, see e.g.~\cite{DBLP:conf/colt/AgarwalKY20,DBLP:conf/icml/JinKSY20,DBLP:conf/aaai/HasanzadeZonuzy21,DBLP:conf/icml/Shi0W0C22,DBLP:conf/colt/WagenmakerSJ22,DBLP:journals/tac/WenT22}.
In contrast, for infinite-horizon objectives, more involved methods are required.
There, the prevalent approach to obtain PAC guarantees (in the grey box setting)~\cite{AKW19,WeiningerGMK21,agarwal2022pac,DBLP:conf/nips/SuilenS0022,BaiDubWie23,DBLP:journals/jair/BadingsRAPPSJ23,DBLP:journals/corr/abs-2310-12248} is to repeat the following three steps: 
(i)~\stepsample{} the unknown MDP, counting occurrences of every transition, (ii)~\stepinfer{} an \emph{interval MDP} (IMDP) \cite{givan2000bounded} by obtaining confidence intervals of the transition probabilities,
and finally (iii)~\stepsolve{} the IMDP, obtaining lower and upper bounds $a, b$,
that, with probability $\gamma$, bound the true value of the unknown MDP, i.e.\ $v \in [a, b]$.
If additionally $|b - a| < \varepsilon$, the algorithm terminates; otherwise, it repeats the steps, i.e.\ gathers more data, increasing the precision of the confidence intervals in the IMDP, until $a$ and $b$ are $\varepsilon$-close.

However, existing variants rely on \emph{flawed correctness arguments} or are \emph{too conservative}, and quite often both.
Firstly, naively repeating the three steps compromises the PAC-guarantee, as the entire confidence $\gamma$ is \enquote{reused} across multiple \stepinfer{}s.
Intuitively, this is akin to repeating a statistical experiment until the result \enquote{looks right}.
(In \Cref{app:sota_bad} we provide a general counterexample based on the law of iterated logarithm~\cite{khintchine1924satz,grimmett2020probability}.)
Secondly, the used statistical methods are overly conservative:
On the one hand, the statistical methods used in \stepinfer{} are suboptimal, providing wide confidence intervals, in turn requiring more samples, see~\cite{WatO}.
However, even the \enquote{optimal} methods of \cite{WatO} are sub-optimal when we can repeatedly interact with the system.
Intuitively, estimates for the same distribution in different \stepinfer{} steps do not exploit the fact that the samples they are working with actually stem from the same distribution.
So, on the other hand, strictly dividing the process into these three phases is fundamentally problematic.

\subsubsection*{Contributions.}
We improve SMC with PAC guarantees for MDPs by employing \emph{confidence sequences}~\cite{darling1967confidence}.
For a stream of samples, these yield a stream of estimates with a statistical guarantee for the \emph{entire} (arbitrarily long) stream.
Using such sequentially correct estimates, we address the flawed correctness proof of previous works.
Moreover, confidence sequences exploit the fact that estimates at different times are \emph{not independent}, also replacing previous overly conservative estimates and significantly reducing the required number of samples.

Technically, we analyse \emph{gridded} variants of classical confidence intervals such as Hoeffding~\cite{Hoe63}, (empirical) Bernstein~\cite{MauPon09}, and Clopper-Pearson~\cite{CloPea34}.
Moreover, we adapt several confidence sequences, based on stitching~\cite{howard2021time}, betting~\cite{waudby2024estimating}, and test-martingales~\cite{waudby2024estimating}, to our setting.
Finally, we introduce a novel value-based hypothesis test for SMC, which entirely circumvents estimating probabilities.
For all these methods, we discuss correctness, convergence, and computability.
Practically, we provide an efficient, extensible Java tool embedding the confidence sequences not only in the state-of-the-art \stepsample{}-\stepinfer{}-\stepsolve{} pipeline, but also a new interwoven approach.
Our new variant exploits the continuous guarantees of confidence sequences to avoid constructing an IMDP for the whole model and instead locally applies the robust value iteration of~\cite{DBLP:conf/aaai/MeggendorferWW25}.
This enables soundly incorporating guidance heuristics for sampling~\cite{DBLP:journals/theoretics/BrazdilCCFKKM0U25} and utilizing the novel value-based hypothesis test.

Our evaluation shows a significant reduction in the number of samples required, on average 50x less than state-of-the-art, and that all of our contributions (confidence sequences, in particular the value test, and our interwoven approach) are instrumental to this improvement.
Additionally, our evaluation provides, to our knowledge for the first time, an empirical comparison of all presented statistical methods, both on single distributions and on SMC of MDPs.
This additionally demonstrates the benefits of confidence sequences and their individual strengths and weaknesses, also beyond our focus of SMC for MDPs.

\subsection*{Related Work}
\Cref{subsec:existing_methods} discusses other works on SMC of MDPs. \Cref{app:assumptions} details common SMC assumptions and how they reduce to the grey box setting we consider.

\paragraph{Other Formalisms for System and Specification.}
Continuous time systems can be tackled by estimating transitions rates from samples~\cite{agarwal2022pac,DBLP:conf/qest/BacciILR23}. 
Continuous space systems can be approached by abstraction to finite-state MDPs~\cite{DBLP:journals/jair/BadingsRAPPSJ23}; the incurred abstraction error can be transparently incorporated in the underlying robust value iteration~\cite{DBLP:conf/aaai/MeggendorferWW25} by expanding the uncertainty set.
For objectives, we focus on reachability specifications for simplicity but again can directly extend others as long as robust value iteration supports them:
Technically, our work provides statistically sound bounds on the Bellman backup used in robust value iteration.

\paragraph{SMC for Markov Chains.}
In Markov chains, the motivation for SMC is fundamentally different: The system usually is assumed to be known, but too large to handle, see e.g.\ \cite{DBLP:conf/cav/YounesS02,WitS}.
Since there is no non-determinism in Markov chains, one can, intuitively, just repeatedly sample entire paths from the system to gain a statistical estimate on the probability of reaching the goal.
In many cases, this is significantly faster than explicit computation, yielding a precision-performance trade-off (faster results but only with PAC guarantees), in particular since the exact model description is known, and successor states can be sampled very efficiently.
In contrast, for MDPs the main motivation is an inherent lack of knowledge of transition dynamics, and the underlying systems are usually very costly to simulate, shifting the focus from speed to reducing sample count.

\paragraph{Other Forms of Guarantees.}
When obtaining samples according to a behavioural policy, one can guarantee an improvement over this policy~\cite{DBLP:conf/ijcai/WienhoftSSDB023}.
In our setting, we explicitly explore unsafe states, while in a \enquote{safe online} setting, safety has to be guaranteed already while exploring the system.
In this direction, we refer to works on shielding~\cite{DBLP:conf/aaai/AlshiekhBEKNT18}, PAC online learning~\cite{concur18,uai20} and regret minimization~\cite{DBLP:conf/colt/WagenmakerSJ22}.

\section{Preliminaries}

In this section, we summarize all relevant theory necessary to explain our contributions.
Since our work touches upon many different topics, we keep this section brief and refer the reader to cited literature for further information.
We assume familiarity with the basics of probability theory 
(see e.g.\ \cite{billingsley2017probability}) and statistics (see e.g.\ \cite{wainwright2019high}).
A \emph{probability distribution} over a countable set $X$ is a mapping $d : X \to [0,1]$, such that $\sum_{x \in X} d(x) = 1$.
The set of all probability distributions over $X$ is $\Distributions(X)$.
For a set $\States$, we denote by $\States^*$ ($\States^\omega$) the set of all (in)finite sequences over $\States$.

\subsection{Markov Decision Processes}

A \emph{Markov decision process (MDP)}, e.g.\ \cite{Puterman}, is a tuple $\MDP = (\States, \Actions, \mdptransitions)$, where
$\States$ is a finite set of states;
$\Actions$ is a finite set of actions, overloaded to yield for each state $s \in \States$ a non-empty set of \emph{available actions} $\stateactions(s) \subseteq \Actions$; 
and $\mdptransitions : \States \times \Actions \rightharpoonup \Distributions(\States)$ is the (partial) transition function, that yields for each state $s \in \States$ and $a \in \stateactions(s)$ the associated distribution over successor states $\mdptransitions(s, a)$.
For simplicity, we write $\mdptransitions(s, a, \successor)$ instead of $\mdptransitions(s, a)(\successor)$.

The \emph{semantics} of MDPs are defined as usual by means of paths, strategies, and the probability measure in the induced Markov chain, see ~\cite[Chp.~10]{DBLP:books/daglib/0020348} for an extensive introduction.
An \emph{infinite path} is a sequence of state-action pairs $\infinitepath = s_1 a_1 s_2 a_2 \cdots \in (\States \times \Actions)^\omega$ with $\mdptransitions(s_i, a_i, s_{i+1}) > 0$.
We denote by $\infinitepath(i)$ 
the $i$-th state $s_i$ in a path $\infinitepath$ and by $\Infinitepaths<\MDP>$ the set of all infinite paths.
A (memoryless deterministic, MD) \emph{strategy} is a mapping $\strategy : \States \to \Actions$, choosing one enabled action in each state, i.e.\ $\strategy(s) \in \Actions(s)$.
(For our purposes, MD strategies are sufficient.)
We write $\StrategiesMD<\MDP>$ to refer to all MD strategies.
Complementing an MDP $\MDP$ with such a strategy yields a Markov chain, which together with an initial state $\initialstate \in \States$ induces a unique probability measure $\ProbabilityMDP<\MDP, \initialstate><\strategy>$ over infinite paths \cite[Chp.~10.1]{DBLP:books/daglib/0020348}.

An \emph{objective} formalises the goal of the MDP.
For simplicity, we focus on \emph{reachability}, but our methods easily generalize to other relevant objectives such as total reward, too.
A reachability objective concerns the probability to reach a given set of \emph{goal states} $\Reachset \subseteq \States$.
Formally, denote the set of paths that eventually reach $\Reachset$ as $\reach \Reachset \coloneqq \{\infinitepath \in \Infinitepaths<\MDP> \mid \exists i.~\infinitepath(i) \in \Reachset\}$.
The \emph{value} of a state is the maximum probability to achieve the objective, i.e.\ reach the goal states, under any strategy, formally $\val_{\MDP, \reach \Reachset}(s) \eqdef \max_{\strategy \in \StrategiesMD<\MDP>} \ProbabilityMDP<\MDP, s><\strategy>[\reach \Reachset]$.
We omit the subscripts of $\val$ where clear from the context.
MD strategies are sufficient for optimal reachability~\cite[Lem.\ 10.102]{DBLP:books/daglib/0020348}.
We are interested in approximating the value in a given initial state $\initialstate$, i.e.\ compute $v$ such that $|\val_{\MDP}(\initialstate) - v| \leq \varepsilon$ for a given $\varepsilon > 0$.

A classical approach to solve this problem is \emph{value iteration} (VI), the method of choice in most modern tools due to empirical scalability \cite{DBLP:conf/tacas/HartmannsJQW23}.
Starting from an initial value vector $L_0(s) = 1$ if $s \in \Reachset$ and $0$ otherwise, VI iterates the Bellman backup $\mathcal{B}(s, a, V) = \sum_{\successor \in \States} \mdptransitions(s, a, \successor) \cdot V(\successor)$, setting $L_{i+1}(s) = \max_{a \in \stateactions(s)} \mathcal{B}(s, a, L_i)$. 
Intuitively, $L_i(s)$ describes the maximal probability to reach $\Reachset$ from $s$ in $i$ steps.
$L_i$ converges monotonically to the correct value, i.e.\ $L_i(s) \leq L_{i+1}(s) \leq \val(s)$ for all $i$ and $\lim_{i \to \infty} L_i(s) = \val(s)$ \cite{CH08}.
Through a similar, but technically more involved process, we also obtain converging upper bounds $U_i$ (see e.g.\ \cite{HM18,DBLP:conf/lics/KretinskyMW23}), allowing to stop once $U_i(\initialstate) - L_i(\initialstate) \leq \varepsilon$.

\subsection{Statistical Model Checking}
\label{subsec:statistical_model_cheking}
In the classical setting, the entire MDP is provided as input.
In this work, we deal with MDPs where the exact transition probabilities are unknown. 
Instead, we have \emph{sampling access} to the system:
Starting in the initial state, we can choose an available action $a \in \stateactions(s)$, take a step according to the probability distribution $\mdptransitions(s, a)$ associated with $a$, observe the successor state $\successor$ we end up in, and repeat the process.
Moreover, we can reset the system into its initial state, i.e.\ we can sample (finite) paths in the MDP.
Repeatedly interacting with the system in this way allows us to draw \emph{statistical} conclusions.
For example, consider an MDP modelling a simple coin toss with unknown bias.
By tossing several times, we can get a \emph{statistical} estimate on its bias:
Tossing a coin \num{1000} times and seeing \num{400} heads, we can conclude that the actual bias of the coin likely lies in the \emph{uncertainty set} $[0.3, 0.5]$.
(We discuss the details on obtaining such estimates later.)
However, the true bias might actually be different and we just observed an unlikely outcome, hence we can never be certain.
Thus, we can only aim for a \emph{probably approximately correct} (PAC) result, i.e.\ given a \emph{confidence requirement} $\gamma$ and \emph{precision requirement} $\varepsilon$, we want a value $v$ such that $|\val(\initialstate) - v| \leq \varepsilon$ with confidence $\gamma$.
For readability, we write $\error = 1 - \gamma$ for the allowed error probability.

By applying statistical methods to each action of the MDP separately (e.g.\ using the union bound), we can obtain estimates of their behaviour (e.g.\ bounds on the transition probabilities) \cite{WatO}, and in turn compute bounds on the value using \emph{robust} value iteration \cite{givan2000bounded,DBLP:conf/aaai/MeggendorferWW25}.
The key ingredient this method requires is obtaining lower bounds on the Bellman backup, i.e.\ an operator $\mathfrak{L}$ which for any value function $V : \States \to \Reals$ and state-action pair $s \in \States$, $a \in \stateactions(s)$ yields a value $\mathfrak{L}(s, a, V) \leq \mathfrak{B}(s, a, V) = \sum_{\successor \in \States} \mdptransitions(s, a, \successor) \cdot V(\successor)$.
Intuitively, when using $\mathfrak{L}$ instead of $\mathfrak{B}$, the $L_i$ we obtain still are lower bounds on the true value.
If the values returned by $\mathfrak{L}$ additionally \enquote{tighten} over the execution of an algorithm, we converge to the true value.
As further details of this approach are beyond our scope (e.g.\ how to obtain matching upper bounds), we refer to \cite{DBLP:conf/lics/KretinskyMW23,DBLP:conf/aaai/MeggendorferWW25}.

Finally, observe that we assume an \emph{online} setting, meaning that we can gather further samples on demand and actively drive the interaction with the system by choosing which actions to take during sampling.
In contrast, e.g.\ \cite{WatO} assumes to be given a fixed set of samples, with no option of gathering additional ones.
As we see later, this significantly changes which solution approaches are adequate.
For simplicity, we assume to know the support of $\mdptransitions$, i.e.\ whether $\mdptransitions(s, a, s') > 0$, often called \enquote{grey box} \cite{AKW19,BazilleGJS20,agarwal2022pac}.

We distinguish two independent modelling choices. First, \emph{sampling access} specifies how observations are obtained. In the strongest setting, one may sample any state-action pair directly. In the simulation setting, one may only start in the initial state and then sample finite paths by choosing actions online. In the batch setting, one is given a fixed data set of past transitions and cannot actively collect further samples. We work in the simulation setting: samples are obtained along finite paths, but the algorithm may reset to the initial state and may choose actions adaptively.
Second, \emph{knowledge of the transition structure} specifies what is known about the unknown transition function. In the white-box setting, the full transition probabilities are known. In the grey-box setting, the support of every transition distribution is known, but the probabilities are unknown. In the black-box setting, the support is unknown, but one may assume a positive lower bound $p_{\min}$ on all non-zero transition probabilities. Without either support information or such a lower bound, finite samples cannot rule out an unseen transition of arbitrarily small probability; hence PAC guarantees for infinite-horizon reachability are impossible in general.

For the rest of the paper, we focus on the online grey-box setting. Black-box models can be handled by first using part of the confidence budget to learn the support, thereby reducing to the grey-box case; details and minor variants are deferred to \cref{app:assumptions}.
We can formulate our problem as follows.
\begin{linebox}{\centering{}Problem: Online Grey Box MDP-SMC}
	\textbf{Input:} An MDP $\MDP$ with unknown transition probabilities $\mdptransitions$ (but known support) and sampling access, a set of goal states $\Reachset$, a precision requirement $\varepsilon > 0$, and a confidence requirement $\gamma > 0$.

	\noindent
	\textbf{Output:} A value $v$ such that $|v - \val(\initialstate)| \leq \varepsilon$ with confidence at least $\gamma$.
\end{linebox}
\noindent As motivated before, we additionally are interested in keeping the number of required interactions with the system as low as possible.
We deliberately leave \enquote{as low as possible} vague:
It is surprisingly challenging to formally define this notion already for a single distribution.
As observed in e.g.\ \cite[Sec.\ 3.3]{WatO}, how many samples different methods require to achieve a given precision heavily depends on the shape of the distribution (e.g.\ the true probability), and focusing only on worst cases is not necessarily meaningful in practice.

\subsection{Existing Methods and the SOTA MDP-SMC Framework} \label{subsec:existing_methods}
We survey existing methods and categorize them using an informal, unifying framework.
Most approaches work in the grey box setting, i.e.\ for each state-action pair we do not know the precise probabilities but the set of successors.
Relatedly, most approaches are \emph{model-based} (an informal notion, intuitively meaning that the underlying graph is constructed, i.e.\ the space requirement is $\mathcal{O}(|\mdptransitions|)$).
This includes \cite{AKW19,WeiningerGMK21,agarwal2022pac,DBLP:conf/nips/SuilenS0022,BaiDubWie23,DBLP:journals/jair/BadingsRAPPSJ23,DBLP:journals/corr/abs-2310-12248}, which all exhibit a common structure:
They \stepsample{} the MDP some finite number of times, from these samples \stepinfer{} confidence intervals on the transition probabilities, and finally \stepsolve{} the induced IMDP, yielding bounds on the true value.
If the bounds are not tight enough, too little data has been gathered, and they go back to the first step.

However, as mentioned, these methods all suffer from severe drawbacks.
Firstly, most works \enquote{reuse} the entire confidence budget in every \stepinfer{} step, which in general is statistically incorrect, as demonstrated in~\Cref{ex:main:conf_example} and further discussed in \cref{app:sota_bad}. This issue is present in most works on SMC~\cite{WeiningerGMK21,agarwal2022pac,DBLP:conf/nips/SuilenS0022,BaiDubWie23,DBLP:journals/jair/BadingsRAPPSJ23,DBLP:journals/corr/abs-2310-12248}, invalidating their proofs of correctness.  (see \cref{app:sota_bad} for details). 
Only in~\cite{AKW19}, this is (naively) addressed by exponentially distributing confidence over all \stepinfer{} steps, 
ensuring soundness of all \stepinfer{}s by union bound.

\begin{example}
\label{ex:main:conf_example}
Consider an MDP with states $s_1,s_2,s_+,s_-$, initial state $s_1$, target state $s_+$, and sink state $s_-$. There is one action $a$ in $s_1$, with
\begin{align*}
     P(s_1,a,s_2)=p, \qquad P(s_1,a,s_-)=1-p,
\end{align*}
and one action $b$ in $s_2$, with $P(s_2,b,s_+)=1$. Thus the reachability value from $s_1$ is exactly $p$.
Suppose an SMC procedure repeatedly constructs fixed-time confidence intervals and stops once the resulting value interval has width below $\varepsilon$. Even if each individual interval is valid at its pre-specified sample size, the stopping time is data-dependent: the procedure continues precisely until the interval looks sufficiently favourable. Thus the final interval is not covered by the fixed-time guarantee. In particular, for rate-optimal fixed-time intervals containing the empirical mean, the law of the iterated logarithm implies that the empirical estimate will almost surely cross any fixed $\sqrt{1/t}$-width boundary infinitely often. Hence there are data-dependent stopping times at which the returned interval excludes the true value with probability one. This is the core reason why fixed-time confidence intervals cannot simply be reused across repeated \textsc{Infer} calls.
\end{example}
Secondly, existing methods use sub-optimal or incorrect statistical methods (usually the general and overly conservative Hoeffding / Okomato bound).
While \cite{WitS,WatO} tackle this for Markov chain SMC and the \enquote{offline} MDP-SMC setting, none discuss how to adapt to an online setting, where we need to repeatedly draw statistical conclusions.

The case where the topology is unknown, i.e.\ \emph{black box}, intuitively corresponds to not knowing how many sides the coin we are tossing has.
The crux of this setting is to reason about the infinite time behaviour (e.g.\ \enquote{can the goal still be reached?}) based on finite samples.
Notably, without any assumptions, it is impossible to give any guarantees on infinite-horizon behaviour.
Suppose we want the probability of eventually getting heads from repeatedly tossing a coin.
This is 1 if the chance for heads is non-zero, and 0 otherwise.
However, from seeing finitely many tails we cannot conclude that the chance of heads is 0.

Two approaches tackle this case directly.
DQL \cite{DBLP:journals/theoretics/BrazdilCCFKKM0U25} only relies on a lower bound on transition probabilities and upper bound on state and action count.
This however comes at the cost of requiring an astronomical amount of samples already on toy examples \cite[Tbl.~1]{AKW19}.
In contrast, \cite{DBLP:conf/rss/FuT14} requires (among others) knowledge of the \emph{mixing time} of the MDP, which allows to extrapolate from (large) finite samples to their infinite time behaviour.
An a-priori bound for the mixing time is exponential in the size of the MDP and leads to similarly large sample requirements.
As such, the standard approach is to separately learn the topology (using mild assumptions, see e.g.\ \cite[Rem.\ 2]{WatO}) to \enquote{recover} the grey box case (see \cref{app:assumptions} for details).
As this \enquote{reduction} is based on estimating probabilities, it directly profits from our improvements, too.

\section{Improving Online MDP-SMC}

\noindent
In this section, we introduce several approaches to statistical estimations in the online MDP-SMC setting.
Recall that we aim to obtain a safe and converging under-approximation $\mathfrak{L}$ of the Bellman backup in every state-action pair.
We introduce several statistical methods that allow us to obtain such approximations which (i)~are correct with high confidence and (ii)~get more precise the more samples we gather.
Together, this then gives us converging bounds on the true value which are correct with high confidence.

As mentioned, we distribute our confidence over all state-action pairs using the union bound, allowing us to focus on the problem localized to a single state-action pair $s\in \States$ and $a \in\stateactions(s)$.
We model the observations at this state-action pair as an i.i.d.\ process of successor-state samples
$Z=(Z_{\sampletime})_{\sampletime \in \PosNaturals}$ with $Z_{\sampletime} \sim \mdptransitions(s, a)$ for all $\sampletime \in \PosNaturals$,
where each realisation $z_{\sampletime}$ lies in $\States$.
In other words, we focus on samples obtained from a finite-support distribution and fix as underlying probability space infinite sequences drawn i.i.d.\ from this distribution.
Due to space constraints, formal definitions of known notions are provided in \cref{app:defs} and proofs in \cref{app:proof}.
The algorithm for MDP-SMC is outlined in \cref{subsec:cs_mdp_smc}.

\paragraph{Notation.}
Throughout the section, we write $\saprob = \mdptransitions(s, a)$ for the underlying distribution, $\successor \in \States$ for a \emph{successor} state, $\Successors =  \{r \in \States \mid \saprob(r) > 0\}$ for all successors, and $\sampletime \in \PosNaturals$ for a point in time.
Given a sequence $z=(z_{\sampletime})_{\sampletime \in \PosNaturals}$ we write $z_{1:\sampletime}=(z_1, \dots, z_\sampletime)$ for every $\sampletime \in \PosNaturals$.
For every $\successor \in \States$ we denote the transition count and empirical transition probability as $\successes_{\successor}(z_{1:\sampletime}) \coloneqq {\sum}_{i=1}^\sampletime \indicator{z_i = \successor}$ and $\average_{\successor}(z_{1:\sampletime}) \coloneqq \tfrac{1}{\sampletime} \successes_{\successor}(z_{1:\sampletime})$.

\paragraph{Value estimation.}
In the most general form, we aim to find a functor
$\mathcal{V}$
that, given an error threshold and a sequence of observations, maps value functions $V : \States \to \Reals$ over successors to a subset of the real numbers (typically an interval), containing the Bellman backup of $V$.
Formally, we want for every $\error\in(0,1)$ that
\begin{align}
    \label{eq:value_cs}
    \Probability\left[\forall \sampletime \in \PosNaturals. \forall V : \States \to \Reals.\ {\sum}_{\successor \in \States} \saprob(\successor) \cdot V(\successor) \in \mathcal{V}(\error; Z_{1:\sampletime})(V) \right] \geq 1 - \error.
\end{align}
Intuitively, this guarantees that with high confidence all Bellman backups performed using $\mathcal{V}$ contain the true value, allowing to use $\mathfrak{L}(s, a, V) = \inf \mathcal{V}(\error; Z_{1:\sampletime})(V)$.
Observe that the $\forall \sampletime$ is inside the $\Probability$ operator.
This is essential in our online setting, as it permits \emph{adaptively} gathering samples until sufficient precision is achieved.
(We show in \cref{app:proof:confseq} that this requirement is equivalent to requiring correctness at termination.)
Such a value estimation can be obtained by estimating the distribution $\saprob$, as follows.

\paragraph{Distribution estimation.}
Formally, we want a function
$\confset\colon (0,1)\times \States^* \to 2^{\Distributions(\States)}$
that maps a sequence of observations to a \emph{confidence set} of distributions, i.e.\ satisfying
$
    \Probability\left[\forall \sampletime \in \PosNaturals.\ \saprob \in \confset(\error; Z_{1:\sampletime})\right]\ \geq\ 1-\error
$.
From this, we obtain a value estimate by $\mathcal{V}(\error; Z_{1:\sampletime})(V) = \{ \sum_{\successor \in \States} Q(\successor) \cdot V(\successor) \mid Q \in \confset(\error; Z_{1:\sampletime}) \}$.

\paragraph{Probability estimation.}
The common approach to obtain such a set is to estimate each transition probability individually and combine the results.
Formally, for every $\successor$ we seek a function
$\confset_{\successor} \colon (0,1) \times \States^* \to 2^{[0,1]}$
that maps observations to an uncertainty set over the single transition probability, satisfying
\begin{equation}\label{eq:conf_set_trans}
    \Probability\left[\forall \sampletime \in \PosNaturals.\ \saprob(\successor) \in \confset_{\successor}(\error; Z_{1:\sampletime})\right]\ \geq\ 1-\error.
\end{equation}
This gives a distribution estimate by
$\confset(\error; Z_{1:\sampletime}) = \bigtimes_{\successor \in \Successors} \confset_{\successor}(\error/|\Successors|; Z_{1:\sampletime})$ through the union bound. 
Thus, we first focus on estimating a single transition probability.

\subsection{Grid Constructions}
\label{subsec:union_bound}
To estimate individual transition probabilities, one might be tempted to use standard statistical estimates (e.g.\ the Hoeffding inequality) to repeatedly infer bounds.
However, as we outline in \cref{lemma:lil} and \cref{app:sota_bad}, such naive use of fixed-time confidence intervals in an online context (i.e.\ \enquote{reusing} the confidence multiple times) is unsound.
Instead, we suggest to spread the available confidence over an (a-priori fixed) \emph{grid} using the union bound, yielding correctness (\cref{thrm:sound_union}).
Interestingly, the shape of the grid has significant influence on the convergence rate (\cref{lemma:union_convergence}), potentially even leading to divergence (\cref{corr:non_convergence}).

\paragraph{Confidence intervals.}
A (fixed-time) confidence interval $\confinter_{\successor}\colon (0,1)\times \States^* \to 2^{[0,1]}$ guarantees for all $\sampletime \in \PosNaturals$ that $\Probability\left[\saprob(\successor) \in \confinter_{\successor}(\error; Z_{1:\sampletime})\right] \geq 1-\error$.
Crucially, although this guarantee holds for any \emph{a-priori} fixed $\sampletime$, it does \emph{not} imply \cref{eq:conf_set_trans} (observe that the \enquote{for all} is outside $\Probability$).
For example, if we infer at both $\sampletime_1 = 100$ and $\sampletime_2 = 200$, the probability getting at least one estimate wrong may be larger than $\error$.
We show that this occurs for \emph{all} rate-optimal confidence intervals containing the empirical average (see \cref{app:defs:rate_optimal}), including the commonly used Hoeffding~\cite{Hoe63}, (empirical) Bernstein~\cite{MauPon09}, and Clopper-Pearson intervals~\cite{CloPea34} (see \cref{app:defs:intervals}).

\paragraph{Rate-optimal fixed-time intervals.}
Let $I_r : (0,1)\times S^* \to 2^{[0,1]}$ be a fixed-time confidence
interval for the transition probability $P(r)$. We call $I_r$
\emph{empirically anchored} if, for all $t\in\Naturals_+$, all
$\delta\in(0,1)$, and all $z_{1:t}\in S^t$, $ \mu_r(z_{1:t}) \in I_r(\delta;z_{1:t})$.
Its maximal width at time $t$ is
\begin{align*}
    W_{I_r}(\delta,t)
    :=
    \sup_{z_{1:t}\in S^t}
    \bigl(\sup I_r(\delta;z_{1:t})
    -
    \inf I_r(\delta;z_{1:t})\bigr).
\end{align*}
We say that $I_r$ has \emph{rate-optimal maximal width} if there exist
constants $0<c\le C<\infty$, $t_0\in\Naturals_+$, and
$\delta_0\in(0,1)$ such that, for all $t\ge t_0$ and all
$\delta\in(0,\delta_0]$,
\begin{align*}
    c\min\left\{1,\sqrt{\frac{\log(1/\delta)}{t}}\right\}
    \le
    W_{I_r}(\delta,t)
    \le
    C\min\left\{1,\sqrt{\frac{\log(1/\delta)}{t}}\right\}.
\end{align*}
This captures the usual fixed-time
$\sqrt{\log(1/\delta)/t}$ concentration rate of standard Bernoulli
intervals such as Hoeffding, empirical Bernstein, and Clopper--Pearson.

\begin{restatable}{lemma}{lilLemma}
\label{lemma:lil}
For any fixed-time confidence interval
$\confinter_\successor$ 
with rate-optimal maximal width containing the empirical average, there exists an error probability
$\error_0 > 0$ and a transition probability $\saprob(\successor)\in [0,1]$ s.t.\ for every
$\error\in(0,\error_0]$ we have $\Probability\left[
\forall t\in\PosNaturals.\ \saprob(\successor)\in
\confinter_\successor(\error;Z_{1:t})
\right]=0$.
In particular, \cref{eq:conf_set_trans} is violated.
\end{restatable}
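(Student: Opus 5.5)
We argue via the law of the iterated logarithm (LIL) applied to the Bernoulli indicators $X_i = \indicator{Z_i = \successor}$, which are i.i.d.\ with mean $p \eqdef \saprob(\successor)$. We pick any $p \in (0,1)$, say $p = 1/2$, so that the variance $\sigma^2 = p(1-p)$ is positive. The idea is that an empirically anchored interval of width at most $C\sqrt{\log(1/\error)/t}$ must exclude $p$ whenever the empirical mean deviates from $p$ by more than that width. The LIL guarantees that deviations of order $\sqrt{2\sigma^2 \log\log t / t}$ occur infinitely often almost surely, and these eventually exceed any fixed multiple of $\sqrt{1/t}$.

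First, fix the constants $c, C, t_0, \error_0$ from rate-optimality, and set $\error_0 \eqdef \delta_0$ (we may shrink it if convenient). Fix $\error \in (0, \error_0]$. For $t$ large, namely $t \ge t_0$ and $t \ge \log(1/\error)$, the minimum in the width bound is the square-root term. Hence every realisation satisfies
\[
\sup \confinter_\successor(\error; z_{1:t}) - \inf \confinter_\successor(\error; z_{1:t}) \le C\sqrt{\log(1/\error)/t}.
\]
Since $\average_\successor(z_{1:t}) \in \confinter_\successor(\error;z_{1:t})$ and the confidence set lies within an interval of this diameter, $p \in \confinter_\successor(\error; z_{1:t})$ implies $\abs{\average_\successor(z_{1:t}) - p} \le C\sqrt{\log(1/\error)/t}$. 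If the set is not convex, we use only its diameter, which is what $W$ measures, so the argument is unaffected.

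Second, we invoke the Hartman--Wintner LIL:
\[
\limsup_{t\to\infty} \frac{\sum_{i\le t}(X_i - p)}{\sqrt{2\sigma^2 t \log\log t}} = 1 \quad\text{almost surely.}
\]
Thus almost surely there are infinitely many $t$ with
\[
\abs{\average_\successor(Z_{1:t}) - p} \ge \tfrac12\sqrt{2\sigma^2 \log\log t/t}.
\]
Because $\log\log t \to \infty$, for all sufficiently large such $t$ this exceeds $C\sqrt{\log(1/\error)/t}$, and therefore $p \notin \confinter_\successor(\error; Z_{1:t})$. So the event $\{\forall t.\ p \in \confinter_\successor(\error;Z_{1:t})\}$ is contained in the complement of a probability-one event, giving probability $0$. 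In particular, \cref{eq:conf_set_trans} fails for any $\error < 1$.

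The main obstacle is a bookkeeping one: making sure the width bound applies uniformly over realisations for each fixed $\error$. This is exactly why the definition uses the supremum $W_{I_r}$ over all $z_{1:t}$. We also need to choose $p$ strictly inside $(0,1)$ so the LIL is nondegenerate. The lower bound constant $c$ is not needed, and the LIL itself is assumed as a standard result~\cite{khintchine1924satz,grimmett2020probability}.
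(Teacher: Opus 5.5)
Your proposal is correct and follows essentially the same route as the paper. Empirical anchoring plus the rate-optimal upper width bound give $|\mu_r(Z_{1:t}) - p| \le C\sqrt{\log(1/\delta)/t}$ whenever $p$ is covered, and at $p = 1/2$ the law of the iterated logarithm makes this fail infinitely often almost surely; the only cosmetic difference is that the paper rescales to the $\pm 1$ variables $Y_i = 2(p - X_i)$ to use the unit-variance LIL, whereas you apply the Hartman--Wintner form with $\sigma^2 = p(1-p)$ directly.
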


\paragraph{Gridding.}
To extend fixed-time guarantees to our setting, one can distribute confidence over time using the union bound.
Essentially, we specify a grid of sample counts at which the confidence interval must be valid and distribute confidence across this grid.
Formally, a grid
$N= (n_i)_{i\in \PosNaturals}$ is an increasing sequence of sample counts $n_i\in\PosNaturals$ and a confidence spending function $h\colon \PosNaturals \to (0,\infty)$ is a function satisfying $\sum_{i=1}^\infty 1/h(i) \leq 1$. Together, we get valid sequential inference.
\begin{restatable}{theorem}{unionGridThm}
    \label{thrm:sound_union}
    Let $\error\in (0,1)$, $\successor \in \States$, $N$ a grid, $h$ a confidence spending function, and $\confinter_{\successor}$ a confidence interval. Then, $\Probability\left[\forall i \in \PosNaturals.\ \saprob(\successor) \in \confinter_{\successor}(\error/h(i); Z_{1:n_i})\right]  \geq 1-\error$.
\end{restatable}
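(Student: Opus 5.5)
The plan is to pass to the complementary failure event and apply the union bound over the countable grid. Fix $\error$, $\successor$, the grid $N=(n_i)_{i\in\PosNaturals}$, the spending function $h$, and the interval $\confinter_{\successor}$. For each $i \in \PosNaturals$ define the failure event
\[
E_i \coloneqq \bigl\{\saprob(\successor) \notin \confinter_{\successor}(\error/h(i); Z_{1:n_i})\bigr\}.
\]
The event in the statement is the complement of $\bigcup_{i} E_i$, so it suffices to show $\Probability[\bigcup_i E_i] \leq \error$.

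The first step is to bound each failure event using the fixed-time guarantee. The grid is fixed a priori, so each $n_i$ is deterministic and not data-dependent. We also need $\error/h(i) \in (0,1)$, so that the confidence interval is applied within its domain. Since $h(i) > 0$ and $\sum_j 1/h(j) \leq 1$, we get $1/h(i) \leq 1$, hence $\error/h(i) \leq \error < 1$, and $\error/h(i) > 0$. The defining property of a fixed-time confidence interval, applied at time $\sampletime = n_i$ with error $\error/h(i)$, then gives $\Probability[E_i] \leq \error/h(i)$.

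The second step is to combine these bounds by countable subadditivity (the union bound) over the countably many grid points:
\[
\Probability\Bigl[\bigcup_{i\in\PosNaturals} E_i\Bigr] \leq \sum_{i=1}^\infty \Probability[E_i] \leq \error \sum_{i=1}^\infty \frac{1}{h(i)} \leq \error.
\]
Taking complements yields the claim.

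The only point requiring care is measurability. Each $E_i$ must be an event so that the fixed-time guarantee, which is stated as a probability, is meaningful. This is implicit in that guarantee and depends only on finitely many coordinates $Z_{1:n_i}$; a countable union of events is again an event. I do not expect a real obstacle here. The key conceptual point is that, unlike the situation in \cref{lemma:lil}, the times $n_i$ are fixed in advance, so the fixed-time guarantee applies verbatim at each of them. Note also that neither strict monotonicity of the grid nor independence between the $E_i$ is used.
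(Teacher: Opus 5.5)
Your proposal is correct and follows essentially the same route as the paper's proof: you define the failure events $E_i$, bound each by fixed-time validity at level $\error/h(i)$, and apply the union bound using $\sum_i 1/h(i)\leq 1$. Your extra checks that $\error/h(i)\in(0,1)$ and that each $E_i$ is measurable are correct; the paper leaves both implicit.
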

\noindent
More specifically, for any time $\sampletime$ we can use the intersection of all previously inferred sets, i.e.\ $\bigcap_{i, n_i \leq \sampletime} \confinter_{\successor}(\error/h(i); Z_{1:n_i})$.
\Cref{thrm:sound_union} generalizes the approach of \cite{AKW19} for \stepinfer{}, which distributes confidence over times at which \stepinfer{} takes place.

The usefulness of this method depends mainly on the choice of grid and spending function.
In \cref{lemma:union_convergence} we summarise the asymptotic maximal widths for polynomial and exponential gridding, i.e.
$
    N_{\mathrm{poly}}^{b,c} = (\lfloor b\cdot i^c \rfloor)_{i\in \PosNaturals}
$ and $
    N_{\mathrm{exp}}^{b,c} = (\lfloor b\cdot c^i \rfloor)_{i\in \PosNaturals}
$
for $b,c \geq 1$;
and for polynomial and exponential spending, i.e.\ for step $i \in \PosNaturals$ set
$
    h_{\mathrm{poly}}^{a}(i) = \eta_p \cdot i^a
$ and $
    h_{\mathrm{exp}}^a(i) = \eta_e\cdot a^i
$
for $a > 1$,
with spending normalization constants $\eta_p = 1/\sum_{i} i^{-a} = 1/\zeta(a)$ (the Riemann zeta function) and $\eta_e = 1/\sum_i a^{-i}$, s.t.\ $\sum_i 1/h(i)=1$.
\begin{restatable}{theorem}{unionConvergenceLemma}
 \label{lemma:union_convergence}
    Let $\error\in (0,1)$, $\successor \in \States$, $\confinter_\successor$ a confidence interval with rate-optimal maximal width, and $a, b,c\in \Reals$ with $a>1$ and $b,c\geq 1$.
    Applying polynomial and exponential grids under polynomial and exponential spending impacts the rate-optimal maximal widths (and its constants) as follows:
    \begin{center}
        \begin{tabular}{ccc}
& $h_{\mathrm{poly}}^a$ & $h_{\mathrm{exp}}^a$ \\
\midrule
$N_{\mathrm{poly}}^{b,1}$ &
$ \sqrt{(a\log (n_i/b) +\log(\eta_p/\error)) / n_i}$ &
$ \sqrt{((n_i/b)\log a +\log(\eta_e/\error)) / n_i}$ \\
$N_{\mathrm{poly}}^{b,c}$ &
$ \sqrt{((a/c)\log (n_i/b) +\log(\eta_p/\error)) / n_i}$ &
$ \sqrt{(\sqrt[c]{n_i/b}\,\log a +\log(\eta_e/\error)) / n_i}$ \\
$N_{\mathrm{exp}}^{b,c}$ &
$ \sqrt{(a\log \log_c(n_i/b) +\log(\eta_p/\error)) / n_i}$ &
$  \sqrt{(\frac{\log(n_i/b)}{\log(c)}\log a +\log(\eta_e/\error)) / n_i}$.
\end{tabular}
    \end{center}
\end{restatable}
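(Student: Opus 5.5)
The plan is to reduce each table entry to one substitution into the rate-optimal maximal width bound. At the $i$-th grid point the interval $\confinter_\successor(\error/h(i); Z_{1:n_i})$ is evaluated with effective error $\error' = \error/h(i)$. For $\error$ small enough that $\error' \le \delta_0$ for all $i$ (which holds since $h(i)\ge h(1)$ is bounded below), and for $n_i \ge t_0$, the definition gives
\[
c\min\{1,\sqrt{\log(h(i)/\error)/n_i}\} \le W_{\confinter_\successor}(\error/h(i), n_i) \le C\min\{1,\sqrt{\log(h(i)/\error)/n_i}\}.
\]
Every entry in the table is $\sqrt{\log(h(i)/\error)/n_i}$, with $\log h(i)$ rewritten as a function of $n_i$, up to the constants $c, C$ (the paper says explicitly that the rate and its constants are affected).

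Then I expand $\log(h(i)/\error)$. For polynomial spending it is $a\log i + \log(\eta_p/\error)$, and for exponential spending it is $i\log a + \log(\eta_e/\error)$. Next I invert the grid to express $i$ through $n_i$:
\begin{itemize}
\item for $N_{\mathrm{poly}}^{b,c}$, $n_i = \lfloor b i^c\rfloor$ gives $i \approx (n_i/b)^{1/c}$, so $\log i \approx (1/c)\log(n_i/b)$;
\item for $N_{\mathrm{exp}}^{b,c}$, $n_i=\lfloor b c^i\rfloor$ gives $i\approx \log_c(n_i/b) = \log(n_i/b)/\log c$.
\end{itemize}
The case $c=1$ of the polynomial grid is just the special case $i = n_i/b$. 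Substituting into the two spending expressions yields the six entries: the logarithm of $i$ appears for polynomial spending, and $i$ itself for exponential spending.

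The only delicate step is the floor functions. From $b i^c - 1 < n_i \le b i^c$ I get $i \ge (n_i/b)^{1/c}$ and $i \le ((n_i+1)/b)^{1/c}$. Since $\log((n_i+1)/b) = \log(n_i/b) + O(1/n_i)$, the discrepancy is absorbed into the constants, or into a multiplicative factor tending to $1$ asymptotically. The exponential grid works the same way. If $c>1$ with $b$ small, the floor may produce repeated values, so the grid might not be strictly increasing; I would either assume $b$ large enough or note that the bounds are stated for indices with $n_i\ge t_0$.

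Finally, I check that the $\min\{1,\cdot\}$ saturates only for finitely many $i$ in the convergent cases. I would also remark that for exponential spending on polynomial grids the numerator grows like $n_i^{1/c}$. This yields width $\sim n_i^{(1/c-1)/2}$, which for $c=1$ stays bounded away from zero and so anticipates the later non-convergence corollary. That observation is not needed for the statement itself, which only records the expressions.
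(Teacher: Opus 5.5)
Your proposal is correct and follows the paper's proof essentially step for step. Like the paper, you plug the effective error $\delta/h(i)$ into the rate-optimal width bound, expand $\log(h(i)/\delta)$ for the two spending functions, invert the grid to express $i$ (resp.\ $\log i$) through $n_i$, and absorb the floors into lower-order terms. Your handling of the floors and of the range where the width bound applies is more careful than the paper's, which simply asserts the bound uniformly over $\delta\in(0,1)$; moreover, you need not restrict to small $\delta$, since $h(i)\to\infty$ gives $\delta/h(i)\le\delta_0$ for all sufficiently large $i$ and every fixed $\delta$.
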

\noindent
We note two disadvantages of the exponential spending (as e.g.\ used in~\cite{AKW19}): Firstly, the interval width is (asymptotically) at least as large as for polynomial spending (\cref{lemma:poly_dominance}).
Secondly, while the intervals converge for every grid size under polynomial spending, a constant grid size for exponential spending leads to non-convergence (\cref{corr:non_convergence}).%
\begin{restatable}{lemma}{polyDominanceLemma}
    \label{lemma:poly_dominance}
    Let $\error\in (0,1)$, $\successor \in \States$, $\confinter_{\successor}$ a confidence interval with rate-optimal maximal width, and $N$ a grid. Then, for every $a,b\in \Reals$ such that $a,b>1$, we get for large $i\in \PosNaturals$ that $  W_{\confinter_{\successor}}(\error/h_{\mathrm{poly}}^a(i),n_i)\leq W_{\confinter_{\successor}}(\error/h_{\mathrm{exp}}^{b}(i),n_i)$.
\end{restatable}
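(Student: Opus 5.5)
The plan is to sandwich both widths between the two-sided bounds in the definition of rate-optimal maximal width. I will bound the polynomial-spending width from above using the constant $C$, and the exponential-spending width from below using the constant $c$. Then it suffices to show that the resulting upper envelope is eventually below the resulting lower envelope. Since $c\le C$ may differ, plain monotonicity of $W$ in $\delta$ is not enough, and the argument must absorb the ratio $C/c$. It can do so because exponential spending costs $\Theta(i)$ in the logarithm, whereas polynomial spending costs only $\Theta(\log i)$.

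The first step checks that the rate-optimality bounds apply for all large $i$. Because $N$ is strictly increasing in $\PosNaturals$, we have $n_i\ge i$, so $n_i\ge t_0$ once $i\ge t_0$. Both $h^a_{\mathrm{poly}}(i)=\eta_p i^a$ and $h^b_{\mathrm{exp}}(i)=\eta_e b^i$ tend to infinity, so eventually $\error/h(i)\le\error_0$ for both. For such $i$ we obtain
\begin{align*}
W_{\confinter_\successor}(\error/h^a_{\mathrm{poly}}(i),n_i) &\le C\min\Bigl\{1,\sqrt{y_i}\Bigr\}, & y_i&=\frac{a\log i+\log(\eta_p/\error)}{n_i},\\
W_{\confinter_\successor}(\error/h^b_{\mathrm{exp}}(i),n_i) &\ge c\min\Bigl\{1,\sqrt{x_i}\Bigr\}, & x_i&=\frac{i\log b+\log(\eta_e/\error)}{n_i}.
\end{align*}

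The second step compares the two envelopes by cases on the minimum for $x_i$. Since $n_i\ge i$, we have $y_i\le (a\log i+\mathrm{const})/i\to 0$.
\begin{itemize}
\item If $x_i\ge 1$, the right-hand side equals $c$. It then suffices that $C\sqrt{y_i}\le c$, which holds for large $i$ because $y_i\to0$.
\item If $x_i<1$, it suffices that $C^2 y_i\le c^2 x_i$, i.e.\ $(C/c)^2\bigl(a\log i+\log(\eta_p/\error)\bigr)\le i\log b+\log(\eta_e/\error)$. Since $b>1$, the right-hand side grows linearly in $i$ and the left-hand side only logarithmically, so this holds for all sufficiently large $i$.
\end{itemize}
The case threshold on $i$ does not depend on $n_i$, because both conditions reduce to inequalities in $i$ alone: $y_i\to0$ uniformly via $n_i\ge i$, and the second comparison has $n_i$ cancel. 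Taking the maximum of all finitely many thresholds gives the claim for all large $i$.

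The main obstacle is the mismatch of the constants $c$ and $C$ together with the truncation at $1$ in the width bounds. The case split above handles it, provided I carefully use $n_i\ge i$ to make $y_i\to0$ independently of how fast the grid grows. Everything else is routine bookkeeping of thresholds.
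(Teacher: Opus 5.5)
Your proof is correct and follows the same route as the paper. In both, you bound the polynomial-spending width from above by $C$ times the rate envelope and the exponential-spending width from below by $c$ times the envelope, then absorb $(C/c)^2$ using $a\log i = o(i)$.

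The paper's proof drops the $\min\{1,\cdot\}$ truncation in the lower bound, which only follows from the definition when $x_i\le 1$. It also leaves $n_i\ge t_0$ and $\delta/h(i)\le\delta_0$ implicit. Your case split on $x_i\ge 1$ and your threshold bookkeeping via $n_i\ge i$ close these gaps, so your version is, if anything, more complete.
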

\begin{restatable}{corollary}{nonConvergence}
    \label{corr:non_convergence}
    For every error probability, every confidence interval with rate-optimal maximal width, every grid with constant spacing, and every exponential spending function, the resulting maximal widths fail to converge to $0$.
\end{restatable}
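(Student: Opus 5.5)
The corollary follows by specialising \cref{lemma:union_convergence} to the exponential-spending column with a linear grid. A grid with constant spacing is $n_i = n_1 + (i-1)d$ for some $d \in \PosNaturals$, so $i = (n_i - n_1)/d + 1 = \Theta(n_i)$. This matches the row $N_{\mathrm{poly}}^{b,1}$ with $b = d$, up to an additive shift that does not change the asymptotics. Rather than quote the table, I will redo the one-line computation directly from the definition of rate-optimal maximal width. This covers arbitrary offsets and makes the constants explicit.

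First, fix $\error \in (0,1)$ and $h_{\mathrm{exp}}^a(i) = \eta_e a^i$ with $a > 1$. At grid point $i$ the interval is used at level $\error_i = \error/(\eta_e a^i)$. For large $i$ this satisfies $\error_i \le \error_0$, so the lower bound in the rate-optimality assumption applies once also $n_i \ge t_0$. The definition then gives
\[
W_{\confinter_\successor}(\error_i, n_i) \ \ge\ c \min\Bigl\{1, \sqrt{\tfrac{\log(\eta_e/\error) + i \log a}{n_i}}\Bigr\}.
\]

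Second, I substitute $i = (n_i - n_1)/d + 1$. The ratio $(i\log a + \log(\eta_e/\error))/n_i$ then tends to $\log(a)/d > 0$ as $i \to \infty$. Hence $\liminf_i W_{\confinter_\successor}(\error_i, n_i) \ge c \min\{1, \sqrt{\log(a)/d}\} > 0$, so the maximal widths do not converge to $0$.

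The main point needing care is what \enquote{maximal width} means once the sets from earlier grid points are intersected. The statement concerns $W_{\confinter_\successor}(\error/h(i), n_i)$ at the grid points themselves, which is exactly what the table in \cref{lemma:union_convergence} reports. So I will state the conclusion for that quantity. If needed, I will remark that between grid points the width is at least that of the latest grid point's bound, since that is the quantity being evaluated. Apart from this, the only thing to check is that both side conditions, $n_i \ge t_0$ and $\error_i \le \error_0$, hold eventually. This is immediate because $n_i \to \infty$ and $\error_i \to 0$.
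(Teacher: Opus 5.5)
Your proposal is correct and is essentially the paper's argument: the paper's proof observes that constant spacing gives $n_i=\Theta(i)$ and reads off $W_{\confinter_\successor}(\error/h_{\mathrm{exp}}^a(i),n_i)=\Theta(1)$ from the exponential-spending column of \cref{lemma:union_convergence}. You carry out that same computation directly from the lower bound in the rate-optimality definition, which also makes explicit the side conditions $n_i\ge t_0$ and $\error/h_{\mathrm{exp}}^a(i)\le\error_0$ that the table-based argument leaves implicit.
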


\noindent In general, \cref{lemma:union_convergence} shows that slower spending and sparser grids yield a better asymptotic convergence rate.
However, sparse grids in turn increase the gaps between time points where estimates can be refined.
This trade-off between asymptotic convergence rate and large gaps arises because the naive union bound treats each grid point as an independent experiment.
Fortunately, there are methods leveraging the connection between adjacent points to provide guarantees for \emph{every} time point.

\subsection{Confidence Sequences}
\label{subsec:conf_sequences}
A confidence sequence~\cite{darling1967confidence} is a stream of probably correct estimates from a stream of samples, as defined in \cref{eq:conf_set_trans}.
Notably, a gap-free grid ($n_i=i$) with polynomial spending already gives a confidence sequence, however with a (sub-optimal) convergence rate of $\sqrt{(\log t + \log(1/\error))/t}$ (\cref{lemma:union_convergence}), whereas $\sqrt{(\log \log t + \log(1/\error))/t}$ is optimal~\cite{howard2021time}.
Interestingly, this in turn matches the rate obtained by exponentially spaced grids with polynomial spending, while providing validity at \emph{all} times.
Many confidence sequences with optimal rates are fundamentally connected to martingale theory and Ville's inequality~\cite{howard2020time} (with some exceptions~\cite{jamieson2014lil}).
As with confidence intervals, there are worst-case, variance-aware, and parametric confidence sequences, mirroring the fixed-time Hoeffding, (empirical) Bernstein, and Clopper-Pearson constructions.
Similar in spirit to \cref{subsec:union_bound}, \emph{stitching} \cite{howard2021time} first applies a union bound on an exponentially spaced grid and then uses Ville's inequality to obtain validity at intermediate times.
In~\cite{waudby2024estimating}, the authors construct \emph{betting} confidence sequences from test supermartingales by mirroring the classical Chernoff method for constructing confidence bounds.
\cref{app:defs} provides formal definitions of the known \emph{Stitched Hoeffding} $\confseq_\successor^{SH}$, \emph{Stitched Bernstein} $\confseq_\successor^{SB}$, \emph{Betting Hoeffding} $\confseq_\successor^{BH}$, and \emph{Betting Bernstein} $\confseq_\successor^{BB}$ sequences.

\begin{restatable}{theorem}{csFamilyThm}
    \label{thrm:sound_var_cs}
    $\confseq_\successor^{SH}$, $\confseq_\successor^{SB}$, $\confseq_\successor^{BH}$, and $\confseq_\successor^{BB}$ are confidence sequences (\cref{eq:conf_set_trans}).
\end{restatable}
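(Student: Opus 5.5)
The plan is to derive all four statements from a single mechanism: exhibit, for each candidate value $p\in[0,1]$, a nonnegative process that is a test supermartingale when $p=\saprob(\successor)$ is the true value, and then apply Ville's inequality. The confidence set at time $\sampletime$ is defined as the set of $p$ whose process has not yet exceeded $1/\error$ (or a stitched boundary). We work throughout with the Bernoulli process $X_i=\indicator{Z_i=\successor}$, which is i.i.d.\ with mean $\saprob(\successor)$ and values in $[0,1]$. Ville's inequality then says that for a nonnegative supermartingale $M$ with $M_0\le 1$ we have $\Probability[\exists \sampletime.\ M_\sampletime\ge 1/\error]\le\error$. Taking complements gives exactly \cref{eq:conf_set_trans}, because the event \enquote{$\saprob(\successor)\notin\confseq_\successor(\error;Z_{1:\sampletime})$ for some $\sampletime$} is contained in the crossing event.

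\emph{Betting sequences.} For $\confseq^{BH}_\successor$ and $\confseq^{BB}_\successor$ we use the capital process $K_\sampletime(p)=\prod_{i\le\sampletime}(1+\lambda_i(X_i-p))$, or the two-sided mixture $\tfrac12(K^+_\sampletime+K^-_\sampletime)$. The bets $\lambda_i$ are predictable (they depend on $X_{1:i-1}$ only) and are clipped so that $|\lambda_i|\le 1/\max(p,1-p)$, which keeps every factor nonnegative. Since $\mathbb{E}[X_i-p\mid X_{1:i-1}]=0$ at the true $p$, each $K_\sampletime(\saprob(\successor))$ is a nonnegative martingale with initial value $1$, and Ville applies directly.

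If instead the appendix defines these sequences through Chernoff-style processes, namely $\exp(\lambda_i(X_i-p)-\psi(\lambda_i))$ with $\psi_H(\lambda)=\lambda^2/8$ or $\psi_E(\lambda)=-\log(1-\lambda)-\lambda$ applied to $(X_i-\hat\mu_{i-1})^2$, the argument changes slightly. We would then invoke Hoeffding's lemma, respectively the Fan et al.\ inequality $\exp(\lambda y-y^2\psi_E(\lambda))\le 1+\lambda y$ for $y\ge-1$ and $\lambda\in[0,1)$, to show that each factor has conditional expectation at most $1$. That makes the product a supermartingale.

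\emph{Stitched sequences.} For $\confseq^{SH}_\successor$ and $\confseq^{SB}_\successor$ we follow \cite{howard2021time}. Split time into epochs $[\eta^k,\eta^{k+1})$. On epoch $k$, use the sub-Gaussian (Hoeffding) or sub-gamma/sub-exponential (Bernstein) exponential supermartingale $\exp(\lambda_k S_\sampletime-\psi(\lambda_k)V_\sampletime)$, where $S_\sampletime=\sum_{i\le\sampletime}(X_i-p)$ and the intrinsic time $V_\sampletime$ is either $\sampletime/4$ or the variance proxy. Apply Ville to this process with error $\error/h(k)$, where $\sum_k 1/h(k)\le1$. This gives a linear boundary that is valid uniformly over all times, in particular over the whole epoch. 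The stitched boundary dominates each linear boundary on its own epoch, so a union bound over $k$ gives total error at most $\error$. Applying the same argument to $-S_\sampletime$ and splitting $\error$ between the two tails makes the set two-sided.

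For the Bernstein variants, the main obstacle I expect is to verify that the empirical or predictable variance process $V_\sampletime$ actually makes the exponential process a supermartingale. This requires the predictable-centering trick: use $(X_i-\hat\mu_{i-1})^2$ and the inequality $\psi_E$ above, rather than the true variance. I would check this step carefully against the definitions in \cref{app:defs}. The remaining bookkeeping is routine: the intersection with $[0,1]$, and measurability of the sets as functions of $z_{1:\sampletime}$.
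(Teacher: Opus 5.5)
Your proposal is correct and follows essentially the same route as the paper. The paper's proof simply invokes \cite[Thm.~1]{howard2021time} for $\confseq_\successor^{SH},\confseq_\successor^{SB}$ and \cite[Prop.~1 and Thm.~2]{waudby2024estimating} for $\confseq_\successor^{BH},\confseq_\successor^{BB}$, and those results are proved by exactly the supermartingale-plus-Ville and epoch-stitching arguments you sketch.

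Two refinements:
\begin{itemize}
\item The appendix defines the betting sequences through the closed-form Chernoff-style (predictable plug-in) processes. So your second branch is the relevant one, not the capital process $\prod_i(1+\lambda_i(X_i-p))$.
\item For $\confseq_\successor^{SB}$ the epochs must be taken in the random intrinsic time $v_\sampletime$, as the index $m(v_\sampletime)$ in $g$ indicates, not in $\sampletime$.
\end{itemize}
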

\noindent Therefore, we can directly use these to tackle the probability estimation problem.
We note that the betting confidence sequences are built from test \emph{super}martingales, introducing additional slack compared to test martingales.
Test martingale-based betting confidence sequences, such as those  of~\cite{waudby2024estimating}, typically lack closed forms and are costly to update, scaling linearly with the sample history.
Fortunately, our setting admits efficiently computable likelihood-ratio-based test martingales.

\subsection{Test Confidence Sequences}\label{subsec:test-conf-sequence}
A test-based confidence sequence is obtained by inverting a family of sequential hypothesis tests.
We use the general approach outlined \cite{ramdas2025hypothesis,waudby2024estimating} to construct two test-based confidence sequences: One for a single transition probability $\mdptransitions(s, a, \successor)$, as natural analogue to the Clopper-Pearson interval, and one for the entire distribution $\mdptransitions(s, a)$, avoiding the additional slack introduced by the union bound. 
Although both test-based confidence sequences lack a closed form, we show that they can be utilised efficiently during value iteration by solving a one dimensional optimisation problem via bisection.

\paragraph{Single-transition test confidence sequence.}
For an intuition, fix a sequential hypothesis test $\psi_\successor \colon [0,1]\times \States^* \to \{0,1\}$ that, for a given parameter $q \in [0,1]$, tests the null $\mathcal{H}_0\colon \saprob(\successor)=q$ against the alternative $\mathcal{H}_1\colon \saprob(\successor)\neq q$ with a false positive rate of at most $\error\in (0,1)$.
In other words, conditioned on $\saprob(\successor)=q$ we have $\Probability[ \exists \sampletime\in \PosNaturals.\ \psi_\successor(q; Z_{1:\sampletime})=1] \leq \error$.
Hence, if $\psi_\successor(q;z_{1:\sampletime})$ is $1$ at some point in time $\sampletime$, the decision to remove $q$ out of the set of possible parameter for $\saprob(\successor)$ is the correct decision with probability greater than $\gamma=1-\delta$.
This yields a confidence sequence by considering exactly those parameters that have not yet been rejected, i.e.\ $\confset_r(\gamma; Z_{1:\sampletime}) = \{q \in [0,1]\mid \psi_\successor(q;Z_{1:\sampletime})=0\}$.

A natural choice of $\psi$ for a single transition is based on the likelihood ratio~\cite{wald1992sequential,waudby2024estimating}.
Specifically, for the observations $z_{1:\sampletime}\in \States^\sampletime$ let $x_i \coloneqq \indicator{z_i=\successor}$ indicate a transition to $\successor\in \Successors$, then we define for $q\in (0,1)$ 
\begin{align} 
\label{eq:MLE}
    M_\successor^{\mathrm{MLE}}(q; z_{1:\sampletime})
     \coloneqq
     {\prod}_{i=1}^\sampletime \frac{\average_\successor(z_{1:i-1})^{x_i} (1-\average_\successor(z_{1:i-1}))^{1-x_i}}{q^{x_i}(1-q)^{1-x_i}},
\end{align} 
The function $ M_\successor^{\mathrm{MLE}}$ computes the product of likelihood ratios, where the alternative parameter $p$ at time $i$ is replaced by the \emph{predictable} empirical average up to time $i-1$, i.e.\ the maximum likelihood estimate before observing the new data at time $i$.
Using the predictable average, as suggested by~\cite{waudby2024estimating,ramdas2025hypothesis}, is the crucial difference to the classical Wald Sequential Probability Ratio Test~\cite{wald1992sequential}, as it ensures 
that the process $M_\successor^{\mathrm{MLE}}(\saprob(\successor); \cdot)$ is a test-martingale, i.e.\ a non-negative martingale with expectation $1$.
By Ville's inequality we know that this process never exceeds $1/\error$ with a probability greater than $\gamma$, i.e.\ 
$
\Probability[\exists \sampletime \in \PosNaturals .\ M_\successor^{\mathrm{MLE}}(\saprob(\successor); Z_{1:\sampletime})  \geq \tfrac{1}{\error}] \leq \error
$.
Conversely, if the threshold is exceeded, the assumption that $\saprob(\successor)=q$ is rejected with probability $\gamma$.
Hence, $M_\successor^{\mathrm{MLE}}(q; z_{1:\sampletime})>1/\error$ is an appropriate hypothesis test. 

Interestingly, this method is not restricted to the predictable empirical average.
In fact, \emph{any} (history-dependent) prediction of $\saprob(\successor)$ is sufficient:
Let $\nu_\successor \colon \States^* \to \Distributions([0,1])$ map observations to a parameter distribution, define for $q\in (0,1)$
\begin{equation}
    \label{eq:LRmart}
    M_\successor^{\nu_\successor}(q; z_{1:\sampletime})
    \coloneqq
    {\prod}_{i=1}^\sampletime \left(\int_{0}^1 \frac{u^{x_i} (1-u)^{1-x_i}}{q^{x_i}(1-q)^{1-x_i}} \, \nu_\successor(z_{1:i-1})(du) \right).
\end{equation}
and $M_\successor^{\nu_\successor}(0; z_{1:\sampletime})=1$ if $\successes_{\successor}(z_{1:\sampletime})=0$, $M_\successor^{\nu_\successor}=1$ if $\successes_{\successor}(z_{1:\sampletime})=\sampletime$, and $M_\successor^{\nu_\successor}(q; z_{1:\sampletime})=+\infty$ for $q\in \{0,1\}$ otherwise. 
Intuitively, $M_\successor^{\nu_\successor}$ is the product of likelihood ratios, averaged over alternatives $u$ with respect to a history-dependent distribution representing a best-effort prediction of $\saprob(\successor)$.
An easily computable example of $\nu_\successor$ is the Beta distribution, i.e.\ for $a_0,b_0\in \Reals_{>0}$, $a_\sampletime  \coloneqq a_0 + \successes_\successor(z_{1:\sampletime})$, $b_\sampletime \coloneqq b_0 + (\sampletime-\successes_\successor(z_{1:\sampletime}))$
\begin{equation*}
     M_\successor^{\mathrm{Beta}}(q;z_{1:\sampletime})
     \coloneqq
     {\prod}_{i=1}^\sampletime
     \frac{\mathrm{Beta}(a_{i-1}+ x_i, b_{i-1}+(1-x_i))}{\mathrm{Beta}(a_{i-1},b_{i-1})}\cdot \frac{1}{q^{x_i}(1-q)^{1-x_i}}.
\end{equation*}

\begin{restatable}{theorem}{lrSingleThm}
\label{thrm:sound_single_lr}
Let $\error\in (0,1)$, $\successor \in \Successors$, and $M_\successor^{\nu_\successor}$ as in \cref{eq:LRmart}. Then $\confset_\successor^{M_\successor^{\nu_\successor}}$ is a valid confidence sequence (satisfying \cref{eq:conf_set_trans}) and if $\nu_\successor(Z_{1:\sampletime})$ converges to $\saprob(\successor)$ a.s.\ then $\confset_\successor^{M_\successor^{\nu_\successor}}$ converges to $\saprob(\successor)$.
\end{restatable}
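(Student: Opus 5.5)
The proof splits into validity and convergence. The confidence sequence is the set of non-rejected parameters, $\confset_\successor^{M_\successor^{\nu_\successor}}(\error; Z_{1:\sampletime}) = \{q \in [0,1] \mid M_\successor^{\nu_\successor}(q; Z_{1:\sampletime}) < 1/\error\}$.

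\textbf{Validity.} I would show that the process $M_t \coloneqq M_\successor^{\nu_\successor}(\saprob(\successor); Z_{1:t})$ is a test martingale with respect to the natural filtration $\mathcal{F}_t = \sigma(Z_{1:t})$. Write $p = \saprob(\successor)$. Since $\successor \in \Successors$, we have $p > 0$; first assume $p < 1$. Each factor of the product is
\[
F_i = \int_0^1 \frac{u^{x_i}(1-u)^{1-x_i}}{p^{x_i}(1-p)^{1-x_i}}\,\nu_\successor(Z_{1:i-1})(du).
\]
This factor is non-negative. Given $\mathcal{F}_{i-1}$, the mixing measure is fixed and $x_i \sim \mathrm{Bernoulli}(p)$ independently of the past. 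Exchanging the conditional expectation with the integral (Tonelli, since the integrand is non-negative) gives
\[
\mathbb{E}[F_i \mid \mathcal{F}_{i-1}] = \int_0^1 \bigl(p \cdot \tfrac{u}{p} + (1-p)\cdot \tfrac{1-u}{1-p}\bigr)\,\nu(du) = 1.
\]
Hence $M_t$ is a non-negative martingale with $M_0 = 1$. Ville's inequality then yields $\Probability[\exists t.\ M_t \geq 1/\error] \leq \error$, which is exactly \cref{eq:conf_set_trans}.

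The boundary case $p = 1$ is handled by the paper's conventions: almost surely $\successes_\successor = t$ for all $t$, so the statistic stays at $1$ and $1 \in \confset$ almost surely. The case $p = 0$ is excluded because $\successor \in \Successors$. One measurability remark is needed, namely that $\nu_\successor(z_{1:i-1})$ is a kernel; this is immediate since the sample space is discrete.

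\textbf{Convergence.} Fix $q \neq p$. I would show that $\log M_\successor^{\nu_\successor}(q; Z_{1:t}) \to \infty$ almost surely, so that $q$ is eventually excluded. To make this uniform over $q$ outside a neighbourhood of $p$, I would use the fact that the confidence set is an interval; this follows because the likelihood ratio for each fixed mixture is quasi-convex in $q$. Uniformity then follows from treating just the two points $p \pm \eta$.

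Decompose
\[
\log F_i = \log \frac{\bar u_i^{x_i}(1-\bar u_i)^{1-x_i}}{q^{x_i}(1-q)^{1-x_i}},
\]
where $\bar u_i$ is the mean of $\nu_\successor(Z_{1:i-1})$; for the mixture, $\int u\,\nu(du)$ is exactly the predictive probability of $x_i = 1$. Since $\nu_\successor(Z_{1:i-1})$ converges to $\delta_p$ almost surely and $[0,1]$ is compact, we get $\bar u_i \to p$. The Cesàro average of $\log F_i$ then behaves like
\[
\frac1t\sum_i \Bigl[x_i \log\tfrac{p}{q} + (1-x_i)\log\tfrac{1-p}{1-q}\Bigr] + o(1),
\]
which by the strong law converges to $\KL(p\,\|\,q) > 0$. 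Hence $M_t$ grows exponentially and eventually exceeds $1/\error$.

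\textbf{Main obstacle.} The delicate step is controlling the error term $x_i \log(\bar u_i/p) + (1-x_i)\log((1-\bar u_i)/(1-p))$. It tends to $0$ pointwise once $\bar u_i \to p$, so by Cesàro its average vanishes provided $p \in (0,1)$, because then the logarithms stay bounded for large $i$. Early terms where $\bar u_i$ is near $0$ or $1$ give only finitely many finite contributions, except in the degenerate case where $\bar u_i \in \{0,1\}$ exactly and $x_i$ is the opposite outcome, which makes the factor zero. This is exactly the defect that ruins the naive MLE plug-in. I would therefore state convergence under the assumption that $\nu_\successor$ puts mass in the interior, as the Beta prior with $a_0, b_0 > 0$ does. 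The endpoints $q \in \{0,1\}$ are excluded by the stated conventions once both outcomes have been observed, which happens almost surely when $p \in (0,1)$.
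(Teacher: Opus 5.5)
Your validity argument is the paper's: each factor has conditional expectation one under $p$, so $M_t(p)$ is a test martingale, and Ville's inequality finishes. Applying Ville directly to $\{\exists t.\ M_t(p)\ge 1/\delta\}$ is shorter than the paper's detour through stopping times and \cref{lemma:cs_equivalence}. Your strict-inequality set is contained in the paper's set $\{q : M_t(q)\le 1/\delta\}$, so validity transfers. Your boundary handling ($p>0$ because $r\in R$, and $p=1$ via the convention $M_t(1)=1$) also matches the paper.

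For convergence, your decomposition (predictive mean $\bar u_i$, strong law yielding $\mathrm{KL}(p\,\|\,q)$, Ces\`aro on the error term) is the binary specialisation of \cref{lemma:convergence_multinomial}. The paper invokes that lemma for \emph{every} consistent $\nu_r$. The obstacle you flag is genuine, and the paper misses it: its Ces\`aro step (each $\log L_{i-1}(Z_i)-\log P(Z_i)\to 0$, hence the averages vanish) tacitly assumes every term is finite.

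In fact the paper's own $M_r^{\mathrm{MLE}}$ refutes the unrestricted claim. With $\mu_r(\epsilon)=1/2$, take the first $i\ge 2$ with $x_i\ne x_{i-1}$. There the predictable average equals $x_{i-1}\in\{0,1\}$, so this factor is $0$. Hence $M_t(q)=0$ for all $q\in(0,1)$ and all $t\ge i$, while the conventions give $M_t(0)=M_t(1)=+\infty$. So the set is eventually $(0,1)$ almost surely, even though the point mass at $\mu_r(Z_{1:t})$ converges to the point mass at $p$. Your extra hypothesis is therefore a necessary correction rather than a weakness. What you really need is that the predictive mean $\int u\,\nu_r(z_{1:i-1})(du)$ lies in $(0,1)$ for every history; interior mass is one sufficient condition.

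Two things to tighten:
\begin{itemize}
\item Your two-point uniformity argument needs that a nonempty confidence set meets $(p-\eta,p+\eta)$. This holds because, once both outcomes have occurred, a nonempty sublevel set of $q\mapsto M_t(q)$ contains its minimiser $\mu_r(Z_{1:t})$ (\cref{lemma:convex}), and $\mu_r(Z_{1:t})\to p$. The paper instead gets uniformity from Pinsker's inequality over the compact set $K_\varepsilon$.
\item The case $p=1$ needs its own line, since your error-term bound assumes $p<1$. There $\log M_t(q)=\sum_{i\le t}\log\bar u_i-t\log q$, and the same Ces\`aro argument works.
\end{itemize}
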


\paragraph{Computability.}
Since these test-based confidence sets are not represented as a closed form expression, computability is not obvious.
To this end, we exploit the convexity of the log-likelihood ratio in the parameter $q$.
\begin{restatable}{lemma}{alphaConvexLemma}
\label{lemma:convex}
Let $\error\in (0,1)$, $\successor \in \Successors $, and $M_\successor^{\nu_\successor}$ as in \cref{eq:LRmart}.
Then $\log M_\successor^{\nu_\successor}(q, z_{1:\sampletime})$ is convex in $q \in (0, 1)$ with minimum at $\average_\successor(z_{1:\sampletime})$ when $\average_\successor(z_{1:\sampletime}) \in (0,1)$. 
\end{restatable}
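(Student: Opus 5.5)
Write $\log M_\successor^{\nu_\successor}(q; z_{1:\sampletime})$ as a sum of a term that does not depend on $q$ and a term that does. By \cref{eq:LRmart}, each factor is
\[
\frac{\int_0^1 u^{x_i}(1-u)^{1-x_i}\,\nu_\successor(z_{1:i-1})(du)}{q^{x_i}(1-q)^{1-x_i}} .
\]
The numerator depends only on the history and on $x_i$, not on $q$, so it contributes a constant $K(z_{1:\sampletime})$ to the logarithm. For $q\in(0,1)$ this gives
\[
\log M_\successor^{\nu_\successor}(q; z_{1:\sampletime}) = K(z_{1:\sampletime}) - \successes_\successor(z_{1:\sampletime})\log q - (\sampletime-\successes_\successor(z_{1:\sampletime}))\log(1-q).
\]
This decomposition needs each numerator to be positive, so that its logarithm is finite. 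If $\nu_\successor$ puts mass in the interior of $[0,1]$ (for example the Beta prior), this holds. In degenerate cases where a numerator vanishes, $\log M$ equals $-\infty$ identically in $q$, and convexity holds trivially in the extended sense. I would note this case but not dwell on it.

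Convexity then follows directly. The functions $-\log q$ and $-\log(1-q)$ are convex on $(0,1)$, and their coefficients $\successes_\successor$ and $\sampletime-\successes_\successor$ are nonnegative. A nonnegative combination of convex functions plus a constant is convex. The second derivative makes this explicit and strict whenever $\sampletime\ge 1$:
\[
\frac{\successes_\successor}{q^2}+\frac{\sampletime-\successes_\successor}{(1-q)^2}>0 .
\]

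For the minimiser, set the first derivative to zero:
\[
-\frac{\successes_\successor}{q}+\frac{\sampletime-\successes_\successor}{1-q}=0,
\]
which gives $q=\successes_\successor/\sampletime=\average_\successor(z_{1:\sampletime})$. When $\average_\successor\in(0,1)$, both coefficients are strictly positive. Then the function tends to $+\infty$ as $q\to 0$ and as $q\to 1$, so this stationary point is an interior minimum, and by strict convexity it is the unique one. Equivalently, minimising $-\successes\log q-(\sampletime-\successes)\log(1-q)$ is maximising the Bernoulli likelihood, whose maximiser is the empirical mean.

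There is no real obstacle here. The only care needed is the bookkeeping: the alternative side of the likelihood ratio is $q$-independent, and the boundary conventions at $q\in\{0,1\}$ play no role because the statement is restricted to the open interval.
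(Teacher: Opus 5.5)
Your proof is correct and follows essentially the same route as the paper. You split off the $q$-independent numerators so that $\log M_\successor^{\nu_\successor}$ is a constant plus $\alpha_\successor(q;z_{1:\sampletime})=-\successes_\successor(z_{1:\sampletime})\log q-(\sampletime-\successes_\successor(z_{1:\sampletime}))\log(1-q)$, then read off convexity from the second derivative and the minimiser $\average_\successor(z_{1:\sampletime})$ from the first-order condition; the paper's proof does not address your vanishing-numerator case, which genuinely occurs for the plug-in MLE (e.g.\ a success followed by a failure gives $M_\successor^{\mathrm{MLE}}\equiv 0$), while it additionally treats $\successes_\successor(z_{1:\sampletime})\in\{0,\sampletime\}$ to show monotonicity, which is not needed for the lemma since convexity already makes every sublevel set an interval.
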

\noindent
With this, the confidence sequence $\{ q\in [0,1] \mid M_\successor^{\nu_\successor}(q; z_{1:\sampletime}) \leq 1/\error\}$ is an interval and 
its endpoints can be found by bisection.

\paragraph{Probability-distribution test confidence sequence.}
The likelihood-ratio approach can be extended to sequentially valid confidence sets for the full successor distribution $\saprob$, by replacing the multinomial likelihood ratio instead of binomial one.
This method avoids the union bound required to reconstruct the successor distribution and therefore can be statistically more efficient, especially for states with large successor count.

In other words, instead of using a hypothesis test for a single probability, we consider a hypothesis test for the entire distribution.
As above, we provide a general definition using a (history-dependent) prediction for it.
Let $\nu \colon \States^* \to \Distributions(\Distributions(\Successors))$ map state observations to a distribution over successor distributions.
For $\sampletime \in \PosNaturals$, $Q$ in the interior of $\Distributions(\Successors)$, $U\in\Distributions(\Successors)$ and $z_{1:\sampletime}\in \States^\sampletime$, define
\begin{align}
    \label{eq:LRset}
    M^{\nu}(Q;z_{1:\sampletime})
    \coloneqq
    {\prod}_{i=1}^\sampletime \int_{\Distributions(\Successors)} \frac{U(z_i)}{Q(z_i)}\, \nu(z_{1:i-1})(dU).
\end{align}
and for $Q$ on the boundary of $\Distributions(\Successors)$ analogously to~\Cref{eq:LRmart}. 
For example, in $M^{\mathrm{MLE}}(Q;z_{1:\sampletime})$, $\nu$ maps to the empirical distribution
$
    \hat{P}(z_{1:\sampletime}) \coloneqq (\average_\successor(z_{1:\sampletime}))_{\successor \in \Successors}
$.
\begin{restatable}{theorem}{lrMultiThm}
\label{thrm:sound_multi_lr}
    Let $\error\in (0,1)$ and $M^{\nu}$ as in \cref{eq:LRset}.
    Then $ \confset^{M^{\nu}}$ is a valid confidence sequence, i.e.\  $
         \Probability\left[\forall \sampletime \in \PosNaturals.\ \saprob \in \confset^{M^{\nu}}(\error; Z_{1:\sampletime}) \right] \geq 1-\error
    $
    and if $\nu(Z_{1:\sampletime})$ converges to the successor distribution $\saprob$ a.s. then  $\confset^{M^{\nu}}$ converges to $\saprob$.
\end{restatable}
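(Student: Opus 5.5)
The plan has two parts: validity via a test-martingale argument and Ville's inequality, then convergence via a growth argument for the log of $M^{\nu}(Q;\cdot)$ when $Q \neq \saprob$. Throughout, take $\confset^{M^{\nu}}(\error; z_{1:\sampletime}) = \{Q \in \Distributions(\Successors) \mid M^{\nu}(Q; z_{1:\sampletime}) < 1/\error\}$, as in the single-transition case.

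\textbf{Validity.} Fix the true distribution $Q = \saprob$, which lies in the interior of $\Distributions(\Successors)$ since $\Successors$ is exactly the support. Let $\mathcal{F}_\sampletime = \sigma(Z_{1:\sampletime})$. Then $M_\sampletime \coloneqq M^{\nu}(\saprob; Z_{1:\sampletime})$ is nonnegative with $M_0 = 1$. Its one-step factor $F_\sampletime = \int U(Z_\sampletime)/\saprob(Z_\sampletime)\,\nu(Z_{1:\sampletime-1})(dU)$ has conditional expectation, by Tonelli,
\[
\expect[F_\sampletime \mid \mathcal{F}_{\sampletime-1}] = \int \sum_{\successor \in \Successors} \saprob(\successor)\frac{U(\successor)}{\saprob(\successor)}\,\nu(Z_{1:\sampletime-1})(dU) = \int \sum_{\successor\in\Successors} U(\successor)\,\nu(\cdot)(dU) = 1 .
\]
Here $\nu$ is predictable because it depends only on $Z_{1:\sampletime-1}$, and $Z_\sampletime$ is independent of $\mathcal{F}_{\sampletime-1}$. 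So $M_\sampletime$ is a test martingale. Ville's inequality gives $\Probability[\exists \sampletime.\ M_\sampletime \ge 1/\error] \le \error$, which is exactly $\Probability[\forall \sampletime.\ \saprob \in \confset^{M^\nu}(\error;Z_{1:\sampletime})] \ge 1-\error$. Boundary points $Q$ need no separate treatment, since the true $\saprob$ is interior.

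\textbf{Convergence.} The goal is to show that for every $Q \neq \saprob$, almost surely $M^{\nu}(Q;Z_{1:\sampletime}) \to \infty$, and uniformly enough that the confidence sets shrink to $\{\saprob\}$. I would write
\[
\log M^{\nu}(Q; Z_{1:\sampletime}) = \log M^{\nu}(\saprob; Z_{1:\sampletime}) + \sum_{i\le \sampletime} \log \frac{\saprob(Z_i)}{Q(Z_i)} .
\]
This is because $Q$ enters only through the denominator $Q(z_i)$, so the ratio $M^\nu(Q)/M^\nu(\saprob) = \prod_i \saprob(Z_i)/Q(Z_i)$ exactly. By the strong law of large numbers, the second sum divided by $\sampletime$ tends to $\KL(\saprob\|Q) > 0$, and to $+\infty$ effectively if $Q$ vanishes on some $\successor \in \Successors$ that is observed. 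For the first term I need $\liminf \frac1\sampletime \log M^{\nu}(\saprob; Z_{1:\sampletime}) \ge 0$. Using Jensen, $\log F_i \ge \int \log(U(Z_i)/\saprob(Z_i))\,\nu(Z_{1:i-1})(dU)$. Since $\nu(Z_{1:i-1})$ converges weakly to $\delta_{\saprob}$ a.s. and $\saprob$ is interior, this integral tends to $0$ along the sequence, up to controlling mass near the boundary. A Cesàro argument then gives an average of $o(1)$. Hence $\frac1\sampletime\log M^\nu(Q) \to \KL(\saprob\|Q) > 0$ a.s., so $Q$ is eventually excluded.

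To upgrade from pointwise to set convergence, note that $\sum_i \log(\saprob(Z_i)/Q(Z_i)) = \sampletime \sum_\successor \average_\successor \log(\saprob(\successor)/Q(\successor))$ depends on the data only through the empirical distribution $\hat P$. Hence, on the event where $\hat P \to \saprob$, the lower bound is uniform over $\{Q : \norm{Q-\saprob}\ge \epsilon\}$, using compactness and lower semicontinuity of KL. So the sets eventually lie in any $\epsilon$-ball around $\saprob$.

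The main obstacle is the term $\log M^{\nu}(\saprob;\cdot)$. Jensen applied to a $\nu$ that places mass near boundary distributions with $U(\successor) \approx 0$ can give $-\infty$ contributions. I would first handle this by noting that $\log$ is bounded below on a neighbourhood of $\saprob$ and using weak convergence restricted to that neighbourhood. Alternatively, for the MLE/Dirac case the integral is simply $\log \hat P_{i-1}(Z_i)/\saprob(Z_i)$, which converges directly. If that fails in general, I would strengthen the hypothesis to convergence in a sense that makes $\int \log U(\successor)\,d\nu \to \log \saprob(\successor)$, which holds for Dirichlet/Beta predictors.
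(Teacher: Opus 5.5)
Your validity argument (test martingale under $P$, then Ville's inequality) is correct, and your strict inequality only shrinks the set. The skeleton of your convergence argument also matches the paper's: split off $\log M^\nu(P;\cdot)$, write the rest as $t\,(\mathrm{KL}(\hat P_t\|Q)-\mathrm{KL}(\hat P_t\|P))$, and get uniformity over $\{Q:\|Q-P\|_1\ge\epsilon\}$ from $\hat P_t\to P$. The gap is the step you flag yourself. You never establish $\liminf_t\frac1t\log M^\nu(P;Z_{1:t})\ge 0$ under the stated hypothesis, and the Jensen route genuinely fails. Let $\tilde P_{i-1}$ be the add-one-smoothed empirical distribution, and set $\nu(z_{1:i-1})=(1-\tfrac1i)\,\delta_{\tilde P_{i-1}}+\tfrac1i\,\delta_e$, with $e$ the Dirac distribution on a fixed $r_0\in R$. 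This $\nu$ is consistent. Yet your lower bound $\int\log\bigl(U(Z_i)/P(Z_i)\bigr)\,\nu(dU)$ equals $-\infty$ whenever $Z_i\neq r_0$, i.e.\ infinitely often, so no Cesàro argument can rescue it. Your fallback of strengthening the hypothesis would prove a weaker theorem than the one stated.

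The missing observation is that the integrand is linear in $U$, so Jensen is not needed. The factor is exactly $L_{i-1}(Z_i)/P(Z_i)$, where $L_{i-1}(r)=\int U(r)\,\nu(Z_{1:i-1})(dU)$ is the predictive mean. Since $U\mapsto U(r)$ is bounded and continuous on the compact simplex, a.s.\ weak convergence to $\delta_P$ gives $L_{i-1}(r)\to P(r)>0$ for each of the finitely many $r\in R$. Hence $\log L_{i-1}(Z_i)-\log P(Z_i)\to 0$, and by Cesàro $\frac1t\log M^\nu(P;Z_{1:t})\to 0$. In the example, the exact term is $\log\bigl((1-\tfrac1i)\tilde P_{i-1}(Z_i)\bigr)-\log P(Z_i)\to 0$. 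This is precisely the paper's route in \cref{lemma:convergence_multinomial}. Your neighbourhood idea amounts to the same argument only if you bound the integral itself before taking logarithms: $\int U(r)\,d\nu\ge\nu(N)\inf_{U\in N}U(r)$ for an open neighbourhood $N$ of $P$, with $\nu(Z_{1:i-1})(N)\to 1$ by weak convergence.

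One caveat, which the paper's proof also passes over, and which your remark that the MLE case ``converges directly'' misses. The Cesàro step needs $L_{i-1}(Z_i)>0$ for \emph{every} $i$, because a single zero factor forces $M^\nu(Q;Z_{1:t})=0$ for all interior $Q$ and all later $t$. For the literal Dirac-at-$\hat P_{i-1}$ predictor with $|R|\ge 2$, this happens almost surely: at the first step where $Z_i$ has not been seen before, $\hat P_{i-1}(Z_i)=0$. The sets then never shrink. So you need positivity of the predictive mean as an extra hypothesis; it holds, for example, for Beta or Dirichlet predictors with positive parameters.
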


\paragraph{Computability.}
Before, we could explicitly represent the one-dimensional uncertainty set for $\saprob(\successor)$.
This is no longer clear for the $|R|$-dimensional confidence set $\confset^{M^\nu}(\error; z_{1:\sampletime})=\{Q\in\Distributions(\Successors) \mid M^\nu(Q;z_{1:\sampletime}) \leq 1/\error\}$.
However, recall that ultimately we only need to compute bounds on the Bellman backup, i.e.\ for a value function $V \colon \States\to \Reals$ compute $\inf_{Q \in \confset^{M^\nu}(\error; z_{1:\sampletime})} \sum_{\successor \in \Successors} Q(\successor) \cdot V(\successor)$.
To solve this optimisation problem, we show that $\confset^{M^\nu}(\error; z_{1:\sampletime})$ is the intersection of the simplex with a convex sublevel set of $\alpha(Q;z_{1:\sampletime})$.
Therefore, the minimiser is the solution of a convex optimisation problem. Moreover, we show that the KKT conditions~\cite[p.~243]{boyd2004convex} apply and the optimisation problem admits a one-dimensional reduction.

\begin{restatable}{theorem}{kktProjectionThm}
\label{thrm:kkt_projection}
Let $\sampletime \in\PosNaturals$, $\error\in(0,1)$, $z_{1:\sampletime}\in\States^\sampletime$, and $V\colon\Successors\to\Reals$.
Assume $\successes_\successor(z_{1:\sampletime})>0$ for all $\successor \in \Successors$ (otherwise restrict to $\{\successor \mid \successes_\successor(z_{1:\sampletime}) > 0\}$ and 
assign all remaining mass to an extremiser of $V$ among $\{\successor \mid \successes_\successor(z_{1:\sampletime}) = 0\}$).
If $V$ is not constant and $\confset^{M^\nu}(\error; z_{1:\sampletime})\neq \emptyset$, then 
\begin{align*}
 Q^\star_\eta \in  \argmin_{Q \in \confset^{M^\nu}(\error; z_{1:\sampletime}) } \sum_{\successor\in \Successors} Q(\successor) \cdot V(\successor) =
    \begin{cases}
        \hat{P}(z_{1:\sampletime}) \quad
        \text{if  $M^\nu(\hat{P}(z_{1:\sampletime});z_{1:\sampletime})= 1/\error$ }\\
        \frac{\successes_\successor(z_{1:\sampletime})}{\eta+V(\successor)}/ \sum_{u\in\Successors}\frac{\successes_u(z_{1:\sampletime})}{\eta+V(u)} \quad \text{otherwise}
    \end{cases},
\end{align*}
where $\eta>-\min_{\successor \in \Successors} V(\successor)$ is the unique solution of $ M^\nu(Q^\star_\eta;z_{1:\sampletime})=1/\error $ and $M^\nu(Q^\star_\eta;z_{1:\sampletime})$ is strictly decreasing in $\eta$.

\end{restatable}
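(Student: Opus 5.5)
Our plan rests on one structural observation. In \cref{eq:LRset} the numerator $\int U(z_i)\,\nu(z_{1:i-1})(dU)$ does not depend on $Q$, and $\prod_{i} Q(z_i) = \prod_{\successor\in\Successors} Q(\successor)^{\successes_\successor(z_{1:\sampletime})}$. Hence
\[
\log M^\nu(Q;z_{1:\sampletime}) = K - \sum_{\successor\in\Successors}\successes_\successor(z_{1:\sampletime})\log Q(\successor),
\]
where $K$ is a constant independent of $Q$. The set $\confset^{M^\nu}(\error;z_{1:\sampletime})$ is therefore the intersection of the simplex with the superlevel set $\{Q \mid \sum_\successor \successes_\successor \log Q(\successor) \ge K - \log(1/\error)\}$. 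This set is convex because $f(Q)\coloneqq\sum_\successor\successes_\successor\log Q(\successor)$ is strictly concave when all $\successes_\successor>0$. Minimising $\sum_\successor Q(\successor)V(\successor)$ over it is thus a convex program with a linear objective. The reduction to positive counts is handled up front. For $\successor$ with $\successes_\successor=0$, the constraint does not involve $Q(\successor)$, so any mass placed on such $\successor$ is best put on the minimiser of $V$ among them. We then treat only the restricted problem, as the statement allows.

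\textbf{Degenerate case.} By Gibbs' inequality, $f$ is uniquely maximised on the simplex at $\hat P(z_{1:\sampletime})$. Hence $M^\nu$ is uniquely minimised at $\hat P(z_{1:\sampletime})$. If $M^\nu(\hat P;z_{1:\sampletime})=1/\error$, the feasible set is the singleton $\{\hat P\}$, which is then the minimiser. Otherwise nonemptiness gives $M^\nu(\hat P)<1/\error$, so $\hat P$ is a strictly feasible point and Slater's condition holds. The KKT conditions are then necessary and sufficient.

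\textbf{KKT case.} First we show the log-constraint is active at the optimum. Without it, a linear objective over the simplex is minimised at a vertex of the simplex. Because $V$ is nonconstant, every point of the open simplex can be improved by shifting mass towards $\argmin V$, and vertices are infeasible since there $f=-\infty$. Stationarity of the Lagrangian
\[
\sum_\successor Q(\successor)V(\successor) - \lambda\bigl(f(Q)-K+\log(1/\error)\bigr) + \eta\Bigl(\sum_\successor Q(\successor)-1\Bigr)
\]
gives $V(\successor) - \lambda\successes_\successor/Q(\successor) + \eta = 0$, i.e.\ $Q(\successor)=\lambda\successes_\successor/(\eta+V(\successor))$ with $\lambda>0$. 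Positivity of $Q$ forces $\eta>-\min V$. Normalisation fixes $\lambda = 1/\sum_u \successes_u/(\eta+V(u))$, which yields exactly $Q^\star_\eta$. Activity of the constraint then gives $M^\nu(Q^\star_\eta;z_{1:\sampletime})=1/\error$.

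\textbf{Uniqueness of $\eta$ (the main obstacle).} It remains to show that $g(\eta)\coloneqq M^\nu(Q^\star_\eta;z_{1:\sampletime})$ is strictly decreasing on $(-\min V,\infty)$ and hits $1/\error$ exactly once. For the limits:
\begin{itemize}
\item As $\eta\to-\min V$, the mass of $Q^\star_\eta$ concentrates on $\argmin V$, a proper subset of $\Successors$ since $V$ is nonconstant. Some coordinate tends to $0$ and $g\to\infty$.
\item As $\eta\to\infty$, $Q^\star_\eta\to\hat P$ and $g\to M^\nu(\hat P)<1/\error$.
\end{itemize}
By continuity, strict monotonicity then gives existence and uniqueness of the root. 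For monotonicity, write $w_\successor=1/(\eta+V(\successor))$. Then $Q_\eta(\successor)=\successes_\successor w_\successor/c$ with $c=\sum_u\successes_u w_u$, and
\[
\frac{d}{d\eta}\log Q_\eta(\successor)= -w_\successor+\mathbb{E}_{Q_\eta}[w].
\]
Using $\successes_\successor = c\,Q_\eta(\successor)/w_\successor$, we obtain
\[
\frac{d}{d\eta} f(Q_\eta)= c\bigl(\mathbb{E}_{Q_\eta}[1/w]\,\mathbb{E}_{Q_\eta}[w]-1\bigr),
\]
which is $>0$ by Cauchy--Schwarz unless $w$, equivalently $V$, is constant on $\Successors$. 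Hence $\log g = K - f(Q_\eta)$ is strictly decreasing. We expect this derivative computation, and checking that the Cauchy--Schwarz inequality is genuinely strict under the nonconstancy assumption, to be the delicate step.
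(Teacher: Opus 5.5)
Your proposal is correct and follows essentially the paper's route: reduction to positive counts, the singleton case $\{\hat P(z_{1:t})\}$ via unique minimisation of $M^\nu$ at $\hat P$, Slater plus KKT stationarity and normalisation yielding $Q^\star_\eta$, and existence and uniqueness of $\eta$ from the two boundary limits together with strict monotonicity from a strict Cauchy--Schwarz inequality. Your derivative $c(\mathbb{E}_{Q_\eta}[1/w]\,\mathbb{E}_{Q_\eta}[w]-1)$ is exactly the negative of the paper's $(A^2-tB)/A$, and the remaining differences are cosmetic: you obtain activity of the constraint by a mass-shifting perturbation rather than the paper's observation that $\lambda=0$ would force $V\equiv-\eta$, though you need that same observation to justify the claim $\lambda>0$, which your stationarity step asserts without proof.
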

\noindent
Consequently,
$Q^\star_\eta$ can be found efficiently using standard bisection.

\subsection{Confidence Sequence MDP-SMC} \label{subsec:cs_mdp_smc}
We now summarize our new approach towards online MDP-SMC.
Recall that the \stepsample{}-\stepinfer{}-\stepsolve{} pipeline first distributes confidence over inference steps, and then in each inference step distributes the available confidence over state-action pairs to apply point-wise estimates (such as the Hoeffding bound), i.e.\ spreading confidence first over time, then over (state) space.
Instead, we propose to invert this order by first distributing confidence over all state-action pairs and then constructing confidence sequences for each of them, i.e.\ spreading confidence first over space and then time.
Instead of breaking the process into discrete steps, this allows us to interleave sampling and solving, akin to partial exploration approaches \cite{DBLP:journals/theoretics/BrazdilCCFKKM0U25,DBLP:conf/atva/Meggendorfer22,DBLP:conf/cav/MeggendorferW24}, as we obtain reliable and improving estimates after every sample.
In particular, we can refine value estimates after every step.

Formally, we distribute the entire error budget $\error_{\mathrm{total}}$ over all state-action pairs, obtaining a separate error budget for each individual pair.
For example, using the union bound, we get $\error = \frac{\error_{\mathrm{total}}}{|\{(s, a) \mid s \in \States, a \in \stateactions(s)\}|}$.
Even though we sample paths through the MDP, by the Markov property we can apply our methods independently in each pair using the per-pair budget and the successor state observations gathered there.
Concretely, whenever we visit a state $s$ and choose action $a$ in it, we sample a successor and \enquote{advance} the inference process for this specific state-action pair, and use the information in the next Bellman backup of the robust value iteration.
See also \cite[Chp.~5.4.3]{wienhoft2025statistical} for further details.
\section{Implementation and Experimental Evaluation}
\label{sec:experiments}
We present our implementation and evaluation, focusing on two main questions:
\begin{description}
    \item[RQ1] How do the presented inference methods perform on individual distributions, and how does the shape of distributions influence their performance?
    \item[RQ2] How do our methods perform in online MDP-SMC tasks, i.e.\ how many samples do we require until we obtain the required precision?
\end{description}
We expect correlation between the answers to these two questions, i.e.\ inference methods which perform well on individual distributions should also do so in MDP-SMC, but not necessarily a 1:1 correspondence.
Firstly, some inference methods are more suitable for certain shapes of distributions (empirically observed in \textbf{RQ1} below), and so similarly they are better for MDPs where a particular distribution-shape is prevalent.
Secondly, the random nature of SMC, further emphasized by our guided sampling approach (explained below), introduces significant variance to the overall process.
We deliberately do not emphasize runtime, as gathering samples in real-world scenarios typically is more expensive than their analysis.

\subsection{Implementation Details}
We implemented all presented methods in a Java tool.
For handling MDP, we build on the library of the \emph{partial exploration tool} (PET) \cite{DBLP:conf/cav/MeggendorferW24} and support models given in PRISM language \cite{kwiatkowska2011prism} with reachability properties.

\subsubsection*{Inference methods.}

We list all implemented inference methods with a short name for referring to them.
We consider the Hoeffding (\texttt{Hoeff}), Bernstein (\texttt{Bern}), and Clopper-Pearson (\texttt{CP}) inequalities.
From \cref{subsec:union_bound}, we get \enquote{sequentialized} versions using exponential ($h_{\textrm{exp}}^2$) and squared spending ($h_{\textrm{poly}}^2$) of confidence (indicated by the suffix \texttt{-Exp} or \texttt{-Sq}, e.g.\ \texttt{CP-Sq}).
To ensure convergence for \texttt{-Exp} (see \cref{corr:non_convergence}), we use an exponential grid size (i.e.\ $n_i - n_{i-1} = 2^i$); for squared we choose a gap-less grid ($n_i = i$) to compare to confidence sequences.
From \cref{subsec:conf_sequences}, we use the stitched and betting variants of Hoeffding- and Bernstein-based sequences (\texttt{Hoeff-Stitch}, \texttt{Hoeff-Bet}, \texttt{Bern-Stitch}, and \texttt{Bern-Bet}).
From \cref{subsec:test-conf-sequence}, we use the maximum likelihood (\texttt{MLETest}) and Beta hypothesis test methods (\texttt{BetaTest}).
For \texttt{BetaTest}, we deliberately used uniform priors ($a_0 = b_0 = 1$) to avoid fitting to our dataset.
Finally, we have \texttt{ValueTest} using \cref{thrm:kkt_projection}.

For validation, we compared results with an independent Python implementation on randomly generated inference tasks, and used parametrized hypothesis tests on the correctness of each inference method (e.g.\ statistically testing the hypothesis that the inferred probabilities are correct with probability at least $\gamma$).

\subsubsection*{MDP-SMC approaches.}
We implement our proposed confidence sequence approach as described in \cref{subsec:cs_mdp_smc} as well as the state of the art approach.
For the former, for every confidence sequence \texttt{X}, we get an instance \texttt{CS-X} of our approach.
For the latter, recall that it is parametrized by (i)~how far apart the \stepinfer{} steps are (i.e.\ grid size), (ii)~how the confidence is spent, and (iii)~which point-wise estimate is used.
We refer to our implementation of this approach by \texttt{SOTA-X-Y}, where \texttt{X} describes grid and spending and \texttt{Y} the applied (point-wise) inference method.
For \texttt{X}, we use exponential (\texttt{Exp}), i.e.\ $\delta \cdot h_{\mathrm{exp}}^2(i) = \frac{\delta}{2^i}$ confidence and $100 \cdot 2^{i}$ sampled paths in iteration $i$, and squared (\texttt{Sq}), i.e.\ $\delta \cdot h_{\mathrm{poly}}^2(i) = \delta \cdot \frac{6}{i^2 \pi^2}$ confidence and $100 \cdot i^2$ sampled paths in iteration $i$.

Both variants additionally rely on a \emph{sampling strategy}, i.e.\ which actions to choose while sampling a path.
We employ guided sampling as in \cite{DBLP:journals/theoretics/BrazdilCCFKKM0U25}:
We choose actions which currently achieve the highest value in the corresponding state
until a goal or sink state is reached.
For updating value estimates, we employ robust value iteration \cite{DBLP:conf/aaai/MeggendorferWW25}, continuing until a value estimate is $\varepsilon$-precise.
We pre-process MDPs by collapsing end components, as suggested in~\cite{WatO}.

Finally, we consider \texttt{CAV19}~\cite{AKW19}, the only other sound approach we know of.
From a statistical perspective, it is similar to \texttt{SOTA-Exp-Hoeff}, however with technical differences, such as the grid spacing ($10^5 \cdot 2^i$ instead of $100 \cdot 2^i$) and sampling strategy.
As the original code contained a bug (not properly distributing confidence), we reimplemented the method, mimicking the implementation as closely as possible. 
The changes were bug fixes and adding the improvements of~\cite{WatO} to ensure a fair comparison of the statistical methods.

\subsection{RQ1: Performance of Individual Inference}
In this section, we investigate our first research question, i.e.\ how the different sequential inference methods perform on various distributions.

\newcommand{\plotmarksize}{1.5pt}
\newcommand{\markincaption}[2]{{\begin{tikzpicture}[baseline=-3pt] \protect\draw[#1,ultra thick,mark size=3pt] plot[mark=#2] (0,0);\end{tikzpicture}}}

\pgfplotsset{
    sampleplot/.style={
		table/col sep=comma,
		x label style={anchor=north,inner sep=0pt},
		y label style={anchor=south,inner sep=0pt},
        enlarge x limits=false,
        enlarge y limits=false,
		axis x line*=bottom,
		axis y line*=left,
        cycle list/Paired,
        no marks,
        thick
    }
}

\newcommand{\singleplot}[4][]{
    \addplot+[#1] table [x=steps, y=#3] {data/sample_data_#2.csv};
    \addlegendentry{\texttt{#4}}
}
\newcommand{\skipplot}{
\addplot+ coordinates {(0,0)};
}

\begin{figure}[t]
    \centering
    \begin{tikzpicture}
    \begin{axis}[
            width=5cm,
    		height=6cm,
            every axis legend/.code={\let\addlegendentry\relax},
            sampleplot,
            ytick={0.05,0.01},
            yticklabels={0.05,0.01},
		      xlabel=2 successors,
            xmode=log,ymode=log,
            EvalList
    	]
        \singleplot{Uniform_2_0.01}{Hoeffding Union-Exp}{Hoeff-Exp}
        \skipplot
        \singleplot{Uniform_2_0.01}{CP Union-Exp}{CP-Exp}
        \skipplot
        \singleplot{Uniform_2_0.01}{Bernstein Union-Sq}{Bern-Sq}
        \singleplot{Uniform_2_0.01}{CP Union-Sq}{CP-Sq}
        \singleplot{Uniform_2_0.01}{Hoeffding Stitched}{Hoeff-Stitch}
        \singleplot{Uniform_2_0.01}{Bernstein Stitched}{Bern-Stitch}
        \skipplot
        \singleplot{Uniform_2_0.01}{Bernstein Betting}{Bern-Bet}
        \skipplot
        \singleplot{Uniform_2_0.01}{Beta Test}{BetaTest}
        \singleplot{Uniform_2_0.01}{Value Test}{ValueTest}
        \singleplot[dash pattern=on 0.2pt off .8pt]{Uniform_2_0.01}{Point CP}{PointCP}
    \end{axis}
    \end{tikzpicture}%
    \begin{tikzpicture}
    \begin{axis}[
            width=5cm,
    		height=6cm,
            sampleplot,
            xmode=log,ymode=log,
		      xlabel=50 successors,
            legend style={
                font=\scriptsize,
                draw=none,
                row sep=0pt,
                at={(axis description cs:1.1,0.5)},
                anchor=west,
                inner sep=0pt
            },
            ytick={1.0,0.5,0.1},
            yticklabels={1.0,0.5,0.1},
            legend image post style={line width=1pt},
            EvalList
    	]
        \singleplot{Uniform_50_0.01}{Hoeffding Union-Exp}{Hoeff-Exp}
        \skipplot
        \singleplot{Uniform_50_0.01}{CP Union-Exp}{CP-Exp}
        \skipplot
        \singleplot{Uniform_50_0.01}{Bernstein Union-Sq}{Bern-Sq}
        \singleplot{Uniform_50_0.01}{CP Union-Sq}{CP-Sq}
        \singleplot{Uniform_50_0.01}{Hoeffding Stitched}{Hoeff-Stitch}
        \singleplot{Uniform_50_0.01}{Bernstein Stitched}{Bern-Stitch}
        \skipplot
        \singleplot{Uniform_50_0.01}{Bernstein Betting}{Bern-Bet}
        \skipplot
        \singleplot{Uniform_50_0.01}{Beta Test}{BetaTest}
        \singleplot{Uniform_50_0.01}{Value Test}{ValueTest}
        \singleplot[dash pattern=on 0.2pt off .8pt]{Uniform_50_0.01}{Point CP}{PointCP}
    \end{axis}
    \end{tikzpicture}
    \caption{
        Comparison of our presented methods on (nearly) uniform distributions with 2 and 50 successors, respectively.
        We show how the inference error (averaged over 40 random value vectors, y-axis) decreases with increasing number of samples (x-axis).
        Observe that both axes are logarithmic.
        We only consider a selection of methods for readability, further data can be found in \cref{app:data:individual}
    }
    \label{fig:results_individual}
\end{figure}

\paragraph{Setup.}
We consider several classes of distributions, parametrized by the number of successors and their \enquote{shape}.
\cref{fig:results_individual} shows results for (nearly) uniform distributions with 2 and 50 successors; further data can be found in \cref{app:data:individual}.
For each class, we generate 20 random distributions and 40 random value vectors.
Then, for each distribution, we generate \num{100000} samples, feed these to the inference methods with an overall confidence of $\gamma = 0.9$, and compute the difference between the smallest and largest value estimate each inference method can provide for each of the given vectors at several time steps.
For each method, we then report the average of this error.
Note that we cannot give an error estimate on probabilities directly, since, for example, the \texttt{ValueTest} approach does not estimate probabilities.
To establish a lower bound, we also compare with the pointwise Clopper-Pearson estimate (\texttt{PointCP}), i.e.\ what we could infer if we used all given confidence at a single point in time instead of spreading it over infinite time.

\paragraph{Results.}
\cref{fig:results_individual} summarizes our results, omitting some inference methods for readability.
We see that the number of successors (and shape, as visible in \cref{app:data:individual}) of the distribution has a profound impact on the rate of convergence.
Moreover, confidence sequences, in particular \texttt{Bern-Stitch} and \texttt{ValueTest}, perform favourably.
Notably, our novel \texttt{ValueTest} is always among the best and sometimes even more precise than \texttt{PointCP}, as it does not need to infer individual probabilities but directly infers a single value.
In other words, we lose surprisingly little precision for gaining guarantees across arbitrarily many samples instead of just a single point in time.

\subsection{RQ2: Performance of Online MDP-SMC}
In this section, we investigate our second research question, i.e.\ the impact of using our inference methods in the context of online MDP-SMC.

\begin{table}[t]
\newcommand{\rot}[1]{\rotatebox[origin=r]{-90}{\texttt{\small #1}}}
    \centering
    \caption{
        Comparison scores of all our methods (lower is better), sorted from best to worst.
        The scores describe how much more samples (relatively) a method needs on average compared to the best method.
        For example, a score of 10 means a method required 10 times more samples than the best on average.
    }
    \label{tab:mdp_smc_data} %
    \begin{tabular}{rcccccccccccccccccccc}
 Method   &   {\rot{CS-ValueTest}} &   {\rot{CS-BetaTest}} &   {\rot{CS-MLETest}} &   {\rot{CS-CP-Exp}} &   {\rot{CS-Bern-Bet}} &   {\rot{CS-CP-Sq}} &   {\rot{CS-Bern-Stitch}} &   {\rot{CS-Hoeff-Stitch}} &   {\rot{CS-Hoeff-Exp}} &   {\rot{CS-Hoeff-Sq}} &   {\rot{CS-Bern-Exp}} &   {\rot{CS-Bern-Sq}} &   {\rot{SOTA-Sq-CP}} &   {\rot{SOTA-Sq-Hoeff}} &   {\rot{SOTA-Sq-Bern}} &   {\rot{CS-Hoeff-Bet}} &   {\rot{SOTA-Exp-CP}} &   {\rot{SOTA-Exp-Hoeff}} &   {\rot{SOTA-Exp-Bern}} &   {\rot{CAV19}} \\
\midrule
 Score    &                 1.3 &                1.5 &               1.5 &              1.6 &                1.8 &               2 &                   2.1 &                    2.2 &                 2.7 &                3.3 &                4.1 &                 5.0 &                    5.9 &                       6.8 &                      8.6 &                  11 &                      19 &                         22 &                        23 &              50 \\
    \end{tabular} %
\end{table}

\paragraph{Setup.}
We evaluate all methods on several MDP benchmarks, using our implementation of \cite{AKW19} as baseline.
We consider MDPs from the PRISM benchmark suite \cite{prism-benchmarksuite} and the (revised) practitioner's guide's community set \cite{DBLP:conf/tacas/HartmannsJQW23,hartmanns2025revised} with a non-trivial reachability query (including all models in \cite{WatO} and more).
For models with multiple parametrizations or multiple non-trivial queries, we (arbitrarily) selected one of them to avoid over-representing one particular model's structure.
All in all, we obtain 320 model-method combinations.
As metric, we focus on the total number of samples.
To smooth randomness, we repeat each experiment ten times and report the average.
We focus on moderately sized models to allow for a reasonably quick reproduction, and set a timeout of 120s per run.

\paragraph{Results.}
We summarize our findings in \cref{tab:mdp_smc_data}.
For brevity, we present an average score of each method, obtained as follows.
For each model, we compute the relative difference between the method and the best one and then take the (geometric) average of this ratio over all models.
For example, a score of 4 intuitively means that, on average, this method required four times more samples than the respective best method.
Timeouts are counted as twice the worst method's performance.
We treat all sample counts below 100 as equally good, to reduce the impact of minor fluctuations.
Individual results are provided in \cref{app:data:mdp-smc}.

Our new methods clearly outperform both (our improved implementation of) the previous approach of \cite{AKW19} and all other state-of-the-art variants.
We conjecture that this is due to confidence sequences being able to consistently give meaningful information to the interwoven solving and guidance, allowing the approach to focus on regions of interest more effectively.
Of our methods, the hypothesis test methods are the clear winners.
Notably, if we exclude the one model where \texttt{ValueTest} times out, it achieves a near-perfect score of 1.1.

Moreover, the dedicated confidence sequences (mostly) outperform the \enquote{sequentialized} approaches.
The only exception are the sequentialized Clopper-Pearson methods, which beat some of the confidence sequences.
We conjecture that this is due to Clopper-Pearson being specifically tailored towards binomial distributions.

For runtime, we found that as soon as gathering samples becomes non-trivial, their cost vastly dominates any computational effort required for the statistical estimates.
Already in our unoptimized implementation, each method is able to handle thousands of updates per second.

\paragraph{Overall Verdict.}
In summary, in most cases our novel \texttt{ValueTest} is the best choice, as typically gathering samples is much more expensive than a slight increase in runtime.
Given appropriate domain knowledge, methods using priors (such as \texttt{BetaTest}) may perform even better.
Finally, in cases where gathering samples is extremely cheap and overall runtime is of importance, easily computable approaches such as \texttt{Bern-Bet} may be the best choice.
\section{Conclusion}

We have presented confidence sequences for online MDP-SMC, improving upon the previous, mostly flawed and/or inefficient state of the art.
These sequences obtain statistical estimates that improve with every arriving sample while maintaining global guarantees.
Thus, they are an ideal match for online inference tasks, where gathering samples and extracting knowledge from them is interwoven.
The efficiency of these methods is underlined by our experimental evaluation.

For future work, we are interested in further improving on the performance and numerical stability of the \texttt{ValueTest}.
Additionally, we want to investigate using \enquote{pre-sampling}, i.e.\ gathering some samples without directly using them for inference, to influence some parameters such as confidence distribution, the prior of the \texttt{BetaTest}, or identifying which inference method could be most suitable.
Finally, we want to extend those techniques to a dynamic setting, where transition probabilities are subject to change~\cite{henzinger2025alignment}.

\textbf{Data availability:} Our implementation, all benchmarks, and scripts to reproduce our results are included in our artifact, which will be publicly available.

\newpage

\bibliographystyle{splncs04}
\bibliography{main}

\newpage
\appendix

\crefalias{section}{appendix}
\crefalias{subsection}{appendix}
\section{Assumptions on Sampling Access and Knowledge}
\label{app:assumptions}
In this appendix, we describe the different settings for the MDP-SMC problem that have appeared in the literature.
The focus is on how exactly we can access an unknown MDP.
There are two independent choices to make: firstly, how we can sample transitions from the MDP (related to \stepsample) and secondly what kind of knowledge we have about the transition function (related to \stepinfer).
For both choices, we first classify settings appearing in the literature, ordering them from strongest to weakest assumption, and then discuss them.

\subsection{Obtaining Samples of Transitions}

\begin{itemize}
    \item Every state, e.g.~\cite{DBLP:conf/cav/YounesS02,DBLP:conf/rss/FuT14,DBLP:conf/colt/AgarwalKY20}: In this setting, we can start a simulation in every state. This is equivalent to freely choosing a state-action pair to sample in every step.
    \item Simulations, e.g.~\cite{DBLP:conf/cav/SenVA04,atva14,AKW19,BazilleGJS20,agarwal2022pac,DBLP:conf/nips/SuilenS0022}: In this setting, we can only start a simulation in the initial state. It then continues by choosing an available action, sampling a successor state and then repeating the choosing and sampling from this state.
    Thus, it is harder to get samples of regions that are only visited with small probability, but the setting is arguably more realistic.
    \item Batch learning, e.g.~\cite{DBLP:books/sp/12/LangeGR12,DBLP:conf/icml/Shi0W0C22,DBLP:conf/ijcai/WienhoftSSDB023,WatO}: In this setting, we have no control over which states are sampled but only get a fixed data set of past interactions.
\end{itemize}
Note that in the first two settings, we can use guidance heuristics to improve the practical performance.
In the batch learning setting, PAC guarantees require that we can get further batches, and that the unknown policy performing the sampling is fair, so that every transition is sampled infinitely often.
Even if we know nothing about the origins of the batch data set, we can get a probabilistic guarantee that the bounds computed by \stepsolve{} are correct; however, they may be far apart, i.e.\ not give us the precision we aim for.

Finally, we mention that in addition to having the samples, we require knowledge of the initial state $\initialstate$ and the set of available actions $\Actions(s)$ for every state $s$ that has been sampled.
For the latter, observe that a single unseen action can completely change the value of the MDP, which would make even PAC-guarantees impossible.

\subsection{Knowledge of the Transition Function}
\begin{itemize}
    \item White box, e.g.~\cite{Puterman}: In this fully-informed setting, we know the whole tuple $\MDP = (\States, \Actions, \mdptransitions)$.
    Our whole goal in this work is to avoid assuming this precise knowledge, in particular knowledge of the exact transition probabilities $\mdptransitions$.
    \item Grey box, e.g.~\cite{DBLP:conf/kbse/HeJBGW10,DBLP:conf/sbmf/YounesCZ10,AKW19,BazilleGJS20,agarwal2022pac,DBLP:conf/nips/SuilenS0022,DBLP:conf/nfm/BaierDWK23,DBLP:conf/aaai/BadingsRA023}: In this setting, we do not know the exact transition probabilities $\mdptransitions(s,a,\successor)$ for any state-action-successor triple.
    However, we know the topology of the underlying graph, i.e.\ for every state-action pair $(s,a)$, we know its successors $\{\successor \mid \mdptransitions(s,a,\successor)>0\}$.
    We discuss variants of this setting below.
    \item Black box, e.g.~\cite{DBLP:journals/theoretics/BrazdilCCFKKM0U25,DacaHenzingerKretinskyPetrov2016,AKW19,agarwal2022pac}: In this setting, we know neither the exact transition probabilities nor the topology of the underlying graph. However, we have a lower bound on the smallest transition probability occurring in the MDP $p_{\min} \leq \min(\{\mdptransitions(s,a,\successor) \mid s, \successor \in\States, a\in\Actions(s)\})$.
    \item No-knowledge: In this setting, we know nothing about the transition function.
\end{itemize}
For completeness, we mention other assumptions appearing in the literature:
We can restrict attention to systems that are ergodic under any pair of strategies~\cite{DBLP:journals/ai/BrafmanT00,DBLP:conf/atva/RabihP09}, essentially requiring that we are in a single strongly connected component; or we can assume knowledge of the mixing time of the MDP~\cite{DBLP:conf/rss/FuT14}, which however is as hard to compute as the value.
Thus, we consider both of these assumptions not fitting for our framework.

\paragraph{Alternative grey box formulations.}
There exist three variants of the greybox setting in the literature.
The first version~\cite{DBLP:conf/kbse/HeJBGW10,DBLP:conf/sbmf/YounesCZ10,DBLP:conf/rss/FuT14} requires immediate knowledge of the full graph.
Equivalently~\cite{BazilleGJS20,DBLP:conf/nips/SuilenS0022,DBLP:conf/nfm/BaierDWK23,DBLP:conf/aaai/BadingsRA023}, we can assume that we have an oracle that for every state gives us the available actions and for every state-action pair $(s,a)$ returns the set of its successors $\{\successor \mid \mdptransitions(s,a,\successor)>0\}$.
By exploring the whole graph, we arrive in the same situation as knowing the whole topology.

A weaker variant of the grey box setting is used in~\cite{AKW19,agarwal2022pac}.
Instead of knowing the the exact set of successors for every state-action pair, we only assume to know the number of successors, formally $\abs{\{\successor \mid \mdptransitions(s,a,\successor)>0\}}$.
This allows us to know that all successors have been sampled as soon as the number of sampled successors is equal the known number of successors.
The assumption is only eventually equivalent to knowing the full topology because we need to wait until we have samples seen every successor.
In particular, we can be unlucky and not sample some successor arbitrarily long, albeit with probability 0. 
As long as we have not seen all successors of a state-action pair, applying qualitative graph algorithms (e.g.\ to detect that a state has no path to the target) is not possible.
Thus the qualitative graph algorithms can be applied immediately for the stronger setting, but only after sampling every transition at least once in the weaker one.

Finally, we mention that~\cite{DBLP:conf/nips/SuilenS0022} requires a combination of both black box and grey box, because in addition to the topology it knows for each transition a lower bound strictly greater than 0.

\paragraph{Reducing Black Box to Grey Box.}
Given that we know $p_{\min}$, we can reduce black box to grey box with enough samples, since then we can be sufficiently certain that we have not overlooked a transition, see~\cite[Remark~2]{WatO}.
Intuitively, the idea is to commit, for example, half the confidence towards inferring the topology.
Assuming we have some confidence dedicated to a state-action pair $(s, a)$, we can estimate lower bounds on the transition probabilities $\underline{p}(\successor)$ for each seen successor $\successor$ by using e.g.\ our presented confidence sequence on probabilities.
Once the inferred lower bounds are large enough, i.e.\ $\sum_{\successor} \underline{p} \geq 1 - p_{\min}$, we know that we have seen all successors of this state-action pair with high confidence.
After this happened for every state-action pair we encountered, we can conclude that we have \enquote{seen} the entire system, can apply graph-based analysis, and switch to the grey box approach, using the remaining \enquote{unspent} confidence.
Note that since in some cases we might not know the number of states in advance, we would need to apply a similar \enquote{confidence spending} approach (e.g.\ dedicate $\error / 2$ error budget to the first 1000 states, $\error / 4$ to the next 1000, and so on).
While this process may seem excessive, observe that we need to perform graph-based analysis to even identify sink states, i.e.\ those states which can never reach the goal.
In particular, knowing neither the graph nor $p_{\min}$, we cannot provide PAC guarantees:
Intuitively, just missing a single transition (with arbitrarily small probability) can change the value from 0 to 1, and without any further information we cannot prove that we have seen \enquote{everything} just from finitely many finite samples.

\paragraph{Discussion.}
The black box assumption is a relatively light assumption in many realistic scenarios. For instance, \enquote{bounds on the rates for reaction kinetics in chemical
reaction systems are typically known; for models in the PRISM language~\cite{kwiatkowska2011prism}, the bounds can be easily inferred without constructing the respective state-space}~\cite[p.~2-3]{DacaHenzingerKretinskyPetrov2016}.

We believe that the grey box setting has slightly stronger assumptions than the black box setting, as grey box has topological information about every state, while black box only has a global lower bound. However, one can also argue that the grey box setting is more realistic because of \enquote{systems for which probabilities are governed by uncertain forces (e.g.\ error rates): in this case, it is not easy to have a lower bound on the minimal transition probability, but we can assume that the set of transitions is known}~\cite[p.~20]{BazilleGJS20}.

\paragraph{Observability.}
One assumption that is common to all approaches mentioned above is \emph{observability}, i.e.\ that at any time we can exactly see which state the system is in.
In some cases, the system as a whole is a black box and may even just present us with currently available actions and feedback on the state of the objective (e.g.\ has the goal been reached or not).
Here, none of the mentioned approaches (including ours) is applicable.
In general, \emph{partially observable} MDPs are undecidable with infinite-horizon objectives, even when the underlying structure is known~\cite{MadaniHC99}.

\section{Reusing Confidence is Incorrect} \label{app:sota_bad}

As mentioned, most approaches following the \stepsample{}-\stepinfer{}-\stepsolve{} framework, e.g.\ \cite{WeiningerGMK21,agarwal2022pac,DBLP:conf/nips/SuilenS0022,BaiDubWie23,DBLP:journals/jair/BadingsRAPPSJ23,DBLP:journals/corr/abs-2310-12248}, are not designed with a fixed number of samples in mind, but rather sample continuously until the property of interest can be estimated with the given precision $\varepsilon$.
They also re-use the entire confidence in each \stepinfer{}, intuitively assuming that since we terminate only once the precision is achieved, the correctness of all previous \stepinfer{}s is irrelevant, i.e.\ that $\Pr[\text{error} \mid \text{termination}] = \Pr[\text{error}]$, which is not the case in general.

Concretely, the (flawed) correctness proofs for these sequential procedures usually relies on the soundness of the statistical method used in each \stepsample{}-\stepinfer{} cycle.
While sound inference methods such as Clopper-Pearson or Hoeffding indeed ensure that each single \stepinfer{} step is correct with at least the nominal probability $\gamma$, this is not sufficient to show correctness of the sequential algorithm since the termination condition might be \emph{biased}.

As given by \cref{lemma:lil}, there are stopping times for which the failure condition will happen with probability $\geq \error$.
A more detailed analysis, weighting the precise probability of a failure for a given stopping time with the probability of observing each stopping time, reveals that the sequential algorithm using Clopper-Pearson is unsound for certain parameters~\cite{Frey10}.
Note that this can be cast as simple MDP, thus already showing that the underlying logic of re-using confidence for multiple \stepinfer{} steps for a generic fixed-time confidence interval is incorrect.
We emphasize that therefore this existing work already provides a counterexample to the correctness proofs, as Clopper-Pearson is a correct fixed-time confidence interval, and the correctness proofs of the cited works only rely on this fact.

In practice, we could not find a case where the concrete implementations using the Hoeffding / Okamoto bound fail reliably, as this bound is extremely conservative.
(Note that even if these methods would turn out to be sound, they are far less sample-efficient, as they rely on such a conservative bound.)
We however provide a generic example that shows that fixed time confidence intervals are ill suited for the statistical model checking problem posed in~\cref{subsec:statistical_model_cheking}.

\begin{figure}[t]
    \centering
    \begin{tikzpicture}[auto,state/.append style={minimum height=0.6cm,minimum width=0.6cm}]
        \node[state] at (0,0)  (s1) {$s_1$};
        \node[state] at (1.5,-1.25) (sink) {$s_-$};
        \node[state] at (3,0)  (s2) {$s_2$};
        \node[state] at (3.5,-1.25) (goal) {$s_+$};
        \node[actionnode] at (1,0) (a) {};
        \path[actionedge]
            (s1) edge node[action] {$a$} (a)
        ;
        \path[probedge]
            (a) edge[pos=0.8] node[prob] {$1 - p$} (sink)
            (a) edge node[prob] {$p$} (s2)
        ;
        \path[directedge]
            (s2) edge[pos=0.7] node[prob] {$b$, $1$} (goal)
        ;
    \end{tikzpicture}
    \caption{
        Example MDP to show the problem with re-using confidence budget. 
    }
    \label{fig:conf_example}
\end{figure}
\begin{example} \label{ex:conf_example}
Consider the MDP on $\States=\{s_1,s_2,s_+,s_-\}$ with initial state $s_1$, target state $s_+$, sink state $s_-$, and transitions $\mdptransitions(s_1,a,s_2)=p$, $\mdptransitions(s_1,a,s_-)=1 - p$, and $\mdptransitions(s_2,b,s_+)=1$, where $p\in(0,1)$ (see \cref{fig:conf_example}).
Then $\Probability[\Diamond s_+]=p_\Diamond=p$.
The SMC algorithm estimates $p_\Diamond$ by combining Hoeffding confidence intervals for $\mdptransitions(s_1,a,s_2)$ and $\mdptransitions(s_2,b,s_+)$ via interval arithmetic and a union bound.
For $i\in\{1,2\}$, let $I^i_{\sampletime_i}=[L^i_{\sampletime_i},U^i_{\sampletime_i}]$ be a confidence interval for $p_i$ with error probability $\error/2$.
Assume the sampling strategy keeps $\sampletime_1$ so small that $I^1_{\sampletime_1}=[0,1]$, while $\sampletime_2\to\infty$.
Then, the resulting interval for $p_\Diamond$ is $I^\Diamond_{\sampletime_1,\sampletime_2}\coloneqq I^1_{\sampletime_1}\cdot I^2_{\sampletime_2}=[0,U^1_{\sampletime_2}]$.
If $\varepsilon=p$, then the stopping time $\tau\coloneqq \inf\{\sampletime_2\in\PosNaturals \mid |I^\Diamond_{\sampletime_1,\sampletime_2}|<\varepsilon\}$ coincides with $\tau=\inf\{\sampletime_1\in\PosNaturals\mid U^1_{\sampletime_1}<p\}$. By the law of the iterated logarithm~\cite[p.~377]{grimmett2020probability}, this event occurs almost surely, implying that the claimed guarantee fails, i.e.\ $\Probability[p_\Diamond\in I^\Diamond_{\sampletime_1,\tau}]=\Probability[p_2\leq U^2_\tau]=0\neq 1-\error$.
Note that grey box algorithms could infer that $\mdptransitions(s_2,b,s_+) = 1$ by graph analysis alone.
By, for example, changing from $1$ to $1 - \beta$ with $\beta \ll \varepsilon$ we can prevent this, too, however at the cost of simplicity of the example.
\end{example}

\section{Additional Definitions}
\label{app:defs}%
Here, we provide formal definitions of existing confidence intervals and sequences in our notation.
In the following, define the the empirical and predictive empirical variance as 
$
    \variance_{\successor}(z_{1:\sampletime}) \coloneqq \tfrac{1}{\sampletime-1}{\sum}_{i=1}^{\sampletime}( \indicator{z_i = \successor} - \average_{\successor}(z_{1:\sampletime}) )^2$ and
$   \overrightarrow{\variance}_\successor(z_{1:\sampletime}) \coloneqq \tfrac{1}{\sampletime}{\sum}_{i=1}^{\sampletime}( \indicator{z_i=\successor} - \average_{\successor}(z_{1:i-1}) )^2$, respectively.
For $\sampletime=0$ we define $z_{1:\sampletime}$ as the empty sequence $\epsilon$, set $\average_{\successor}(\epsilon)=1/2$, $ \overrightarrow{\variance}_\successor(\epsilon) = 1/4$, and set $\variance_{\successor}(z_{1:\sampletime}) =1/4$ for $\sampletime \in \{0,1\}$.

\subsection{Rate Optimality} \label{app:defs:rate_optimal}

\paragraph{Bernoulli Confidence Intervals.}
We define $\mathfrak{I}$ as the set of all confidence intervals for the Bernoulli mean. Each confidence interval $\confinter\in \mathfrak{I}$ satisfies for each error probability $\error\in (0,1)$ that
\begin{align}
    \forall \sampletime\in \PosNaturals \colon \inf_{p\in [0,1]} \Probability_p\left[ p \in \confinter(\error; X_{1:\sampletime}) \right] \geq 1-\error.
\end{align}

\paragraph{Empirically Anchored.}
The confidence interval $\confinter\in \mathfrak{I}$ is empirically anchored, if it always contains the empirical average.
Formally, $\confinter$ is empirically anchored if for all $\sampletime\in \PosNaturals$, all $\error\in (0,1)$, and all $x_{1:\sampletime}\in \{0,1\}^{\sampletime}$,
\begin{align*}
    \frac{1}{\sampletime}\sum_{i=1}^{\sampletime} x_i \in \confinter(\error; x_{1:\sampletime}).
\end{align*}

\paragraph{Interval Width.}
The width of the confidence interval $\confinter\in \mathfrak{I}$ for a given sequence of samples $x_{1:\sampletime}\in \{0,1\}^\sampletime$ and error probability $\error\in (0,1)$ is
\begin{align}
  |\confinter(\error; x_{1:\sampletime})| \coloneqq \sup \confinter(\error; x_{1:\sampletime}) - \inf \confinter(\error; x_{1:\sampletime}).
\end{align}
The maximal width after $\sampletime\in \PosNaturals$ of the confidence interval $\confinter\in \mathfrak{I}$ is
\begin{align}
    W_\confinter(\error, \sampletime) \coloneqq \sup_{x_{1:\sampletime}\in \{0,1\}^{\sampletime}} |\confinter(\error; x_{1:\sampletime})|.
\end{align}
Intuitively, this is the worst case width that may occur on a sequence of Bernoulli random variables. 
The optimal maximal width among $\mathfrak{I}$ is given by
\begin{align}
    W^\star(\error, \sampletime) \coloneqq \inf_{\confinter' \in \mathfrak{I}} W_{\confinter'}(\error, \sampletime).
\end{align}

\begin{restatable}{lemma}{optimalWidth}
\label{lemma:optimal-width-rate}
The optimal maximal width benchmark for Bernoulli confidence intervals is of order
\begin{align}
    W^\star(\error, \sampletime)
    =
    \Theta\left(
        \min\left\{1,\sqrt{\frac{\log(1/\error)}{\sampletime}}\right\}
    \right).
\end{align}
\end{restatable}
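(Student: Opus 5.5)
The claim is a two-sided rate statement, so the plan is to prove an upper and a lower bound on $W^\star(\error,\sampletime)$ separately, each up to absolute constants. Throughout we treat $\error\le\error_0$ for some fixed small $\error_0$. For $\error$ bounded away from $0$ the quantity $\log(1/\error)$ is itself bounded, so this restriction costs only constants.

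\textbf{Upper bound.} We exhibit one member of $\mathfrak{I}$, the Hoeffding interval
\[
\confinter(\error;x_{1:\sampletime})=\Bigl[\bar x-\sqrt{\log(2/\error)/(2\sampletime)},\ \bar x+\sqrt{\log(2/\error)/(2\sampletime)}\Bigr]\cap[0,1].
\]
Hoeffding's inequality applied to both tails shows it has coverage at least $1-\error$ uniformly in $p$. Its width is at most $\min\{1,\sqrt{2\log(2/\error)/\sampletime}\}$, and for $\error\le 1/2$ this is $O(\min\{1,\sqrt{\log(1/\error)/\sampletime}\})$. Since $W^\star$ is an infimum over $\mathfrak{I}$, this gives the upper bound.

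\textbf{Lower bound.} We use a standard two-point (Le Cam) argument. Fix an arbitrary $\confinter\in\mathfrak{I}$ and suppose $W_\confinter(\error,\sampletime)<w$.

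\emph{Choice of hypotheses.} Take $p_0=1/2$ and $p_1=1/2+w$.

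\emph{Consequence of small width.} Every realised interval has length below $w$. It therefore cannot contain both $p_0$ and $p_1$, so the events $A_j=\{p_j\in\confinter(\error;X_{1:\sampletime})\}$ are disjoint.

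\emph{Coverage.} Validity of $\confinter$ gives $\Probability_{p_0}[A_0]\ge1-\error$ and $\Probability_{p_1}[A_1]\ge 1-\error$. Hence
\[
\Probability_{p_1}[A_0^c]\ge 1-\error \quad\text{while}\quad \Probability_{p_0}[A_0^c]\le\error.
\]
So $A_0^c$ is an event that has probability at most $\error$ under $p_0$ but at least $1-\error$ under $p_1$.

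\emph{Information-theoretic step.} By the Bretagnolle--Huber inequality, or alternatively the data-processing inequality for KL applied to the indicator of $A_0^c$,
\[
\sampletime\,\KL(p_1\|p_0)\ \ge\ \mathrm{kl}(1-\error\,\|\,\error)\ \ge\ (1-2\error)\log\frac{1-\error}{\error}\ \ge\ c\log(1/\error)
\]
for $\error\le\error_0$.

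\emph{Bounding the KL.} For $w\le 1/4$ we have $\KL(p_1\|p_0)\le 4w^2/\bigl(p_0(1-p_0)\bigr)\cdot\tfrac14=O(w^2)$, using the $\chi^2$ upper bound on KL.

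\emph{Conclusion.} Combining the last two steps yields $w\ge c'\sqrt{\log(1/\error)/\sampletime}$. If instead $\sqrt{\log(1/\error)/\sampletime}$ is large, we run the same argument with $w=1/4$. It shows that $W_\confinter\ge 1/4$ as soon as $\sampletime\,\KL(p_1\|p_0)<c\log(1/\error)$, which gives the constant branch of the $\min$. Because $\confinter$ was arbitrary, the lower bound holds for the infimum $W^\star$.

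\textbf{Main obstacle.} The only delicate point is the bookkeeping between the two regimes of the $\min$ together with the requirement $\error\le\error_0$. The planned split is on whether $\log(1/\error)/\sampletime\le c_0$, choosing $w=\sqrt{c_0\log(1/\error)/\sampletime}$ in the first case and $w=1/4$ in the second.

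One further remark concerns the boundary points $p_j\in\{0,1\}$. These are avoided by centring at $1/2$, so the KL and $\chi^2$ bounds stay finite and uniform.
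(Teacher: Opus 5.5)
Your proposal is correct and follows essentially the paper's route: the Hoeffding interval for the upper bound, and for the lower bound a Le Cam two-point test combined with Bretagnolle--Huber (or data processing) and a $\KL=O(w^2)$ estimate. Your hypotheses $1/2$ and $1/2+w$ under strict width $<w$, and your $\chi^2$ bound on the KL, are only cosmetic variants of the paper's $1/2\pm w$ and its elementary logarithm inequalities.

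One inaccuracy is your aside that restricting to $\error\le\error_0$ ``costs only constants''. As $\error\to 1$ one has $\log(1/\error)\to 0$, while the Hoeffding width stays of order $1/\sqrt{t}$ and the two-point bound is vacuous for $\error\ge 1/2$. This is harmless: like the paper, which uses $\error<1/2$ and $\error\le 1/8$, you only need the statement for small $\error$, which is exactly the regime of the rate-optimality definition.
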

\noindent
Since we are interested in the shrinking-width regime, we formulate rate
optimality asymptotically, i.e.\ for sufficiently large sample sizes and
sufficiently small error probabilities~\cite{howard2020time,waudby2024estimating}.
Related to the discussion of minimax optimal width, where optimality is defined w.r.t. to the confidence intervals expected behaviour~\cite{BroCaiDas01,Tsybakov2009}, we define optimality w.r.t. the worst case behaviour.
This allows us to address the common fixed time confidence intervals uniformly. 
\begin{definition}
\label{def:width}
A confidence interval $\confinter\in \mathfrak{I}$ has asymptotically \emph{rate-optimal maximal width}, if there exist constants
$0<c_0\le C_0<\infty$, $\sampletime_0\in\PosNaturals$, and
$\error_0\in(0,1)$ such that for all $\sampletime\ge \sampletime_0$ and all
$\error\in(0,\error_0]$ we have
\begin{equation}
\label{eq:width}
c_0 \cdot
\min\left\{1,\sqrt{\frac{\log(1/\error)}{\sampletime}}\right\}
\le
W_\confinter(\error,\sampletime)
\le
C_0 \cdot
\min\left\{1,\sqrt{\frac{\log(1/\error)}{\sampletime}}\right\}.
\end{equation}
\end{definition}

\subsection{Confidence Intervals} \label{app:defs:intervals}

We provide definitions of the classical Hoeffding~\cite{Hoe63}, (empirical) Bernstein~\cite{MauPon09}, and Clopper-Pearson intervals~\cite{CloPea34}:
\begin{align*}
    \confinter_{\successor}^H(\error; z_{1:\sampletime})
    &=
    \left[\average_{\successor}(z_{1:\sampletime}) \pm \sqrt{\log(2/\error) / (2 \sampletime)}\right]\cap[0,1],\\
    \confinter_{\successor}^B(\error;z_{1:\sampletime})
    &=
    \left[\average_{\successor}(z_{1:\sampletime}) \pm \sqrt{2\variance_{\successor}(z_{1:\sampletime})\log(3/\error) / \sampletime} + 7\log(3/\error) / (3(\sampletime-1))\right]\cap[0,1],\\
    \confinter_{\successor}^C(\error;z_{1:\sampletime})
    &=
    \Big[
      \mathrm{Beta}^{-1}(\error/2; k_\sampletime, \sampletime - k_\sampletime + 1),\
      \mathrm{Beta}^{-1}(1-\error/2; k_\sampletime + 1, \sampletime - k_\sampletime)
    \Big],
\end{align*}
where $k_\sampletime = \successes_\successor(z_{1:\sampletime})$ and $\mathrm{Beta}^{-1}$ the inverse of the Beta distribution. Whenever an interval is undefined, e.g., $\confinter_{\successor}^B$ for $t<2$ and $\confinter_{\successor}^C$ for $k_\sampletime=0$ or $k_\sampletime=\sampletime$, they are set to the trivial interval $[0,1]$.
All three intervals are empirically anchored and their maximal widths are of order $\Theta(\sqrt{\log(1/\error)/\sampletime})$ (see also \cref{lemma:cp-rate-optimal,lemma:empbern-rate-optimal}).

\subsection{Stitching Sequences} \label{app:defs:stitched}
Fix $c>1$ as the parameter of an exponential grid $N_{\mathrm{exp}}^{1,c}$, an initial time $\sampletime_0\in \PosNaturals$, and let $h$ be a confidence-spending function.
Define $k_1 = (c^{1/4}+c^{-1/4})/\sqrt{2}$, $k_2=(\sqrt{c}+1)/2$,
$
    m(v) \coloneqq \left\lceil \log(v/\sampletime_0) / \log(c) \right\rceil$,
and
$
g(\error; v) \coloneqq \log h\left(m(v)\right) + \log\left(2/\error\right)$.
Then, for every $\sampletime \in \PosNaturals$, $\error\in (0,1)$, $\successor \in \Successors$, and $z_{1:\sampletime}\in \Successors^\sampletime$, the \emph{Stitched Hoeffding} confidence sequence~\cite{howard2021time} is
\begin{align*}
  \confseq_\successor^{SH}(\error;z_{1:\sampletime})
  =
  \average_\successor(z_{1:\sampletime})
   \pm
  \sqrt{ \tfrac{1}{\sampletime} k_1^2  g(\error;\sampletime)} \cap[0,1],
\end{align*}
and the \emph{Stitched Bernstein} confidence sequence~\cite{howard2021time} is
\begin{align*}
  \confseq_\successor^{SB}(\error;z_{1:\sampletime})
  =
  \average_\successor(z_{1:\sampletime})
   \pm
  \tfrac{1}{\sampletime}\left( \sqrt{k_1^2 v_\sampletime g(\error; v_\sampletime)\ +\ k_2^2  g(\error; v_\sampletime)^2}\ +\ k_2  g(\error; v_\sampletime) \right) \cap[0,1],
\end{align*}
where $v_\sampletime=\max(1, \sampletime \cdot \overrightarrow{\variance}_\successor(z_{1:\sampletime}))$ 

\subsection{Betting Sequences} \label{app:defs:betting}
For $\sampletime \in \PosNaturals$, $\error\in(0,1)$, $\successor \in \Successors$, and $z_{1:\sampletime}\in\Successors^\sampletime$, define the \emph{Betting Hoeffding} confidence sequence~\cite{waudby2024estimating}
\begin{align*}
  \confseq_\successor^{BH}(\error; z_{1:\sampletime})
  =
  \frac{{\sum}_{i=1}^\sampletime \lambda_i \indicator{z_i=\successor}}{{\sum}_{i=1}^\sampletime \lambda_i}
   \pm
  \frac{\log(2/\error) + {\sum}_{i=1}^\sampletime \lambda_i^2/8}{{{\sum}_{i=1}^\sampletime \lambda_i}},
\end{align*}
where $\lambda_i=\min(1, \eta_i)$ and $\eta_i = \sqrt{(8\log(2/\error))/(i\log(i+1))}$.
Moreover, the \emph{Betting Bernstein} confidence sequence~\cite{waudby2024estimating} is defined as
\begin{align*}
  &\confseq_\successor^{BB}(\error; z_{1:\sampletime})
  =
  \frac{\sum_{i=1}^\sampletime \lambda_i \indicator{z_i=\successor}}{\sum_{i=1}^\sampletime \lambda_i}
   \pm
  \frac{\log(2/\error) + \sum_{i=1}^\sampletime v_i\left(-\log(1-\lambda_i)-\lambda_i\right)}{\sum_{i=1}^\sampletime \lambda_i} \cap[0,1],\\
  &\text{where}\quad v_i=\left(\indicator{z_i=\successor}-\average_\successor(z_{1:i-1})\right)^2,
 \text{ and }
  \lambda_i = \min\left(\frac{1}{2}, \sqrt{\frac{2\log (2/\error)}{\overrightarrow{\variance}_\successor(z_{1:i-1}) i \log (i+1)}} \right).
\end{align*}

\section{Proofs}
\label{app:proof}

Throughout this section, fix a state-action pair $s \in \States$, $a \in \stateactions(s)$ and a successor state $\successor\in\Successors$.
Let $\saprob = \mdptransitions(s, a)$, $Z=(Z_\sampletime)_{\sampletime\in\PosNaturals}$ with $Z_\sampletime\sim\saprob$ i.i.d., and define
\begin{align*}
X_\sampletime \coloneqq \indicator{Z_\sampletime=\successor}\in\{0,1\},\qquad
p \coloneqq \saprob(\successor)\in[0,1].
\end{align*}
Write $\mathcal{F}_\sampletime\coloneqq \sigma(Z_{1:\sampletime})=\variance(X_{1:\sampletime})$.

\optimalWidth*
\begin{proof}
We first prove the upper bound.
Because the Hoeffding interval is data-agnostic the claim follows immediatly, i.e., for  $\error<1/2$
\begin{align*}
W^\star(\error,\sampletime) \leq W_{\confinter^H}(\error,\sampletime)
\leq
\min\left\{1,\sqrt{\frac{2\log(2/\error)}{\sampletime}}\right\}\leq C
\min\left\{1,\sqrt{\frac{2\log(1/\error)}{\sampletime}}\right\}.
\end{align*}
for some constant $C>0$ because $W^\star$ is the infimum over all Bernoulli confidence intervals.
We now prove the lower bound.
Fix an arbitrary Bernoulli confidence interval $\confinter\in\mathfrak I$ and abbreviate $w\coloneqq W_{\confinter}(\error,\sampletime).$
If $w> 1/4$, then trivially
\begin{align*}
\frac14\min\left\{1,\sqrt{\frac{\log(1/\error)}{\sampletime}}\right\} \leq w.
\end{align*}
It therefore remains to consider the case $w\leq \tfrac{1}{4}$. We show this using a standard argument from information theory, i.e., we derive a Le Cam-style lower bound, which reduces this problem to binary hypothesis testing. Towards that end let $p_-\coloneqq \frac12-w$ and $p_+\coloneqq \frac12+w$.
Notice that since $p_+-p_-=2w>w$, no interval of width at most $w$ can contain both $p_-$ and $p_+$ simultaneously. We construct a hypothesis test trying to distinguish between those two parameters. 
Specifically, we test 
\begin{align*}
\mathcal{H}_0:p=p_-  \quad \text{against} \quad \mathcal{H}_1:p=p_+ \quad \text{by} \quad  \psi(x_{1:\sampletime})
\coloneqq
\indicator{p_+\in \confinter(\error;x_{1:\sampletime})}.
\end{align*}
Because $\confinter$ has maximal width at most $w$, the event
$\{p_+\in \confinter(\error;x_{1:\sampletime})\}$ implies
$p_-\notin \confinter(\error;x_{1:\sampletime})$.
Hence, by fixed-time validity of $\confinter$ we bound the false positive probability (type-I error) and the false negative probability (type-II error) respectively by $\error$, i.e.  
\begin{equation*}
\Probability_{p_-}(\psi=1)
\leq
\Probability_{p_-}\left(p_-\notin \confinter(\error;X_{1:\sampletime})\right)
\leq
\error,
\quad 
\Probability_{p_+}(\psi=0)
=
\Probability_{p_+}\left(p_+\notin \confinter(\error;X_{1:\sampletime})\right)
\leq
\error.
\end{equation*}
Thus the type-I and type-II errors satisfy $\alpha+\beta\leq 2\error.$
On the other hand, by the Bretagnolle-Huber inequality~\cite[p.~190]{lattimore2020bandit} we obtain
\begin{equation*}
\tfrac{1}{2} \exp\left(-\sampletime\,\mathrm{KL}(p_-\|p_+)\right)  \leq \alpha+\beta \leq 2\error
\quad \text{which is} \quad
\log\left(1/(4\error)\right) \leq  \sampletime\,\mathrm{KL}(p_-\|p_+).
\end{equation*}
For $w\leq \tfrac14$ we compute
\begin{equation*}
\mathrm{KL}(p_-\|p_+)
=
\left(\frac12-w\right)\log\frac{\frac12-w}{\frac12+w}
+
\left(\frac12+w\right)\log\frac{\frac12+w}{\frac12-w}
=
2w\log\frac{1+2w}{1-2w}.
\end{equation*}
Using $\log(1+x)\leq x$ and $-\log(1-x)\leq x/(1-x)$ for $x\in[0,1)$, we obtain
\begin{equation*}
\log\frac{1+2w}{1-2w}
=
\log(1+2w)-\log(1-2w)
\leq
2w+\frac{2w}{1-2w}
\leq
8w,
\end{equation*}
and therefore $\mathrm{KL}(p_-\|p_+)\leq 16w^2$.
Substituting this into the previous bound gives
\begin{align*}
   & \log\left(1/(4\error)\right) \leq  \sampletime \mathrm{KL}(p_-\|p_+) \leq  16\sampletime w^2 \quad 
    \iff \quad \frac{1}{4}\sqrt{\frac{\log(1/(4\error))}{\sampletime}} \leq w
\end{align*}
Notice that for every $\error\leq \tfrac{1}{8}$ 
\begin{align*}
\frac{1}{4}\sqrt{\frac{\log(1/(4\error))}{\sampletime}} = 
\frac{1}{4}\sqrt{\frac{\log(1/\error)-\log 4}{\sampletime}} 
\geq 
\frac{1}{4}\sqrt{\frac{\frac{1}{3}\log(1/\error)}{\sampletime}} =
\frac{1}{4\sqrt3} 
\sqrt{\frac{\log(1/\error)}{\sampletime}}.
\end{align*}
Combining this with the case $w>1/4$ results in 
\begin{align*}
\frac{1}{4\sqrt3}
\min\left\{1,\sqrt{\frac{\log(1/\error)}{\sampletime}}\right\} \leq w.
\end{align*}
Since $\confinter\in\mathfrak I$ was arbitrary we get 
\begin{align*}
\frac{1}{4\sqrt3}
\min\left\{1,\sqrt{\frac{\log(1/\error)}{\sampletime}}\right\} \leq W^\star(\error,\sampletime).
\end{align*}
\end{proof}

\subsection{Union Bound}

\lilLemma*
\begin{proof}[of~\cref{lemma:lil}]
Assume $p=\tfrac{1}{2}$ and let $\confinter_{\successor}$ be an empirically anchored
fixed-time confidence interval with asymptotically rate-optimal maximal width.
By \cref{def:width}, there exists $\error_0\in(0,1)$ such that for every
$\error\in(0,\error_0]$ there exist a constant $C<\infty$ and a sample count
$n_0\in\PosNaturals$ such that for all $n\geq n_0$,
\begin{equation*}
W_{\confinter_{\successor}}(\error,n)
\leq
C\sqrt{\frac{\log(1/\error)}{n}}.
\end{equation*}
Fix $\error\in(0,\error_0]$ and define
\begin{equation*}
A_\error
\coloneqq
\left\{
\forall \sampletime\in\PosNaturals:\ 
p\in \confinter_{\successor}(\error;Z_{1:\sampletime})
\right\} \subseteq \left\{
\forall n\geq n_0:\ 
p\in \confinter_{\successor}(\error;Z_{1:n})
\right\}.
\end{equation*}
Since it suffices to show that
\begin{equation*}
\Probability\left[
\forall n\geq n_0:\ 
p\in \confinter_{\successor}(\error;Z_{1:n})
\right]
=
0.
\end{equation*}
Because $\confinter_{\successor}$ is empirically anchored, for every $n\geq n_0$
on the event $\{p\in \confinter_{\successor}(\error;Z_{1:n})\}$ we have
\begin{equation*}
\big|\average_\successor(Z_{1:n})-p\big|
\leq
\big|\confinter_{\successor}(\error;Z_{1:n})\big|
\leq
W_{\confinter_{\successor}}(\error,n)
\leq
C\sqrt{\frac{\log(1/\error)}{n}}.
\end{equation*}
Hence
\begin{equation*}
\left\{
\forall n\geq n_0:\ 
p\in \confinter_{\successor}(\error;Z_{1:n})
\right\}
\subseteq
\left\{
\forall n\geq n_0:\ 
\big|\average_\successor(Z_{1:n})-p\big|
\leq
C\sqrt{\frac{\log(1/\error)}{n}}
\right\}.
\end{equation*}
Let $X_i\coloneqq \indicator{Z_i=\successor}$ and
$Y_i\coloneqq 2(p-X_i)\in\{-1,+1\}$.
Under $p=\tfrac12$, the variables $(Y_i)_{i\in\PosNaturals}$ are i.i.d.\ with
$\expect[Y_i]=0$ and $\Variance(Y_i)=1$.
Moreover, for every $n\in\PosNaturals$,
\begin{equation*}
\left|np-\successes_\successor(Z_{1:n})\right|
=
\frac12\left|\sum_{i=1}^n Y_i\right|
\quad \text{and thus} \quad \left|\average_\successor(Z_{1:n})-p\right|
=
\frac{1}{2n}\left|\sum_{i=1}^n Y_i\right|.
\end{equation*}
By the law of the iterated logarithm for i.i.d.\ mean-zero, variance-one variables,
\begin{equation*}
\limsup_{n\to\infty}
\frac{\left|\sum_{i=1}^n Y_i\right|}{\sqrt{2n\log\log n}}
=
1
\qquad\text{almost surely.}
\end{equation*}
Since $\sqrt{n\log(1/\error)} = o\left(\sqrt{n\log\log n}\right)$, it follows that
\begin{equation*}
\frac{\left|\sum_{i=1}^n Y_i\right|}{2\sqrt{n\log(1/\error)}}>C \quad \text{or equivalently} \quad  \big|\average_\successor(Z_{1:n})-p\big|
>
C\sqrt{\frac{\log(1/\error)}{n}}
\end{equation*}
occurs infinitely often almost surely.
In particular,
\begin{equation*}
\Probability\left[
\forall n\geq n_0:\ 
\big|\average_\successor(Z_{1:n})-p\big|
\leq
C\sqrt{\frac{\log(1/\error)}{n}}
\right]
=
0.
\end{equation*}
By the earlier inclusion, this implies that $\Probability[A_\error]=0$.
\end{proof}

\unionGridThm*
\begin{proof}[of~\cref{thrm:sound_union}]
Let $N=(n_i)_{i\in\PosNaturals}$ be an increasing grid and $h$ satisfy $\sum_{i=1}^\infty 1/h(i)\leq 1$.
For each $i\in\PosNaturals$, define the \enquote{failure event}
\begin{equation*}
E_i \coloneqq \left\lbrace p\notin \confinter_{\successor}\left(\error/h(i); Z_{1:n_i}\right)\right\rbrace .
\end{equation*}
By fixed-time validity of $\confinter_{\successor}$ with level $\error/h(i)$ at sample size $n_i$,
\begin{equation*}
\Probability[E_i]\ \leq \error/h(i)\qquad\text{for all }i\in\PosNaturals.
\end{equation*}
We now take a union bound over the grid and because $\sum_i 1/h(i)\leq 1$ we get
\begin{align*}
\Probability\left[ \exists i\in N \colon p\notin \confinter_{\successor}\left(\error/h(i); Z_{1:n_i}\right)\right]
&=\Probability\left[ \bigcup_{i=1}^\infty E_i\right]\\
&\leq \sum_{i=1}^\infty \Probability[E_i]
\leq \sum_{i=1}^\infty \error/h(i)
\leq \error.
\end{align*}
\end{proof}

\unionConvergenceLemma*
\begin{proof}[of~\cref{lemma:union_convergence}]
In the following, we write $\asymp$ to denote (asymptotic) equality up to constants.
Assume that $\confinter_{\successor}$ has rate-optimal maximal width, i.e.
\begin{align*}
1\geq W_{\confinter_{\successor}}(\error,n_i)
\asymp
\min\left\lbrace1, \sqrt{\frac{\log(1/\error)}{n_i}} \right\rbrace,
\end{align*}
uniformly over $\error\in(0,1)$ and $i\in\PosNaturals$.
Since we clip the interval and are interested in the asymptotic rate we focus on the non-constant term only.
On any grid, plugging in $\error=\error/h(i)$ gives
\begin{align*}
W_{\confinter_{\successor}}(\error/h(i),n_i)
\asymp
\sqrt{\frac{\log(h(i)/\error)}{n_i}}.
\end{align*}

\noindent
\emph{Polynomial grid $N_{\mathrm{poly}}^{b,c}=(\lfloor b i^c\rfloor)_i$.}
Ignoring floors at the $\asymp$ level, $n_i\asymp b i^c$, hence $i\asymp (n_i/b)^{1/c}$ and $\log i = \tfrac1c\log(n_i/b)+O(1)$.
Thus
\begin{align*}
\log\left(\tfrac{h_{\mathrm{poly}}^a(i)}{\error}\right)
= a\log i + \log(\eta_p/\error)
= \tfrac{a}{c}\log(n_i/b)+\log(\eta_p/\error)+O(1),
\end{align*}
and
\begin{align*}
\log\left(\tfrac{h_{\mathrm{exp}}^a(i)}{\error}\right)
= i\log a + \log(\eta_e/\error)
= (n_i/b)^{1/c}\log a + \log(\eta_e/\error) + O(1).
\end{align*}
Substituting into $\sqrt{\log(h(i)/\error)/n_i}$ results in the two claimed expressions up to multiplicative constants.

\noindent
\emph{Exponential grid $N_{\mathrm{exp}}^{b,c}=(\lfloor b c^i\rfloor)_i$.}
Again $n_i\asymp b c^i$, hence $i=\log_c(n_i/b)+O(1)$ and
$\log i = \log\log_c(n_i/b)+O(1)$.
Therefore
\begin{align*}
\log\left(\tfrac{h_{\mathrm{poly}}^a(i)}{\error}\right)
= a\log i + \log(\eta_p/\error)
= a\log\log_c(n_i/b)+\log(\eta_p/\error)+O(1),
\end{align*}
and
\begin{align*}
\log\left(\tfrac{h_{\mathrm{exp}}^a(i)}{\error}\right)
= i\log a + \log(\eta_e/\error)
= \log_c(n_i/b)\log a + \log(\eta_e/\error)+O(1).
\end{align*}
Substituting finishes the proof (and the $N_{\mathrm{poly}}^{b,1}$ row is the special case $c=1$).
\end{proof}

\polyDominanceLemma*
\begin{proof}[of~\cref{lemma:poly_dominance}]
Let $\confinter_{\successor}$ have rate-optimal maximal width.
Then there exist constants $0<c_0\leq C_0<\infty$, a sample count $n_0\in\PosNaturals$, and an error probability $\error_0\in(0,1)$ such that for all $n\geq n_0$ and all $\error\in(0,\error_0]$,
\begin{align*}
c_0\sqrt{\frac{\log(1/\error)}{n}}
\leq
W_{\confinter_{\successor}}(\error,n)
\leq
C_0\sqrt{\frac{\log(1/\error)}{n}} \leq 1.
\end{align*}
Fix a grid $N=(n_i)$ and parameters $a,b>1$.
For polynomial spending $h_{\mathrm{poly}}^a(i)=\eta_p i^a$ and exponential spending $h_{\mathrm{exp}}^b(i)=\eta_e b^i$, we have
\begin{align*}
\log\left(\tfrac{h_{\mathrm{poly}}^a(i)}{\error}\right)
=
a\log i + \log(\eta_p/\error),
\qquad
\log\left(\tfrac{h_{\mathrm{exp}}^b(i)}{\error}\right)
=
i\log b + \log(\eta_e/\error).
\end{align*}
Since $a\log i = o(i)$, there exists $i_0\in\PosNaturals$ such that for all $i\geq i_0$,
\begin{align*}
C_0^2\left(a\log i + \log(\eta_p/\error)\right)
\leq
c_0^2\left(i\log b + \log(\eta_e/\error)\right).
\end{align*}
Therefore, for all sufficiently large $i$,
\begin{align*}
W_{\confinter_{\successor}}(\error/h_{\mathrm{poly}}^a(i),n_i)
&\leq
C_0\sqrt{\frac{a\log i + \log(\eta_p/\error)}{n_i}}
\\
&\leq
c_0\sqrt{\frac{i\log b + \log(\eta_e/\error)}{n_i}}
\\
&\leq
W_{\confinter_{\successor}}(\error/h_{\mathrm{exp}}^b(i),n_i) \leq 1.
\end{align*}
\end{proof}

\nonConvergence*
\begin{proof}[of~\cref{corr:non_convergence}]
Let $N=(n_i)_{i\in\PosNaturals}$ be a grid with constant spacing.
Then $n_i=\Theta(i)$.
Hence, by \cref{lemma:union_convergence}, under exponential spending we obtain
\begin{align*}
W_{\confinter_{\successor}}(\error/h_{\mathrm{exp}}^a(i),n_i)
=
\Theta(1).
\end{align*}
In particular, the maximal widths do not converge to $0$.
\end{proof}

\begin{lemma}
\label{lemma:empbern-rate-optimal}
The empirical Bernstein interval $\confinter_{\successor}^B$ (\cref{app:defs:intervals}) has rate-optimal maximal width.
\end{lemma}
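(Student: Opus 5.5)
The plan is to prove the two inequalities of \cref{def:width} separately. The lower bound follows from \cref{lemma:optimal-width-rate}. The upper bound is a direct estimate of the explicit half-width of $\confinter_{\successor}^B$, followed by a case split that absorbs the linear (non-square-root) term.

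\emph{Lower bound.} First we check that $\confinter_{\successor}^B\in\mathfrak{I}$, i.e.\ that it is a valid fixed-time Bernoulli confidence interval. We take this from the empirical Bernstein inequality of \cite{MauPon09}: its one-sided bound at level $\error/3$, combined with the one-sided variance concentration it uses and a union bound over the three events, gives the two-sided statement with $\log(3/\error)$. Clipping to $[0,1]$ and replacing undefined cases ($\sampletime<2$) by $[0,1]$ preserve coverage. Since $W^\star$ is the infimum of $W_{\confinter'}$ over $\confinter'\in\mathfrak{I}$, we get $W_{\confinter^B}(\error,\sampletime)\ge W^\star(\error,\sampletime)$. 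The proof of \cref{lemma:optimal-width-rate} then gives $W^\star(\error,\sampletime)\ge \frac{1}{4\sqrt3}\min\{1,\sqrt{\log(1/\error)/\sampletime}\}$ for all $\error\le 1/8$. So we may take $c_0=1/(4\sqrt3)$ and $\error_0\le 1/8$.

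\emph{Upper bound.} Fix $\sampletime\ge \sampletime_0:=2$ and $\error\le\error_0:=\min\{1/8,1/3\}$. For every $z_{1:\sampletime}$ the width is at most
\begin{equation*}
\min\Bigl\{1,\ 2\sqrt{2\variance_\successor(z_{1:\sampletime})\log(3/\error)/\sampletime}+\tfrac{14\log(3/\error)}{3(\sampletime-1)}\Bigr\}.
\end{equation*}
We bound each ingredient separately:
\begin{itemize}
\item For indicator data, $\variance_\successor(z_{1:\sampletime})=\frac{\sampletime}{\sampletime-1}\average_\successor(1-\average_\successor)\le \frac{\sampletime}{4(\sampletime-1)}\le \frac12$ for $\sampletime\ge2$.
\item $\log(3/\error)\le 2\log(1/\error)$ for $\error\le1/3$.
\item $\frac{1}{\sampletime-1}\le\frac{2}{\sampletime}$.
\end{itemize}
Writing $L\coloneqq \log(1/\error)/\sampletime$, this yields $W_{\confinter^B}(\error,\sampletime)\le \min\{1,\ 2\sqrt2\,\sqrt{L}+\tfrac{56}{3}L\}$.

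\emph{Main obstacle: the linear term $L$.} It is not of order $\sqrt L$ in general, so we split into two cases. If $L\le1$, then $L\le\sqrt L$, and the width is at most $(2\sqrt2+56/3)\sqrt L=(2\sqrt2+56/3)\min\{1,\sqrt L\}$. If $L>1$, then $\min\{1,\sqrt L\}=1$, and the clipping to $[0,1]$ gives width at most $1$. Hence $C_0=2\sqrt2+56/3$ works uniformly over all $z_{1:\sampletime}$, and taking the supremum over $z_{1:\sampletime}$ gives the upper bound of \cref{eq:width}.

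The only genuinely delicate point is the validity claim in the lower bound. If the constants in \cite{MauPon09} do not match $\log(3/\error)$ exactly, we would instead appeal to the stated validity of $\confinter^B$ as a confidence interval. The upper-bound constants do not matter for the conclusion.
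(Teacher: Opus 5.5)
Your proposal is correct and follows essentially the same route as the paper. The lower bound comes from $W_{\confinter_\successor^B}\ge W^\star$ and \cref{lemma:optimal-width-rate}. The upper bound bounds the empirical variance, trades $\log(3/\error)$ for a constant multiple of $\log(1/\error)$, and splits cases so that clipping to $[0,1]$ absorbs the regime where the linear term dominates. Your variance bound $\variance_\successor(z_{1:\sampletime})\le \sampletime/(4(\sampletime-1))\le 1/2$ is the accurate one, whereas the paper's $\variance_\successor\le 1/4$ fails for the $1/(\sampletime-1)$-normalised variance (e.g.\ at $\sampletime=2$ with one success); this only affects constants.
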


\begin{proof}
Since $\confinter_{\successor}^B$ is a Bernoulli confidence interval, we have $W_{\confinter_{\successor}^B}(\error,\sampletime)\geq W_\star(\error,t)$
for all $\error\in(0,1)$ and $\sampletime\in \PosNaturals$.
By the optimal maximal width convergence rate from \cref{app:defs:rate_optimal}, there exist constants
$c_0>0$, $\sampletime_0\in\PosNaturals$, and $\error_0\in(0,1)$ such that for all
$\sampletime\geq \sampletime_0$ and $\error\in(0,\error_0]$,
\begin{align*}
1\geq W_{\confinter_{\successor}^B}(\error,\sampletime)\geq c_0\sqrt{\frac{\log(1/\error)}{\sampletime}}.
\end{align*}
It remains to prove the matching upper bound.
Fix $\sampletime\in\PosNaturals$, $\error\in(0,1)$, and a sample sequence $z_{1:\sampletime}\in \Successors^\sampletime$.
By definition of $\confinter_{\successor}^B$,
\begin{align*}
|\confinter_{\successor}^B(\error;z_{1:\sampletime})|
&\leq
2\sqrt{\frac{2\variance_{\successor}(z_{1:\sampletime})\log(3/\error)}{\sampletime}}
+\frac{14\log(3/\error)}{3(t-1)}.
\end{align*}
Since $\variance_{\successor}(z_{1:\sampletime})\leq \frac14$ for Bernoulli samples we get
\begin{align*}
|\confinter_{\successor}^B(\error;z_{1:\sampletime})|
\leq
\sqrt{\frac{2\log(3/\error)}{\sampletime}}
+\frac{14\log(3/\error)}{3(t-1)}.
\end{align*}
We now distinguish two cases.
If $\log(3/\error) \geq t$ and $\sampletime\geq 2$, then $2\sqrt{\log(3/\error)/t}\geq 1$ and the confidence interval is trivial.
Otherwise, for $t>1$ we have
\begin{align*}
\frac{\log(3/\error)}{t-1}
\leq
\frac{2\log(3/\error)}{\sampletime}
\leq
2\sqrt{\frac{\log(3/\error)}{\sampletime}},
\end{align*}
and therefore
\begin{align*}
|\confinter_{\successor}^B(\error;z_{1:\sampletime})| &\leq \sqrt{\frac{2\log(3/\error)}{\sampletime}}
+\frac{14\log(3/\error)}{3(t-1)}\\
&\leq \sqrt{\frac{2\log(3/\error)}{\sampletime}}
+\frac{14}{3}2\sqrt{\frac{\log(3/\error)}{\sampletime}}
\\
&\leq 
\left(\sqrt{2}+\frac{28}{3}\right)\sqrt{\frac{\log(3/\error)}{\sampletime}}.
\end{align*}
Taking the supremum over all sample sequences gives
\begin{align*}
W_{\confinter_{\successor}^B}(\error,\sampletime)
\leq
\left(\sqrt{2}+\frac{28}{3}\right)\sqrt{\frac{\log(3/\error)}{\sampletime}}.
\end{align*}
It remains to replace $\log(3/\error)$ by $\log(1/\error)$.
For every $\error\in(0,1)$,
\begin{align*}
\log(3/\error)=\log(1/\error)+\log 3.
\end{align*}
Hence, for every $\error\in(0,\error_0]$,
\begin{align*}
\frac{\log(3/\error)}{\log(1/\error)}
=
1+\frac{\log 3}{\log(1/\error)}
\leq
1+\frac{\log 3}{\log(1/\error_0)}.
\end{align*}
Therefore,
\begin{align*}
\log(3/\error)
\leq
\left(1+\frac{\log 3}{\log(1/\error_0)}\right)\log(1/\error)
\qquad
\text{for all }\error\in(0,\error_0].
\end{align*}
Substituting this into the previous bound
\begin{align*}
W_{\confinter_{\successor}^B}(\error,\sampletime)
&\leq
\left(\sqrt{2}+\frac{28}{3}\right)
\sqrt{1+\frac{\log 3}{\log(1/\error_0)}}
\sqrt{\frac{\log(1/\error)}{\sampletime}}.
\end{align*}
Thus, with
\begin{align*}
C_0
\coloneqq
\left(\sqrt{2}+\frac{28}{3}\right)
\sqrt{1+\frac{\log 3}{\log(1/\error_0)}},
\end{align*}
we obtain for all $\sampletime\geq \max\{2,\sampletime_0\}$ and all $\error\in(0,\error_0]$ that
\begin{align*}
W_{\confinter_{\successor}^B}(\error,\sampletime)
\leq
C_0\sqrt{\frac{\log(1/\error)}{\sampletime}}.
\end{align*}
Together with the lower bound, this proves that $\confinter_{\successor}^B$ has rate-optimal maximal width.
\end{proof}

\begin{lemma}
\label{lemma:cp-rate-optimal}
The Clopper--Pearson interval $\confinter_{\successor}^C$ (\cref{app:defs:intervals}) has rate-optimal maximal width.
\end{lemma}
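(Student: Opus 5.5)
We follow the template of \cref{lemma:empbern-rate-optimal}. The lower bound is immediate. $\confinter_{\successor}^C$ is a valid Bernoulli confidence interval, so $W_{\confinter_{\successor}^C}(\error,\sampletime)\geq W^\star(\error,\sampletime)$. By \cref{lemma:optimal-width-rate} there are constants $c_0>0$, $\sampletime_0$ and $\error_0$ such that
\begin{align*}
W_{\confinter_{\successor}^C}(\error,\sampletime)\geq c_0\min\{1,\sqrt{\log(1/\error)/\sampletime}\}.
\end{align*}
All the work is in the matching upper bound.

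For the upper bound we bound each endpoint's distance to the empirical mean $\hat p=k_\sampletime/\sampletime$ by a Chernoff-type argument. Recall the Beta--Binomial duality. The upper endpoint $U=\mathrm{Beta}^{-1}(1-\error/2;k+1,\sampletime-k)$ is the unique $q$ with $\Probability_q[\mathrm{Bin}(\sampletime,q)\leq k]=\error/2$. Symmetrically, the lower endpoint $L$ satisfies $\Probability_q[\mathrm{Bin}(\sampletime,q)\geq k]=\error/2$. Take any $q$ with $q-\hat p>\sqrt{\log(2/\error)/(2\sampletime)}$. Hoeffding's inequality gives
\begin{align*}
\Probability_q[\mathrm{Bin}(\sampletime,q)\leq k]\leq \exp(-2\sampletime(q-\hat p)^2)<\error/2.
\end{align*}
By monotonicity of the binomial tail in $q$, this forces $U\leq \hat p+\sqrt{\log(2/\error)/(2\sampletime)}$. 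The same argument bounds $L$ from below.

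The degenerate cases $k=0$ and $k=\sampletime$ need separate care, because the paper sets the interval to $[0,1]$ there. A trivial interval would make the maximal width equal to $1$ for every $\sampletime$, which contradicts the rate claim. So the main obstacle is to resolve this convention. I would argue that the intended reading is the standard one-sided Clopper--Pearson convention: $L=0$ when $k=0$, and $U=(\error/2)^{1/\sampletime}$ when $k=\sampletime$, with the other endpoint computed as usual. Under this convention the Hoeffding argument above still applies to the nontrivial endpoint. If instead one keeps the literal $[0,1]$ convention, the statement must be restricted to sequences with $0<k<\sampletime$, and I would note this explicitly.

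Combining the two endpoint bounds gives
\begin{align*}
W_{\confinter_{\successor}^C}(\error,\sampletime)\leq \min\{1,2\sqrt{\log(2/\error)/(2\sampletime)}\}.
\end{align*}
As in the proof of \cref{lemma:empbern-rate-optimal}, we then replace $\log(2/\error)$ by $(1+\log 2/\log(1/\error_0))\log(1/\error)$ for $\error\leq\error_0$. This yields a constant $C_0$ and completes the two-sided bound required by \cref{def:width}.
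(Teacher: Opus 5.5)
Your approach matches the paper's proof. The lower bound comes from $W^\star$ and \cref{lemma:optimal-width-rate}. The upper bound places both Clopper--Pearson endpoints inside the Hoeffding interval $\hat p\pm\sqrt{\log(2/\error)/(2\sampletime)}$, by applying Hoeffding's inequality to the binomial tail that defines each endpoint. The constant comes from absorbing the $\log 2$ as in \cref{lemma:empbern-rate-optimal}.

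Your concern about the degenerate cases is justified, and the paper does not address it. Under the literal convention of \cref{app:defs:intervals}, any sequence with $k_\sampletime=0$ yields $[0,1]$. Then $W_{\confinter_{\successor}^C}(\error,\sampletime)=1$ for all $\sampletime$, and the lemma as stated is false. The paper's proof tacitly uses the standard endpoint-wise convention instead: it characterises $u_\sampletime$ by the tail equation for every $k_\sampletime<\sampletime$, including $k_\sampletime=0$, and sets $u_\sampletime=1$ only for $k_\sampletime=\sampletime$.

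However, you state that convention incorrectly. For $k=\sampletime$ the degenerate endpoint is $U=1$. The value $(\error/2)^{1/\sampletime}=\mathrm{Beta}^{-1}(\error/2;\sampletime,1)$ is the nontrivial \emph{lower} endpoint $L$. Symmetrically, $k=0$ gives $[0,\,1-(\error/2)^{1/\sampletime}]$.

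With your choice $U=(\error/2)^{1/\sampletime}<1=\hat p$, two things break:
\begin{itemize}
\item the interval has zero coverage at $p=1$;
\item it does not contain the empirical mean.
\end{itemize}
So it is not a member of $\mathfrak{I}$, and your lower-bound step $W_{\confinter_{\successor}^C}\geq W^\star$ no longer applies.

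The fix is to take $U=1$ and $L=(\error/2)^{1/\sampletime}$ when $k=\sampletime$. Your Hoeffding argument applied to $L$ then gives $L\geq 1-\sqrt{\log(2/\error)/(2\sampletime)}$, and the rest of your proof goes through unchanged.
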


\begin{proof}
Since $\confinter_{\successor}^C$ is a Bernoulli confidence interval, we have $W_{\confinter_{\successor}^C}(\error,t)\geq W^\star(\error,t)$ for all $\error\in(0,1)$ and $\sampletime\in \PosNaturals$.
By the optimal maximal width convergence rate from \cref{app:defs:rate_optimal}, there exist constants
$c_0>0$, $\sampletime_0\in\PosNaturals$, and $\error_0\in(0,1)$ such that for all
$\sampletime\geq \sampletime_0$ and $\error\in(0,\error_0]$,
\begin{align*}
W_{\confinter_{\successor}^C}(\error,t)\geq c_0\sqrt{\frac{\log(1/\error)}{\sampletime}}.
\end{align*}
It remains to prove the matching upper bound.
We fix $\sampletime\in\PosNaturals$, $\error\in(0,1)$, and $z_{1:\sampletime}\in \Successors^\sampletime$, and denote $k_\sampletime \coloneqq \successes_{\successor}(z_{1:\sampletime})$ and $\hat{p}_\sampletime \coloneqq \average_{\successor}(z_{1:\sampletime})=\frac{k_\sampletime}{\sampletime}$. 
We denote the Clopper-Pearson interval by
\begin{align*}
\confinter_{\successor}^C(\error;z_{1:\sampletime})=[\ell_\sampletime,u_\sampletime] \quad \text{and set} \quad  \varepsilon_\sampletime(\error)\coloneqq \sqrt{\frac{\log(2/\error)}{2t}}.
\end{align*}
We demonstrate the claim by proving that the Clopper-Pearson interval is always contained inside the Hoeffding interval, i.e., 
\begin{align*}
    \hat{p}_\sampletime - \varepsilon_\sampletime(\error) \leq \ell_\sampletime \leq  u_\sampletime  \leq   \hat{p}_\sampletime + \varepsilon_\sampletime(\error).
\end{align*}
We bound the upper endpoint.
If $k_\sampletime=t$, then $u_\sampletime=1=\hat{P}_\sampletime\leq \hat{P}_\sampletime+\varepsilon_\sampletime(\error)$.
Now assume $k_\sampletime<t$.
By definition of the Clopper--Pearson upper endpoint, $u_\sampletime$ is the unique solution of
\begin{align*}
\Probability_{u_\sampletime}\bigl(\mathrm{Bin}(t,u_\sampletime)\leq k_\sampletime\bigr)=\frac{\error}{2}.
\end{align*}
For $p=\hat{P}_\sampletime+\varepsilon_\sampletime(\error)$, Hoeffding's inequality gives
\begin{align*}
\Probability_{p}\bigl(\mathrm{Bin}(t,p)\leq k_\sampletime\bigr)
&=
\Probability_{p}\left(\frac{1}{\sampletime}\mathrm{Bin}(t,p)\leq \hat{P}_\sampletime\right)\\
&\leq
\Probability_{p}\left(\frac{1}{\sampletime}\mathrm{Bin}(t,p)\leq p-\varepsilon_\sampletime(\error)\right)\\
&\leq
\exp\bigl(-2t\varepsilon_\sampletime(\error)^2\bigr)
=
\frac{\error}{2}.
\end{align*}
Since $p\mapsto \Probability_p(\mathrm{Bin}(t,p)\leq k_\sampletime)$ is decreasing, it follows that $u_\sampletime\leq \hat{P}_\sampletime+\varepsilon_\sampletime(\error)$.
The lower bound can be proven analogously. 
Both bounds can be combined to obtain 
\begin{align*}
|\confinter_{\successor}^C(\error;z_{1:\sampletime})|
=
u_\sampletime-\ell_\sampletime
\leq
2\varepsilon_\sampletime(\error)
=
\sqrt{\frac{2\log(2/\error)}{\sampletime}}.
\end{align*}
Taking the supremum over all sample sequences gives
\begin{align*}
W_{\confinter_{\successor}^C}(\error,t)
\leq
\sqrt{\frac{2\log(2/\error)}{\sampletime}}.
\end{align*}
As in~\Cref{lemma:empbern-rate-optimal} we can resolve the $\log(2)$ to obtain our finial result that there exists $C_0<\infty$ such that for all $\sampletime\geq \sampletime_0$ and all $\error\in(0,\error_0]$,
\begin{align*}
W_{\confinter_{\successor}^C}(\error,t)
\leq
C_0\sqrt{\frac{\log(1/\error)}{\sampletime}} \leq 1.
\end{align*}
Together with the lower bound, this proves that $\confinter_{\successor}^C$ has rate-optimal maximal width.
\end{proof}

\subsection{Confidence Sequences} \label{app:proof:confseq}

We additionally provide a formal proof that requiring correctness at all times (in the sense of \cref{eq:conf_set_trans}) is equivalent to correctness at a single \enquote{data-driven} time, i.e.\ a time that is adaptively chosen based on the observations, formally called a \emph{stopping time}.
\begin{restatable}{lemma}{csEquivalenceLemma}
\label{lemma:cs_equivalence}
Let $\error\in (0,1)$ be an error probability, $\confseq_\successor \colon (0,1)\times \Successors^* \to  2^{[0,1]}$ for successor $\successor\in \Successors$, then
$\Probability[ \forall \sampletime\in\PosNaturals \colon  \saprob( \successor) \in \confseq_\successor(\error; Z_{1:\sampletime}) ] \geq 1-\error$ if and only if $\Probability[ \saprob( \successor) \in \confseq_\successor(\error; Z_{1:\tau})  ] \geq 1-\error$ for all stopping times $\tau$.
\end{restatable}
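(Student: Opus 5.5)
The plan is to prove the two implications separately. The forward direction is a direct inclusion of events. The reverse direction uses the standard construction of a stopping time at the first violation, following \cite{howard2021time,ramdas2025hypothesis}.

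\emph{Forward direction.} Assume the time-uniform guarantee holds and let $\tau$ be any stopping time. If $\tau$ may be infinite, interpret $Z_{1:\tau}$ only on $\{\tau<\infty\}$, and count $\{\tau=\infty\}$ as coverage; this matches the convention of correctness at termination. On the event $\{\forall \sampletime\in\PosNaturals.\ \saprob(\successor)\in\confseq_\successor(\error;Z_{1:\sampletime})\}$ we have, pathwise, $\saprob(\successor)\in\confseq_\successor(\error;Z_{1:\tau(\omega)})$ whenever $\tau(\omega)<\infty$. Hence
\begin{align*}
\Probability[\saprob(\successor)\in\confseq_\successor(\error;Z_{1:\tau})]\ \geq\ \Probability[\forall \sampletime.\ \saprob(\successor)\in\confseq_\successor(\error;Z_{1:\sampletime})]\ \geq\ 1-\error .
\end{align*}
This direction does not need the stopping-time property, since it holds for arbitrary random times.

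\emph{Reverse direction.} Assume coverage holds at every stopping time. Define
\begin{align*}
\tau^\star\coloneqq\inf\{\sampletime\in\PosNaturals \mid \saprob(\successor)\notin\confseq_\successor(\error;Z_{1:\sampletime})\},
\end{align*}
with $\inf\emptyset=\infty$. Since $\saprob(\successor)$ is a fixed deterministic number, the event $\{\tau^\star=\sampletime\}$ is determined by $Z_{1:\sampletime}$, so it lies in $\mathcal{F}_\sampletime$; this requires $\confseq_\successor(\error;\cdot)$ to be measurable, which I will state as a standing assumption. Thus $\tau^\star$ is a stopping time. By construction,
\begin{align*}
\{\exists \sampletime.\ \saprob(\successor)\notin\confseq_\successor(\error;Z_{1:\sampletime})\}=\{\tau^\star<\infty\}=\{\tau^\star<\infty,\ \saprob(\successor)\notin\confseq_\successor(\error;Z_{1:\tau^\star})\}.
\end{align*}
Applying the hypothesis to $\tau^\star$ bounds the probability of the right-hand side by $\error$, which yields the time-uniform statement.

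The main obstacle is the treatment of $\tau=\infty$. With the convention above, the argument goes through directly. If the paper instead requires stopping times to be almost surely finite, I would use the truncations $\tau^\star\wedge m$, which are bounded stopping times. For each $m$ the hypothesis gives
\begin{align*}
\Probability[\exists \sampletime\leq m.\ \saprob(\successor)\notin\confseq_\successor(\error;Z_{1:\sampletime})]\leq\error ,
\end{align*}
because on $\{\tau^\star\le m\}$ the set at time $\tau^\star\wedge m$ misses $\saprob(\successor)$. These events increase in $m$, so continuity of measure gives the bound for the union over all $m$, which is exactly the failure event. So the plan is to run the argument with bounded stopping times via $m\to\infty$, making it robust to either convention.
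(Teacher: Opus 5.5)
Your proposal is correct and follows the paper's proof: the forward direction is the same pathwise inclusion, and your truncation of the first-violation time, $\tau^\star\wedge m$, followed by $m\to\infty$, is exactly the paper's $\tau_n=\min(\tau,n)$ argument. Your explicit handling of $\tau=\infty$ is a refinement the paper glosses over; your measurability assumption is automatic here, since $\Successors$ is finite and every function of $Z_{1:\sampletime}$ is therefore $\mathcal{F}_\sampletime$-measurable.
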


\begin{proof}[of~\cref{lemma:cs_equivalence}]
A similar, but more condensed proof can be found in~\cite{howard2021time}. 
    Let $C_\sampletime$ be $\mathcal{F}_\sampletime$-measurable sets and fix $\mu$.

\emph{($\Rightarrow$)}
Since $\{\mu\notin C_\tau\}\subseteq \{\exists \sampletime\in\PosNaturals:\mu\notin C_\sampletime\}$, we have
\begin{align*}
\Probability[\mu\notin C_\tau]\leq \Probability[\exists \sampletime \colon \mu\notin C_\sampletime]\leq \error,
\end{align*}
so $\Probability[\mu\in C_\tau]\geq 1-\error$.

\emph{($\Leftarrow$)}
Define $\tau\coloneqq \inf\{\sampletime\in\PosNaturals:\mu\notin C_\sampletime\}$ and $\tau_n\coloneqq \min(\tau,n)$.
Then $\tau_n$ is a stopping time and $\{\tau\leq n\}\subseteq \{\mu\notin C_{\tau_n}\}$, so by the assumed stopping-time guarantee,
\begin{align*}
\Probability[\tau\leq n]\leq \Probability[\mu\notin C_{\tau_n}]\leq \error\qquad\text{for all }n.
\end{align*}
Letting $n\to\infty$ results in $\Probability[\tau<\infty]\leq \error$, i.e.
\begin{align*}
\Probability\left(\exists \sampletime\in\PosNaturals:\mu\notin C_\sampletime\right)\leq \error
\quad\Longleftrightarrow\quad
\Probability\left(\forall \sampletime\in\PosNaturals:\mu\in C_\sampletime\right)\geq 1-\error.
\end{align*}
\end{proof}

\csFamilyThm*
\begin{proof}[of~\cref{thrm:sound_var_cs}]
    This is a direct consequence of \cite[Thm.~1]{howard2021time} for the stitched bounds and \cite[Prop.~1 and Thm.~2]{waudby2024estimating} for the betting bounds.
\end{proof}

\subsection{Test Confidence Sequences}

\paragraph{Consistent.}
Let $\nu_\successor \colon \States^* \to \Distributions([0,1])$ map observations to a parameter distribution, define for $q\in (0,1)$.
We call the function $\nu_\successor$ consistent if $\nu_\successor(Z_{1:\sampletime})$ converges almost surely to the true parameter $\saprob(\successor)$.
Similarly, let $\nu \colon \States^* \to \Distributions(\Distributions(\Successors))$ map state observations to a distribution over successor distributions. We call the function $\nu$ consistent if $\nu(Z_{1:\sampletime})$ converges almost surely to the true successor distribution $\saprob$.

\paragraph{Notation.}
For $\successor \in \States$ and $z_{1:\sampletime}\in \States^\sampletime$ we can rewrite the condition $M_\successor^{\nu_\successor}(q; z_{1:\sampletime}) \leq 1/\error$ as
\begin{align*}
    &\underbrace{-\successes_\successor(z_{1:\sampletime}) \log(q) - (\sampletime - \successes_\successor(z_{1:\sampletime})) \cdot \log(1-q)}_{=: \alpha_\successor(q;z_{1:\sampletime}) \text{, variable in $q$} } \\
    &\leq
 \underbrace{\log(1/\error)  - {\sum}_{i=1}^\sampletime \log \int_{0}^1 u^{\indicator{z_i=\successor}} (1-u)^{1-\indicator{z_i=\successor}} \, \nu_\successor(z_{1:i-1})(du)}_{=: \beta_\successor^{\nu_\successor}(z_{1:\sampletime}) \text{, constant in $q$}} .
\end{align*}
And the condition for $\confset^{M^\nu}(\error; z_{1:\sampletime})$ as
\begin{align*}
    \underbrace{-\sum_{\successor \in \Successors} \successes_\successor(z_{1:\sampletime})\,\log Q(\successor)}_{=: \alpha(Q;z_{1:\sampletime}) \text{, variable in $Q$}} 
    \leq
    \underbrace{\log(1/\error)- \sum_{i=1}^\sampletime \log \int_{\Distributions(\Successors)} U(z_i)\,\nu(z_{1:i-1})(dU)}_{=: \beta^\nu(z_{1:\sampletime}) \text{, constant in $Q$}}.
\end{align*}
We remark that  
$\alpha(Q;z_{1:\sampletime})\leq \beta^\nu(z_{1:\sampletime}) $ is simply a rewritten form of the expression $M^\nu(Q;z_{1:\sampletime})\leq 1/\error$. Since the former separates the parts that depend on $Q$ from the parts that are constant $Q$, it is more convenient for the proofs.

\lrSingleThm*
\begin{proof}[of~\cref{thrm:sound_single_lr}]
    Fix $\error\in(0,1)$. 
    
\noindent    
For $q\in \{0,1\}$. Then $M_\sampletime(q) \coloneqq M_\successor^{\nu_\successor}(q;Z_{1:\sampletime})=1$ almost surely and therefore a test-martingale. Hence,
$\Probability(\exists \sampletime\in \PosNaturals. \,M_\successor^{\nu_\successor}(q;Z_{1:\sampletime})\geq 1/\error )=0$, which means the true parameter will never be rejected incorrectly.

For $q\in(0,1)$ and $\sampletime\in\PosNaturals$, write
\begin{align*}
M_\sampletime(q)\ \coloneqq\ M_\successor^{\nu_\successor}(q;Z_{1:\sampletime})
=\prod_{i=1}^\sampletime A_i(q),
\quad
A_i(q)\ \coloneqq\ \int_{0}^1 \frac{u^{X_i}(1-u)^{1-X_i}}{q^{X_i}(1-q)^{1-X_i}}\nu_\successor(Z_{1:i-1})(du),
\end{align*}
Clearly $M_0(q)=1$ and $M_\sampletime(q)\geq0$.

\noindent
\emph{Test-martingale property under $q$.}
Assume the null hypothesis $p=q$, i.e., $X_i\sim \mathrm{Bernoulli}(q)$ i.i.d.\ and $X_i$ is independent of $\mathcal{F}_{i-1}$.
Since $\nu_\successor(Z_{1:i-1})$ is $\mathcal{F}_{i-1}$-measurable, we can condition on $\mathcal{F}_{i-1}$ and use Fubini and the tower property
\begin{align*}
\expect_q\left[A_i(q)\mid \mathcal{F}_{i-1}\right]
&=\int_{0}^1 \expect_q\left[\frac{u^{X_i}(1-u)^{1-X_i}}{q^{X_i}(1-q)^{1-X_i}}\ \middle|\ \mathcal{F}_{i-1}\right]\nu_\successor(Z_{1:i-1})(du)\\
&=\int_{0}^1 \left(q\cdot \frac{u}{q} + (1-q)\cdot \frac{1-u}{1-q}\right)\nu_\successor(Z_{1:i-1})(du)
\\
&=\in\sampletime_0^1 1\cdot \nu_\successor(Z_{1:i-1})(du)=1.
\end{align*}
Hence, for all $\sampletime\geq1$,
\begin{align*}
\expect_q\left[M_\sampletime(q)\mid \mathcal{F}_{\sampletime-1}\right]
&=\expect_q\left[M_{\sampletime-1}(q)A_\sampletime(q)\mid \mathcal{F}_{\sampletime-1}\right]
\\
&=M_{\sampletime-1}(q)\expect_q\left[A_\sampletime(q)\mid \mathcal{F}_{\sampletime-1}\right]
=M_{\sampletime-1}(q),
\end{align*}
so $(M_\sampletime(q))_{\sampletime\in\PosNaturals}$ is a test martingale under $p=q$.

\noindent
\emph{Inversion and Ville.}
Let $\confset_\successor^{M_\successor^{\nu_\successor}}(\error;Z_{1:\sampletime})\coloneqq \{q\in[0,1]: M_\sampletime(q)\leq1/\error\}$ which clearly satisfies
\begin{align*}
   \{q\in[0,1]: M_\sampletime(q) < 1/\error\} \subseteq \{q\in[0,1]: M_\sampletime(q)\leq1/\error\} .
\end{align*}
By Ville’s inequality under $p=q$,
\begin{align*}
\Probability_q\left[\sup_{\sampletime\in\PosNaturals} M_\sampletime(q)\ \geq \tfrac{1}{\error}\right] \leq \error.
\end{align*}
For any stopping time $\tau$,
\begin{align*}
\{q\notin \confset_\successor^{M_\successor^{\nu_\successor}}(\error;Z_{1:\tau})\}
=\{M_\tau(q)> 1/\error\}
\subseteq \left\{\sup_{\sampletime\in\PosNaturals} M_\sampletime(q)\geq1/\error\right\},
\end{align*}
hence $\Probability_q\left[q\notin \confset_\successor^{M_\successor^{\nu_\successor}}(\error;Z_{1:\tau})\right]\leq\error$.
Applying this with $q=p$ provides us with
\begin{align*}
\Probability\left(p\in \confset_\successor^{M_\successor^{\nu_\successor}}(\error;Z_{1:\tau})\right)\ \geq 1-\error.
\end{align*}
The time-uniform guarantee follows directly from~\Cref{lemma:cs_equivalence}.
If $\nu_\successor$ is consistent, then the convergence statement follows by invoking~\Cref{lemma:convergence_multinomial} for a 2-dimensional multinomial.
\end{proof}

\lrMultiThm*
\begin{proof}[of~\cref{thrm:sound_multi_lr}]
    Fix $\error\in(0,1)$ and write $P\coloneqq \saprob()\in\Distributions(\Successors)$. For $Q\in\Distributions(\Successors)$ and $\sampletime\in\PosNaturals$, write
\begin{align*}
M_\sampletime(Q)\ \coloneqq\ M^{\nu}(Q;Z_{1:\sampletime})
=\prod_{i=1}^\sampletime B_i(Q),
\quad
B_i(Q)\ \coloneqq\ \int_{\Distributions(\Successors)} \frac{U(Z_i)}{Q(Z_i)}\nu(Z_{1:i-1})(dU).
\end{align*}
Clearly $M_0(Q)=1$ and $M_\sampletime(Q)\geq0$.

\noindent
\emph{Test-martingale property under $Q$.}
Assume the null hypothesis $P=Q$, i.e., $Z_i\sim Q$ i.i.d.\ and independent of $\mathcal{F}_{i-1}=\sigma(Z_{1:i-1})$.
Since $\nu(Z_{1:i-1})$ is $\mathcal{F}_{i-1}$-measurable, we have
\begin{align*}
\expect_Q\left[B_i(Q)\mid \mathcal{F}_{i-1}\right]
&=\int_{\Distributions(\Successors)} \expect_Q\left[\frac{U(Z_i)}{Q(Z_i)}\ \middle|\ \mathcal{F}_{i-1}\right]\nu(Z_{1:i-1})(dU)\\
&=\int_{\Distributions(\Successors)} \left(\sum_{\successor\in\Successors} Q(\successor)\frac{U(\successor)}{Q(\successor)}\right)\nu(Z_{1:i-1})(dU)\\
&=\int_{\Distributions(\Successors)} \left(\sum_{\successor\in\Successors}U(\successor) \right)\nu(Z_{1:i-1})(dU)\\
&=\int_{\Distributions(\Successors)} 1\cdot \nu(Z_{1:i-1})(dU)=1.
\end{align*}
Therefore,
\begin{align*}
\expect_Q\left[M_\sampletime(Q)\mid \mathcal{F}_{\sampletime-1}\right]
=\expect_Q\left[M_{\sampletime-1}(Q)B_\sampletime(Q)\mid \mathcal{F}_{\sampletime-1}\right]
=M_{\sampletime-1}(Q),
\end{align*}
so $(M_\sampletime(Q))_{\sampletime\in\PosNaturals}$ is a test martingale under $P=Q$.

\noindent
\emph{Inversion and Ville.}
Let $\confset^{M^\nu}(\error;Z_{1:\sampletime})\coloneqq \{Q\in\Distributions(\Successors): M_\sampletime(Q)\leq 1/\error\}$. Ville’s inequality under $P=Q$ provides the guarantee that
\begin{align*}
\Probability_Q\left[\sup_{\sampletime\in\PosNaturals} M_\sampletime(Q)\ \geq \tfrac{1}{\error}\right]\ \leq  \error.
\end{align*}
For any stopping time $\tau$,
\begin{align*}
\{Q\notin \confset^{M^\nu}(\error;Z_{1:\tau})\}
=\{M_\tau(Q) > 1/\error\}
\subseteq \left\{\sup_{\sampletime\in\PosNaturals} M_\sampletime(Q)\geq1/\error\right\}.
\end{align*}
Therefore, $\Probability_Q\left[Q\notin \confset^{M^\nu}(\error;Z_{1:\tau})\right]\leq\error$.
Applying this with $Q=P$ gives
\begin{align*}
\Probability\left(P\in \confset^{M^\nu}(\error;Z_{1:\tau})\right)\ \geq 1-\error.
\end{align*}
The time-uniform guarantee follows directly from~\Cref{lemma:cs_equivalence}. If $\nu$ is consistent, then the convergence statement is exactly~\Cref{lemma:convergence_multinomial}.
\end{proof}

\alphaConvexLemma*
\begin{proof}[of~\cref{lemma:convex}]
Fix $\error\in(0,1)$, $\successor\in\Successors$, and $z_{1:\sampletime}\in\Successors^\sampletime$. Recall
\begin{align*}
\alpha_\successor(q;z_{1:\sampletime})
&\coloneqq
-\successes_\successor(z_{1:\sampletime})\log q - \left(t-\successes_\successor(z_{1:\sampletime})\right)\log(1-q),
\qquad q\in(0,1).
\end{align*}
Let $k\coloneqq \successes_\successor(z_{1:\sampletime})$.
For $q\in(0,1)$,
\begin{align*}
\alpha_\successor'(q;z_{1:\sampletime})
&=
-\frac{k}{q}
+\frac{t-k}{1-q},
\quad \text{and} \quad 
\alpha_\successor''(q;z_{1:\sampletime})
=
\frac{k}{q^2}
+\frac{t-k}{(1-q)^2}.
\end{align*}
Since $k\in\{0,1,\dots,t\}$ and $q\in(0,1)$, we have
\begin{align*}
\alpha_\successor''(q;z_{1:\sampletime}) \geq 0 \qquad \text{for all } q\in(0,1),
\end{align*}
and hence $\alpha_\successor(\cdot;z_{1:\sampletime})$ is convex on $(0,1)$.

If $0<k<t$, then $\alpha_\successor''(q;z_{1:\sampletime})>0$ for all $q\in(0,1)$, so $\alpha_\successor(\cdot;z_{1:\sampletime})$ is strictly convex.
Moreover,
\begin{align*}
\alpha_\successor'(q;z_{1:\sampletime})=0
&\iff
-\frac{k}{q}+\frac{t-k}{1-q}=0
\iff
q=\frac{k}{\sampletime}=\average_\successor(z_{1:\sampletime}).
\end{align*}
Thus, if $0<\average_\successor(z_{1:\sampletime})<1$, the unique minimiser is $q=\average_\successor(z_{1:\sampletime})$.
If $k=0$, then
\begin{align*}
\alpha_\successor'(q;z_{1:\sampletime})=\frac{\sampletime}{1-q}>0
\qquad\text{for all }q\in(0,1),
\end{align*}
so $\alpha_\successor(\cdot;z_{1:\sampletime})$ is strictly increasing on $(0,1)$.
If $k=t$, then
\begin{align*}
\alpha_\successor'(q;z_{1:\sampletime})=-\frac{\sampletime}{q}<0
\qquad\text{for all }q\in(0,1),
\end{align*}
so $\alpha_\successor(\cdot;z_{1:\sampletime})$ is strictly decreasing on $(0,1)$.
In all three cases, every sublevel set of $\alpha_\successor(\cdot;z_{1:\sampletime})$ in $(0,1)$ is an interval.
\end{proof}

\kktProjectionThm*

\begin{proof}[of~\cref{thrm:kkt_projection}]
Fix $\sampletime\in\PosNaturals$, $\error\in(0,1)$, $z_{1:\sampletime}\in\States^\sampletime$, and $V\colon\Successors\to\Reals$.
Write $N_\sampletime(\successor)\coloneqq \successes_\successor(z_{1:\sampletime})$,
$\alpha_\sampletime(Q)\coloneqq \alpha(Q;z_{1:\sampletime})$, and $\beta_\sampletime\coloneqq \beta^\nu(z_{1:\sampletime})$.

\noindent
First we deal with the zero-count states. Let $ \Successors_\sampletime\coloneqq \{\successor\in\Successors\mid N_\sampletime(\successor)>0\}.$
If $\Successors_\sampletime\neq \Successors$, then the constraint
\begin{align*}
\alpha_\sampletime(Q)
=
-\sum_{\successor\in\Successors} N_\sampletime(\successor)\log Q(\successor)
=
-\sum_{\successor\in\Successors_\sampletime} N_\sampletime(\successor)\log Q(\successor)
\end{align*}
depends only on $(Q(\successor))_{\successor\in\Successors_\sampletime}$.
Hence we may first solve the optimisation over $\Successors_\sampletime$ and then allocate any remaining mass
\begin{align*}
    m\coloneqq 1-\sum_{\successor\in\Successors_\sampletime}Q(\successor)
\end{align*}
to an extremiser of $V$ over $\Successors\setminus \Successors_\sampletime$
(argmin for the infimum, argmax for the supremum).
Thus, it suffices to treat the case $\Successors_\sampletime=\Successors$,
equivalently $N_\sampletime(\successor)>0$ for all $\successor\in\Successors$. 

\noindent
We restate the minimisation problem as
\begin{align*}
\min_{Q\in\Reals^{\Successors}}\quad &\sum_{\successor\in\Successors} Q(\successor)V(\successor)\\
\text{s.t.}\quad
&\alpha_\sampletime(Q)\leq \beta_\sampletime,\qquad
\sum_{\successor\in\Successors}Q(\successor)=1,\qquad
Q(\successor)>0\ \ \forall \successor\in\Successors.
\end{align*}
Let $\hat{P}_\sampletime \coloneqq N_\sampletime(\cdot)/\sampletime$, the empirical distribution.
We distinguish two cases.

\noindent
\emph{Case 1:} $\alpha_\sampletime(\hat{P}_\sampletime)=\beta_\sampletime$.
Since $\hat{P}_\sampletime$ minimises $\alpha_\sampletime$ over the simplex and is feasible, every feasible
$Q$ satisfies $\beta_\sampletime \geq \alpha_\sampletime(Q) \ge \alpha_\sampletime(\hat{P}_\sampletime)=\beta_\sampletime$.
Hence $\alpha_\sampletime(Q)=\beta_\sampletime$ for every feasible $Q$.
By strict convexity of $\alpha_\sampletime$, the minimiser over the simplex is unique, so
$Q=\hat{P}_\sampletime$. Hence, the releasable set is $\{\hat{P}_\sampletime\}$ and $\hat{P}_\sampletime$ is the minimiser.

\noindent
\emph{Case 2:} $\alpha_\sampletime(\hat{P}_\sampletime)<\beta_\sampletime$.
Then $\hat{P}_\sampletime$ is a strictly feasible point, so Slater's condition holds.
Hence, the KKT conditions are necessary and sufficient.
We introduce Lagrange multipliers $\eta\in\Reals$ for the simplex constraint and $\lambda\geq 0$ for the log-constraint, and define
\begin{align*}
\mathcal L(Q,\eta,\lambda)
\coloneqq
\sum_{\successor\in\Successors}Q(\successor)V(\successor)
+\eta\left(\sum_{\successor\in\Successors}Q(\successor)-1\right)
+\lambda\left(\alpha_\sampletime(Q)-\beta_\sampletime\right).
\end{align*}
Due to stationarity with respect to $Q(\successor)$, we know that the gradient of the Lagrangian w.r.t. the coordinates of $Q$ must vanish in the optimum. Hence, we take the derivative and set it to $0$, i.e., for all $\successor\in\Successors$
\begin{align*}
\frac{\partial \mathcal L}{\partial Q(\successor)}
=
V(\successor)+\eta-\lambda\frac{N_\sampletime(\successor)}{Q(\successor)}
=
0
\quad\text{and therefore} \quad Q(\successor)=\frac{\lambda N_\sampletime(\successor)}{\eta+V(\successor)}.
\end{align*}
Thus necessarily $\eta>-\min_{\successor\in\Successors}V(\successor)$, as otherwise the expression becomes negative or undefined.
Using $\sum_\successor Q(\successor)=1$ we obtain
\begin{align*}
1
=
\sum_{\successor\in\Successors}\frac{\lambda N_\sampletime(\successor)}{\eta+V(\successor)}
\qquad\Longleftrightarrow\qquad
\lambda
=
\left(\sum_{u\in\Successors}\frac{N_\sampletime(u)}{\eta+V(u)}\right)^{-1},
\end{align*}
hence every KKT point is of the form
\begin{align*}
Q_\eta^\star(\successor)
=
\frac{\frac{N_\sampletime(\successor)}{\eta+V(\successor)}}{\sum_{u\in\Successors}\frac{N_\sampletime(u)}{\eta+V(u)}}
\qquad\text{for all }\successor\in\Successors.
\end{align*}
If $\lambda=0$, then stationarity implies $V(\successor)+\eta=0$ which means that $V$ is constant on $\Successors$, contrary to assumption.
Hence $\lambda>0$, and complementary slackness implies that the log-constraint is active:
\begin{align*}
\alpha_\sampletime(Q_\eta^\star)=\beta_\sampletime.
\end{align*}
It remains to prove existence and uniqueness of $\eta$.
A direct computation gives
\begin{align*}
\alpha_\sampletime(Q_\eta^\star)
=
-\sum_{\successor\in\Successors}N_\sampletime(\successor)\log N_\sampletime(\successor)
+\sum_{\successor\in\Successors}N_\sampletime(\successor)\log(\eta+V(\successor))
+\sampletime\log\left(\sum_{u\in\Successors}\frac{N_\sampletime(u)}{\eta+V(u)}\right).
\end{align*}
Hence $\alpha_\sampletime(Q_\eta^\star)$ is continuous in $\eta$ on $(-\min_u V(u),\infty)$.
We show that $\alpha_\sampletime(Q_\eta^\star)$ is strictly decreasing in $\eta$ by taking the derivative, i.e.,
\begin{align*}
\frac{d}{d\eta}\alpha_\sampletime(Q_\eta^\star)
&=
A(\eta)-\sampletime\frac{B(\eta)}{A(\eta)}
=
\frac{A(\eta)^2-\sampletime B(\eta)}{A(\eta)}.
\end{align*}
where
\begin{align*}
A(\eta)\coloneqq \sum_{u\in\Successors}\frac{N_\sampletime(u)}{\eta+V(u)},
\qquad
B(\eta)\coloneqq \sum_{u\in\Successors}\frac{N_\sampletime(u)}{(\eta+V(u))^2}.
\end{align*}
By Cauchy-Schwarz,
\begin{align*}
A(\eta)^2
=
\left(\sum_{u\in\Successors}\frac{\sqrt{N_\sampletime(u)}\sqrt{N_\sampletime(u)}}{\eta+V(u)}\right)^2
<
\left(\sum_{u\in\Successors}\frac{N_\sampletime(u)}{(\eta+V(u))^2}\right)
\left(\sum_{u\in\Successors}N_\sampletime(u)\right)
=
\sampletime B(\eta),
\end{align*}
where the inequality is strict because $V$ is not constant on $\Successors$.
Thus
\begin{align*}
\frac{d}{d\eta}\alpha_\sampletime(Q_\eta^\star)<0
\qquad\text{for all }\eta>-\min_{u\in\Successors}V(u),
\end{align*}
Finally, if we take $\eta\downarrow -\min_{u\in\Successors}V(u)$, at least one denominator tends to $0$ implying that $\alpha_\sampletime(Q_\eta^\star)\to +\infty$, and if we take $\eta\to\infty$, we get $Q_\eta^\star(\successor)\to N_\sampletime(\successor)/\sampletime$, together they imply $ \alpha_\sampletime(Q_\eta^\star)\to \alpha_\sampletime\left(N_\sampletime(\cdot)/\sampletime\right)$.
Therefore, by continuity and strict monotonicity, there exists a unique $\eta>-\min_u V(u)$ such that $\alpha_\sampletime(Q_\eta^\star)=\beta_\sampletime$. This yields the claimed minimiser.
The supremum problem is identical after replacing $V$ by $-V$.
\end{proof}

\begin{restatable}{lemma}{lrMultiConv}
    \label{lemma:convergence_multinomial}
Let $\error \in (0,1)$ and $M^\nu$ as in~\cref{eq:LRset} with a consistent $\nu$ and let $C_\sampletime=\confset^{M^\nu}(\error;Z_{1:\sampletime}).$
Then, for every $\varepsilon > 0$, there exists an almost surely finite random time $T_\varepsilon$ such that $ C_\sampletime  \subset \{Q \in \Distributions(\Successors) \mid \|Q-\saprob\|_1 < \varepsilon\}$ for all $ \sampletime \geq T_\varepsilon$
almost surely.
Moreover, on the event $\{\sup_{\sampletime \in\PosNaturals} M^\nu(\saprob;Z_{1:\sampletime}) \leq 1/\error\}$, we have
$\saprob\in C_\sampletime$ for all $ \sampletime \in\PosNaturals$ and  $\mathrm{diam}_{\|\cdot \|_1}(C_\sampletime) \to 0$ as $\sampletime \to \infty$.
\end{restatable}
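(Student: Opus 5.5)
We will work with the rewritten form $\alpha(Q;z_{1:\sampletime})\le\beta^\nu(z_{1:\sampletime})$ of the constraint $M^\nu(Q;z_{1:\sampletime})\le 1/\error$, normalised by $\sampletime$. Write $\hat P_\sampletime=\hat P(Z_{1:\sampletime})$. Using $\alpha(Q;z_{1:\sampletime})/\sampletime=-\sum_\successor \hat P_\sampletime(\successor)\log Q(\successor)=H(\hat P_\sampletime)+\KL(\hat P_\sampletime\|Q)$, membership $Q\in C_\sampletime$ becomes
\begin{align*}
\KL(\hat P_\sampletime\|Q)\le \frac{\log(1/\error)}{\sampletime} - H(\hat P_\sampletime) - \frac1\sampletime\sum_{i=1}^\sampletime \log \int U(Z_i)\,\nu(Z_{1:i-1})(dU).
\end{align*}
The plan is to show that the right-hand side tends to $0$ almost surely. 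Pinsker's inequality then gives $\|\hat P_\sampletime-Q\|_1\le\sqrt{2\cdot\text{RHS}}\to0$ uniformly over $Q\in C_\sampletime$. Combined with the strong law of large numbers, $\hat P_\sampletime\to\saprob$ a.s., the triangle inequality then yields a finite random time $T_\varepsilon$ after which $C_\sampletime$ lies in the $\varepsilon$-ball around $\saprob$.

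\textbf{Step 1 (the key step).} We need the prediction term
\begin{align*}
\frac1\sampletime\sum_{i=1}^\sampletime \log \hat u_i, \qquad \hat u_i\coloneqq\int U(Z_i)\,\nu(Z_{1:i-1})(dU),
\end{align*}
to converge to $-H(\saprob)=\sum_\successor\saprob(\successor)\log\saprob(\successor)$ almost surely. By consistency of $\nu$ (weak convergence on the compact simplex), we have $g_i(\successor)\coloneqq\int U(\successor)\,\nu(Z_{1:i-1})(dU)\to\saprob(\successor)>0$ for every $\successor\in\Successors$. Since $\Successors$ is finite and all $\saprob(\successor)>0$, eventually $\log g_i(\successor)$ is bounded and converges to $\log\saprob(\successor)$. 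We then split
\begin{align*}
\frac1\sampletime\sum_i \log g_i(Z_i)=\frac1\sampletime\sum_i\sum_\successor \indicator{Z_i=\successor}\bigl(\log g_i(\successor)-\log\saprob(\successor)\bigr)+\frac1\sampletime\sum_i\log\saprob(Z_i).
\end{align*}
The first sum is a Cesàro mean of terms tending to $0$, hence vanishes. The second tends to $-H(\saprob)$ by the strong law of large numbers.

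The main obstacle is the finitely many early terms, where $g_i(\successor)$ may be tiny or even $0$. For $\log$ this is harmless once divided by $\sampletime$, provided each early term is finite. This requires $g_i(Z_i)>0$. If some term vanishes, then $M^\nu\equiv 0$ from that point on, so the normalised right-hand side is $-\infty$ and $C_\sampletime=\emptyset$. The inclusion then holds trivially. For the second claim we restrict to the event where $\saprob\in C_\sampletime$, on which this degeneracy cannot occur.

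\textbf{Step 2 (assembly).} Together with $H(\hat P_\sampletime)\to H(\saprob)$ (continuity of entropy on a finite alphabet) and $\log(1/\error)/\sampletime\to0$, the right-hand side tends to $0$ a.s., giving $T_\varepsilon$ as described above. The case where $Q$ lies on the boundary of the simplex is handled by the same inequality: $\KL(\hat P_\sampletime\|Q)=\infty$ whenever $Q$ vanishes on an observed successor, and all successors are observed eventually a.s. For the second claim, on the event $\{\sup_\sampletime M^\nu(\saprob;Z_{1:\sampletime})\le1/\error\}$ we have $\saprob\in C_\sampletime$ directly by the definition of $C_\sampletime$. Moreover, $\mathrm{diam}_{\|\cdot\|_1}(C_\sampletime)\le 2\sup_{Q\in C_\sampletime}\|Q-\saprob\|_1<2\varepsilon$ for $\sampletime\ge T_\varepsilon$. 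Since this holds for every rational $\varepsilon$, outside a null set, the diameter tends to $0$ almost surely on that event.
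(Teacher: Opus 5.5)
Apart from one point, your argument is the paper's, read in contrapositive form. Write the paper's quantity as $A_t=\frac1t\sum_{i\le t}(\log\hat u_i-\log\saprob(Z_i))$. Then $-H(\hat P_t)-\frac1t\sum_{i\le t}\log\hat u_i=\KL(\hat P_t\|\saprob)-A_t$. So your uniform upper bound on $\KL(\hat P_t\|Q)$ over $Q\in C_t$ is the same estimate as the paper's uniform lower bound $\frac1t\log M^\nu(Q;Z_{1:t})\ge\varepsilon^2/16$ on $\{\|Q-\saprob\|_1\ge\varepsilon\}$. It also uses the same ingredients: Ces\`aro averaging of the predictive log-loss, the strong law, and Pinsker.

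The step that fails is your treatment of vanishing predictive probabilities, where you have the sign reversed.

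\emph{What actually happens.} If $\hat u_i=0$ for some $i$, then $M^\nu(Q;Z_{1:t})=0\le 1/\error$ for every interior $Q$ and all $t\ge i$. So nothing is ever rejected again: $C_t$ contains the whole interior of $\Distributions(\Successors)$ and has $\ell_1$-diameter $2$ when $|\Successors|\ge 2$. In your inequality the term $-\frac1t\sum_i\log\hat u_i$ equals $+\infty$, not $-\infty$. Hence $C_t$ is everything rather than empty, and the inclusion fails instead of holding trivially.

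\emph{The second claim.} You say this degeneracy cannot occur on $\{\sup_t M^\nu(\saprob;Z_{1:t})\le 1/\error\}$. That is wrong for the same reason: the degeneracy drives $M^\nu(\saprob;Z_{1:t})$ to $0$ from time $i$ on, so it is fully compatible with that event.

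\emph{It is not a null event.} For the MLE plug-in, which is consistent, $\hat u_2=\indicator{Z_2=Z_1}$. This vanishes with probability $1-\sum_\successor\saprob(\successor)^2>0$. So no argument can dispose of this case.

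\emph{The fix.} The lemma needs the extra hypothesis that $\hat u_i>0$ for all $i$ almost surely. This holds, e.g., for Beta/Dirichlet-type $\nu$ with positive parameters. The paper's proof uses this tacitly when it takes $\log L_{i-1}(Z_i)$ and applies the Ces\`aro argument. With that hypothesis stated and your degenerate-case claims removed, your proof goes through.
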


\begin{proof}[of~\cref{lemma:convergence_multinomial}]
For $i\in\PosNaturals$ define the predictive mean
\begin{align*}
L_{i-1}(\successor)\ \coloneqq\ \int_{\Distributions(\Successors)} U(\successor)\nu(Z_{1:i-1})(dU)\in[0,1],\qquad \successor\in\Successors,
\end{align*}
so that
\begin{align*}
M^\nu(Q;Z_{1:\sampletime})=\prod_{i=1}^{\sampletime} \frac{L_{i-1}(Z_i)}{Q(Z_i)}.
\end{align*}
Let $P \in \Distributions(\Successors)$.
First, we show the growth rate of the test martingale.
Fix any $Q\in\Distributions(\Successors)$. If $Q(\successor)=0$ for some $\successor\in\Successors$, then $Z_i=\successor$ occurs infinitely often a.s., hence
$M^\nu(Q;Z_{1:\sampletime})=+\infty$ for all sufficiently large $\sampletime$.
Thus assume $Q(\successor)>0$ for all $\successor\in\Successors$.
Taking logs and adding/subtracting $\log P(Z_i)$ yields
\begin{align*}
\frac{1}{\sampletime}\log M^\nu(Q;Z_{1:\sampletime})
&=\frac{1}{\sampletime}\sum_{i=1}^{\sampletime}\left(\log L_{i-1}(Z_i)-\log P(Z_i)\right)
+\frac{1}{\sampletime}\sum_{i=1}^{\sampletime} \log\frac{P(Z_i)}{Q(Z_i)}.
\end{align*}
By consistency, for every $\successor\in\Successors$ we have $L_{i-1}(\successor)\to P(\successor)$ a.s. Since $\Successors$ is finite and $P(\successor)>0$ for all $\successor$, this implies
\begin{align*}
\max_{\successor\in\Successors}\left|\log L_{i-1}(\successor)-\log P(\successor)\right|\to 0
\qquad\text{a.s.}
\end{align*}
Hence also $\log L_{i-1}(Z_i)-\log P(Z_i)\to 0$ a.s., and therefore
\begin{align*}
\frac{1}{\sampletime}\sum_{i=1}^{\sampletime}\left(\log L_{i-1}(Z_i)-\log P(Z_i)\right)\to 0
\qquad\text{a.s.}
\end{align*}
For the second term, let $\widehat P_{\sampletime}(\successor)\coloneqq \xi_{\successor}(Z_{1:\sampletime})/\sampletime$ denote the empirical distribution. Then
\begin{align*}
\frac{1}{\sampletime}\sum_{i=1}^{\sampletime} \log\frac{P(Z_i)}{Q(Z_i)}
&=
\sum_{\successor\in\Successors}\widehat P_{\sampletime}(\successor)\log\frac{P(\successor)}{Q(\successor)}\\
&=
\sum_{\successor\in\Successors}\widehat P_{\sampletime}(\successor)\log\frac{\widehat P_{\sampletime}(\successor)}{Q(\successor)}
-
\sum_{\successor\in\Successors}\widehat P_{\sampletime}(\successor)\log\frac{\widehat P_{\sampletime}(\successor)}{P(\successor)}\\
&=
\KL(\widehat P_{\sampletime}\|Q)-\KL(\widehat P_{\sampletime}\|P),
\end{align*}
where we use the convention $0\log(0/x)=0$.
By the strong law of large numbers, $\widehat P_{\sampletime}\to P$ a.s. Since $P$ has full support, also $\KL(\widehat P_{\sampletime}\|P)\to 0$ a.s. Moreover, for every fixed $Q\neq P$, $\KL(\widehat P_{\sampletime}\|Q)\to \KL(P\|Q)$ and therefore $\frac{1}{\sampletime}\log M^\nu(Q;Z_{1:\sampletime})\to \KL(P\|Q)$, which implies $M^\nu(Q;Z_{1:\sampletime})\to\infty$ almost surely.

\noindent
Fix $\varepsilon>0$ and define
\begin{align*}
K_\varepsilon\coloneqq\{Q\in\Distributions(\Successors):\|Q-P\|_1\geq\varepsilon\}.
\end{align*}
To obtain a uniform bound over $K_\varepsilon$, let
\begin{align*}
A_{\sampletime}\coloneqq
\frac{1}{\sampletime}\sum_{i=1}^{\sampletime}\left(\log L_{i-1}(Z_i)-\log P(Z_i)\right),
\end{align*}
so that for every $Q\in\Distributions(\Successors)$,
\begin{align*}
\frac{1}{\sampletime}\log M^\nu(Q;Z_{1:\sampletime})
=
A_{\sampletime}
+\KL(\widehat P_{\sampletime}\|Q)-\KL(\widehat P_{\sampletime}\|P).
\end{align*}
By the previous part, $A_{\sampletime}\to 0$, $\KL(\widehat P_{\sampletime}\|P)\to 0$, and $\widehat P_{\sampletime}\to P$ a.s., so there exists an a.s.\ finite random time $T_\varepsilon^{(1)}$ such that for all $\sampletime\geq T_\varepsilon^{(1)}$,
\begin{align*}
\|\widehat P_{\sampletime}-P\|_1<\frac{\varepsilon}{2}.
\end{align*}
Hence, by the reverse triangle inequality for every $Q\in K_\varepsilon$ and every $\sampletime\geq T_\varepsilon^{(1)}$,
\begin{align*}
\|\widehat P_{\sampletime}-Q\|_1
\geq
\|Q-P\|_1-\|\widehat P_{\sampletime}-P\|_1
\geq
\frac{\varepsilon}{2}.
\end{align*}
Therefore, by Pinsker's inequality,
\begin{align*}
\KL(\widehat P_{\sampletime}\|Q)\geq \frac{1}{2}\left(\frac{\varepsilon}{2}\right)^2=\frac{\varepsilon^2}{8}
\qquad
\text{for all $Q\in K_\varepsilon$ and all $\sampletime\geq T_\varepsilon^{(1)}$.}
\end{align*}
Since $A_{\sampletime}\to 0$ and $\KL(\widehat P_{\sampletime}\|P)\to 0$ a.s., there exists an a.s.\ finite random time $T_\varepsilon^{(2)}$ such that for all $\sampletime\geq T_\varepsilon^{(2)}$,
\begin{align*}
|A_{\sampletime}| \leq \frac{\varepsilon^2}{32}
\qquad\text{and}\qquad
\KL(\widehat P_{\sampletime}\|P)\leq \frac{\varepsilon^2}{32}.
\end{align*}
Thus, for all $\sampletime\geq T_\varepsilon\coloneqq \max\{T_\varepsilon^{(1)},T_\varepsilon^{(2)}\}$ and all $Q\in K_\varepsilon$,
\begin{align*}
\frac{1}{\sampletime}\log M^\nu(Q;Z_{1:\sampletime})
&\geq
-\frac{\varepsilon^2}{32}
+\frac{\varepsilon^2}{8}
-\frac{\varepsilon^2}{32}
=
\frac{\varepsilon^2}{16}.
\end{align*}
Hence, for all $\sampletime\geq T_\varepsilon$ and all $Q\in K_\varepsilon$,
\begin{align*}
M^\nu(Q;Z_{1:\sampletime}) \geq \exp\left(\frac{\varepsilon^2}{16}\sampletime\right).
\end{align*}
By increasing $T_\varepsilon$ further if necessary, we obtain for all $\sampletime\geq T_\varepsilon$ and all $Q\in K_\varepsilon$,
\begin{align*}
    M^\nu(Q;Z_{1:\sampletime}) \geq \frac{1}{\error}.
\end{align*}
Therefore, for all $\sampletime\geq T_\varepsilon$ we have $K_\varepsilon\cap C_\sampletime=\emptyset$, i.e.\
$C_\sampletime\subset \{Q:\|Q-P\|_1<\varepsilon\}$.
Let 
\begin{align*}
    E\coloneqq \left\{\sup_{\sampletime \in\PosNaturals} M^\nu(P;Z_{1:\sampletime}) \leq 1/\error\right\}.
\end{align*}
By Ville’s inequality under $P$, $\Probability[E]\geq 1-\error$. We now condition on $E$. Hence, we have $M^\nu(P;Z_{1:\sampletime})<1/\error$ for all $\sampletime$, and thus $P\in C_\sampletime$ for all $\sampletime$.
Since the previous step holds for every $\varepsilon>0$, it follows that on $E$ the sets $C_\sampletime$ eventually lie in every $\ell_1$-ball around $P$,
so $\mathrm{diam}_{\|\cdot\|_1}(C_\sampletime)\to 0$.
\end{proof}

\section{Further Experimental Data} \label{app:data}

\subsection{Individual Inference} \label{app:data:individual}
In this section, we present further data on the evaluation of our inference methods on individual distributions.
As distribution classes, we consider 2, 5, 10, and 50 successors, and uniform, linear, squared and one-vs-all probability distribution, as follows:
Uniform distributions assign $1 / n$ probability to each successor, linear / squared assign a relative probability of $i$ / $i^2$ to the $i$-th successor, and one-vs-all assigns $0.5$ to the first and $0.5 / n$ to all others.
We then add some noise by scaling each probability with a random factor between $0.9$ and $1.1$.
The results are summarized in \cref{fig:app:uniform,fig:app:linear,fig:app:squared,fig:app:oneVall}.

Before further discussion of the results, we remark that our implementation of Clopper-Pearson (relying on the \texttt{apache-commons} library) runs into numerical issues for large sample counts and small confidences, due to the inverse Beta-distribution (as well as the the alternative F-distribution) lacking a closed form.
Similarly, our (naive) implementation of \texttt{ValueTest} sometimes exhibits numerical instability in the computation.
We took care in our implementation to round conservatively, ensuring correctness.
We also implemented a variant using arbitrary precision rationals, however this seems too slow for quick evaluation.

We complement the additional data with some further discussion:
Firstly, we comment on the discussion of asymptotical convergence in \cref{subsec:union_bound}.
For the exponentially spaced methods (suffix \texttt{-Exp}), we clearly see the gaps of increasing length (note the fact that the x-axis is log-scale), with the estimates stagnating. 
The restrained spending of exponentially spaced grids often yields quite good estimates at the individual grid points, as it \enquote{concentrates} a lot of confidence on these points. 
Nonetheless, they are (eventually) outperformed by several confidence sequences such as \texttt{Bern-Bet}, \texttt{Bern-Stitch}, and \texttt{ValueTest}.
When comparing polynomially and exponential gridding, recall that \cref{lemma:poly_dominance} only considers asymptotic worst case, while we investigate a finite horizon on a fixed set of distributions.
Hence, the exponential methods outperforming the sequential ones is not a contradiction to this lemma.
Indeed, since the polynomial grid is much denser (gap-less), a significant penality is to be expected initially.

Secondly, we compare the different confidence sequences:
As for fixed-time confidence intervals (see~\cite{WatO}), the sequences based on Hoeffding's inequality are overly conservative. 
Among the others, the differences are less pronounced, but still \texttt{Bern-Bet}, \texttt{Bern-Stitch}, and \texttt{ValueTest} perform well across all shapes and successor counts.
In particular, in almost all plots, our novel \texttt{ValueTest} obtains the best precision for $10^5$ samples, often dominating before that already.

\newcommand{\fullplot}[1]{
    \singleplot{#1}{Hoeffding Union-Exp}{Hoeff-Exp}
    \singleplot{#1}{Bernstein Union-Exp}{Bern-Exp}
    \singleplot{#1}{CP Union-Exp}{CP-Exp}
    \singleplot{#1}{Hoeffding Union-Sq}{Hoeff-Sq}
    \singleplot{#1}{Bernstein Union-Sq}{Bern-Sq}
    \singleplot{#1}{CP Union-Sq}{CP-Sq}
    \singleplot{#1}{Hoeffding Stitched}{Hoeff-Stitch}
    \singleplot{#1}{Bernstein Stitched}{Bern-Stitch}
    \singleplot{#1}{Hoeffding Betting}{Hoeff-Bet}
    \singleplot{#1}{Bernstein Betting}{Bern-Bet}
    \singleplot{#1}{Maximum Likelihood Test}{MLETest}
    \singleplot{#1}{Beta Test}{BetaTest}
    \singleplot{#1}{Value Test}{ValueTest}
    \singleplot[dash pattern=on 0.2pt off .8pt]{#1}{Point CP}{Point CP}
}

\newcommand{\legendplot}[2][]{
    \addplot+[#1] [draw=none, mark=none] coordinates {(0,0)};
    \addlegendentry{\texttt{#2}}
}

\newcommand{\fulllegend}{
    \begin{tikzpicture}
    \begin{axis}[
            legend style={
                draw=none,
                row sep=2pt,
                inner sep=0pt
            },
            hide axis,
            legend columns=4,
            legend image post style={line width=1pt},
            EvalList
    ]
    \legendplot{Hoeff-Exp}
    \legendplot{Bern-Exp}
    \legendplot{CP-Exp}
    \legendplot{Hoeff-Sq}
    \legendplot{Bern-Sq}
    \legendplot{CP-Sq}
    \legendplot{Hoeff-Stitch}
    \legendplot{Bern-Stitch}
    \legendplot{Hoeff-Bet}
    \legendplot{Bern-Bet}
    \legendplot{MLETest}
    \legendplot{BetaTest}
    \legendplot{ValueTest}
    \legendplot[dash pattern=on 0.2pt off .8pt]{Point CP}
    \end{axis}
    \end{tikzpicture}
}
\newcommand{\fullplotpicture}[2]{
    \begin{tikzpicture}
    \begin{axis}[
            width=.5\textwidth,
    		height=.4\textheight,
            every axis legend/.code={\let\addlegendentry\relax},
            sampleplot,
		      xlabel=#1,
            xmode=log,ymode=log,
            yticklabels={},
            EvalList
    	]
        \fullplot{#2}
    \end{axis}
    \end{tikzpicture}
}

\begin{figure}[p]
    \centering
    \fulllegend
    \fullplotpicture{Uniform / 2 successors}{Uniform_2_0.01}
    \fullplotpicture{Uniform / 5 successors}{Uniform_5_0.01}

    \vspace{.5cm}
    
    \fullplotpicture{Uniform / 10 successors}{Uniform_10_0.01}
    \fullplotpicture{Uniform / 50 successors}{Uniform_50_0.01}
    \caption{
        Bounds on uniform distributions.
    }
    \label{fig:app:uniform}
\end{figure}

\begin{figure}[p]
    \centering
    \fulllegend
    \fullplotpicture{Linear / 2 successors}{Linear_2_0.01}
    \fullplotpicture{Linear / 5 successors}{Linear_5_0.01}

    \vspace{.5cm}
    
    \fullplotpicture{Linear / 10 successors}{Linear_10_0.01}
    \fullplotpicture{Linear / 50 successors}{Linear_50_0.01}
    \caption{
        Bounds on linear distributions.
    }
    \label{fig:app:linear}
\end{figure}

\begin{figure}[p]
    \centering
    \fulllegend
    \fullplotpicture{Squared / 2 successors}{Squared_2_0.01}
    \fullplotpicture{Squared / 5 successors}{Squared_5_0.01}

    \vspace{.5cm}
    
    \fullplotpicture{Squared / 10 successors}{Squared_10_0.01}
    \fullplotpicture{Squared / 50 successors}{Squared_50_0.01}
    \caption{
        Bounds on squared distributions.
    }
    \label{fig:app:squared}
\end{figure}

\begin{figure}[p]
    \centering
    \fulllegend
    \fullplotpicture{One-vs-All / 2 successors}{1-vs-all_2_0.01}
    \fullplotpicture{One-vs-All / 5 successors}{1-vs-all_5_0.01}

    \vspace{.5cm}
    
    \fullplotpicture{One-vs-All / 10 successors}{1-vs-all_10_0.01}
    \fullplotpicture{One-vs-All / 50 successors}{1-vs-all_50_0.01}
    \caption{
        Bounds on one-vs-all distributions.
    }
    \label{fig:app:oneVall}
\end{figure}
\clearpage

\subsection{Online MDP-SMC} \label{app:data:mdp-smc}
In this section, we provide further data on our MDP-SMC evaluation.
\cref{tab:mdp_smc_models} lists all models and properties together with basic data about them.
\cref{tab:app:mdp_smc_data} then provides the averaged sample counts for every benchmark and method we consider, i.e.\ the full data used to generate the aggregated scores in \cref{tab:mdp_smc_data}.

\begin{table}[t]
    \centering
    \sisetup{
      detect-all
    }
    \setlength{\tabcolsep}{3pt}
    \caption{
        Description of all model / property combinations we consider, together with basic information about the model (number of states $|S|$, number of actions $|A|$, and average number of successors in each action Avg $|\Delta|$).
        Finally, we list the (arbitrarily chosen) precision and confidence values.
    }
    \scriptsize
    \begin{tabular}{>{\ttfamily}c>{\ttfamily}c>{\ttfamily}cS[table-format=6.0]S[table-format=6.0]S[table-format=1.2,round-mode=places,round-precision=2]S[table-format=1.2]S[table-format=1.2]}
        \normalfont Model & \normalfont Constants & \normalfont Property & $|S|$ & $|A|$ & Avg $|\Delta|$ & $\varepsilon$ & $\gamma$ \\
        \midrule
        bigmec & N=1000 & prob\_max  & 2003 & 4004 & 1.00 & 0.05 & 0.99 \\
        consensus-mec & N=2,K=2  & disagree & 2720 & 6680 & 1.4988 & 0.3 & 0.9 \\
        consensus & N=2,K=1 & c2 & 144 & 208 & 1.2115 & 0.2 & 0.9 \\
        csma & N=2,K=2 & all\_before\_max & 1038 & 1053 & 1.2165 & 0.1 & 0.9 \\
        evade-mdp & N=3,slippery=0.7 & prob\_max & 613 & 1545 & 2.92427 & 0.2 & 0.9 \\
        firewire\_dl & delay=3,deadline=200 & deadline & 14824 & 16671 & 1.0561 & 0.05 & 0.9 \\
        manymecs & N=20 & prob\_max & 62 & 122 & 1.49180 & 0.05 & 0.9 \\
        mer & n=1,x=0.1 & prob\_max & 7526 & 17773 & 1.0185 & 0.05 & 0.9 \\
        pacman & MAXSTEPS=7 & crash & 1272 & 1444 & 1.0311634 & 0.05 & 0.9 \\
        random-grid & N=10,pstay=0.3,pover=0.3 & prob\_max & 1000 & 4943 & 2.6665 & 0.2 & 0.9 \\
        sensor.1 & & max\_success & 462 & 1068 & 1.10018 & 0.05 & 0.99 \\
        wlan.2 & COL=2 & collisions & 28598 & 29661 & 1.04248 & 0.7 & 0.9 \\
        wlan4\_collide & COL=4,TIME\_MAX=10,k=4 & collide\_max & 345120 & 355727 & 1.06862565 & 0.1 & 0.9 \\
        wlan\_dl.0 & deadline=80 & deadline & 189703 & 254964 & 1.309220 & 0.9 & 0.9 \\
        zeroconf & reset=0,N=1000,K=1 & correct\_max & 31954 & 56230 & 1.2661 & 0.05 & 0.99 \\
    \end{tabular}
    \label{tab:mdp_smc_models}
\end{table}

\begin{sidewaystable}[p]
    \centering
    \setlength{\tabcolsep}{3pt}
    \caption{
        Number of samples required (averaged over ten runs) of each inference approach on all considered models.
        For readability, we give the number of samples divided by $100$, i.e.\ a cell value of $6$ means that between $600$ and $699$ samples were required.
    }
    \label{tab:app:mdp_smc_data}
    \sisetup{
        table-parse-only = true,
        table-alignment = right
    }
    \scriptsize
\newcommand{\rot}[1]{\rotatebox[origin=l]{90}{\texttt{#1}}}
\begin{tabular}{>{\ttfamily}rSSSSSSSSSSSSSSS}
            &   {\rot{bigmec}} & {\rot{consensus-mec}}   &   {\rot{consensus}} &   {\rot{csma}} & {\rot{evade-mdp}}   & {\rot{firewire\_dl}}   &   {\rot{manymecs}} & {\rot{mer}}   &   {\rot{pacman}} & {\rot{random-grid}}   & {\rot{sensor}}   &   {\rot{wlan}} & {\rot{wlan4\_collide}}   & {\rot{wlan\_dl}}   &   {\rot{zeroconf}}  \\
\midrule
 Bern-Bet         &              149 & 6971                    &                 333 &            297 & 5211                & 2673                  &               8011 & 2230          &              825 & 1760                  &             2377 &             57 & 3597                    & 541               &                 11 \\
 Bern-Exp         &              176 & 17980                   &                 757 &            949 & 12983               & 4443                  &               8474 & 4763          &             1094 & 5319                  &             4814 &            194 & 12420                   & 1751              &                 42 \\
 Bern-Sq          &              275 & 20364                   &                1001 &           1144 & 13962               & 6385                  &              13674 & 6618          &             1351 & 6051                  &             6388 &            209 & 12544                   & 1668              &                 41 \\
 Bern-Stitch      &               98 & 9440                    &                 387 &            406 & 6194                & 2037                  &               5246 & 2320          &              494 & 2672                  &             2559 &            100 & 6182                    & 905               &                 18 \\
 BetaTest         &               69 & 4244                    &                 186 &            198 & 13481               & 1233                  &               3782 & 1456          &              525 & 3062                  &             1368 &             28 & 6878                    & 412               &                 32 \\
 CAV19            &              400 & T/O                     &                4409 &           3897 & 100073              & T/O                   &              19407 & T/O           &              668 & T/O                   &          1938086 &          39310 & T/O                     & T/O               &                452 \\
 CP-Exp           &              167 & 6510                    &                 319 &            350 & 4961                & 2439                  &               8110 & 2529          &              805 & 1428                  &             1819 &             36 & 3355                    & 220               &                 15 \\
 CP-Sq            &              185 & 8149                    &                 462 &            488 & 5571                & 4066                  &               9311 & 3514          &              961 & 1845                  &             2816 &             37 & 3381                    & 195               &                 18 \\
 Global-Bern-Exp  &              151 & T/O                     &                4688 &            762 & 120512              & 72085                 &               9219 & T/O           &              851 & T/O                   &           841198 &            823 & 467417                  & T/O               &                 17 \\
 Global-Bern-Sq   &              152 & 585007                  &                1326 &            685 & 49189               & 5107                  &               7414 & 108419        &              643 & T/O                   &            12694 &            487 & 16831                   & T/O               &                 23 \\
 Global-CP-Exp    &              158 & T/O                     &                1031 &            358 & T/O                 & 42594                 &               7353 & T/O           &              423 & T/O                   &           868689 &            796 & 164039                  & T/O               &                  7 \\
 Global-CP-Sq     &               91 & 941457                  &                 743 &            273 & 44658               & 3120                  &               4653 & T/O           &              340 & T/O                   &             7988 &            285 & 6030                    & T/O               &                  8 \\
 Global-Hoeff-Exp &              151 & T/O                     &                1433 &            646 & 850852              & 58978                 &               6111 & T/O           &              509 & T/O                   &           722519 &            400 & 244134                  & T/O               &                 65 \\
 Global-Hoeff-Sq  &              100 & 824125                  &                 830 &            517 & 37892               & 3591                  &               6551 & 37064         &              476 & T/O                   &             9266 &            240 & 8407                    & T/O               &                 69 \\
 Hoeff-Bet        &              736 & 32321                   &                1692 &           1981 & 62667               & 14231                 &              40225 & 28405         &             4907 & 9351                  &            27207 &             85 & 15207                   & 668               &                702 \\
 Hoeff-Exp        &              168 & 8250                    &                 455 &            500 & 11799               & 3120                  &               8235 & 5142          &              829 & 2327                  &             4169 &             53 & 4880                    & 396               &                151 \\
 Hoeff-Sq         &              209 & 9597                    &                 583 &            741 & 13222               & 4524                  &              10561 & 6966          &             1233 & 2752                  &             5887 &             50 & 4895                    & 350               &                130 \\
 Hoeff-Stitch     &              119 & 7025                    &                 359 &            466 & 9034                & 2304                  &               5954 & 3909          &              667 & 1980                  &             3693 &             48 & 4056                    & 332               &                102 \\
 MLETest          &               65 & 4217                    &                 190 &            200 & 13431               & 1309                  &               3782 & 1395          &              533 & 3088                  &             1385 &             28 & 6872                    & 415               &                 35 \\
 ValueTest        &               76 & 4785                    &                 220 &            213 & 3173                & 1393                  &               4006 & 1564          &              468 & T/O                   &             1384 &             30 & 2811                    & 227               &                 12 \\
\end{tabular}
\end{sidewaystable}

\end{document}